\documentclass{article}

\usepackage[preprint]{neurips_2026}

\usepackage[utf8]{inputenc} 
\usepackage[T1]{fontenc}    
\usepackage{hyperref}       
\usepackage{url}            
\usepackage{booktabs}       
\usepackage{amsfonts}       
\usepackage{nicefrac}       
\usepackage{microtype}      
\usepackage{xcolor}         
\usepackage{mymacro}

\title{Differentially Private Sampling from Distributions \\ via Wasserstein Projection}

\author{%
  Shokichi Takakura \\
  LY Corporation \\
  \texttt{stakakur@lycorp.co.jp} \\
  \And
  Seng Pei Liew \\
  LY Corporation \\
  \texttt{sliew@lycorp.co.jp} \\
  \AND
  Satoshi Hasegawa \\
  LY Corporation \\
  \texttt{satoshi.hasegawa@lycorp.co.jp} 
}

\begin{document}

\maketitle

\begin{abstract}
  In this paper, we study the problem of sampling from a distribution under the constraint of differential privacy (DP).
  Prior works measure the utility of DP sampling with density ratio-based measures such as KL divergence.
  However, such formulations suffer from two key limitations: 1) they fail to capture the geometric structure of the support, and 2) they are not applicable when the supports of the distributions differ.
  To deal with these issues, we develop a novel framework for DP sampling with Wasserstein distance as the utility measure.
  In this formulation, we propose \emph{Wasserstein Projection Mechanism (WPM)}, a minimax optimal mechanism based on Wasserstein projection.
  Furthermore, we develop efficient algorithms for computing the proposed mechanisms approximately and provide convergence guarantees.
\end{abstract}

\section{Introduction}

Differential privacy (DP)~\citep{dwork2006calibrating,dwork2014algorithmic} provides a rigorous framework for quantifying and ensuring privacy in data analysis and machine learning.
Especially, local differential privacy (LDP)~\citep{kasiviswanathan2011can} is a strong notion of privacy that does not require a trusted server and thus has been widely adopted in practice~\citep{erlingsson2014rappor,apple2017dp}.

However, most of the prior work on LDP assumes that each user holds only a single data point,
which is unrealistic since modern devices often collect and store a large amount of data~\citep{husain2020local}.
Several works~\citep{levy2021learning,ghazi2021user} consider the user-level privacy by assuming that each user holds a dataset of the same size generated from an underlying distribution.
Although this scenario is more realistic, the requirement of equal dataset size across users is still restrictive.

To mitigate this issue, a line of work~\citep{husain2020local,park2024exactly,zamanlooy2025locally,ghoukasian2025locally} has treated each user's dataset as a probability distribution
and proposed differentially private mechanisms, which take a distribution as input and release a single sample.
We refer to this problem as \emph{private sampling}.
Some of these works have derived optimal mechanisms in terms of worst-case $f$-divergence between the input and output distributions
for various settings~\citep{park2024exactly,zamanlooy2025locally,ghoukasian2025locally}.

While these works have provided valuable insights into the design of private sampling mechanisms,
most of them rely on density ratio-based measures such as KL divergence to quantify the utility of the mechanism.
However, such formulations suffer from two key limitations:
\begin{itemize}
	\item \emph{Support Mismatch}: Metrics based on density ratios are not meaningful when the supports of the distributions differ.
	\item \emph{Geometric Ignorance}: They do not capture the geometric structure of the underlying space (support), which can be crucial for downstream applications.
\end{itemize}
For example, location data is often represented as points in a metric space~\citep{chatzikokolakis2015constructing},
and in recommendation systems and natural language processing,
items and words are typically embedded in a latent space where distances capture semantic similarity~\citep{koren2009matrix,mikolov2013efficient},
making the geometry of the space crucial for evaluating utility.
Moreover, support mismatch arises naturally in these settings:
geographic locations may be aggregated to representative points~\citep{chatzikokolakis2015constructing},
a recommender may sample from a broader catalog beyond observed items~\citep{ricci2010introduction},
and language models often restrict or modify the candidate set (e.g., via truncation or special tokens)~\citep{vinod2025invisibleink}.

To address these limitations, we propose a novel framework for private sampling that optimizes
the worst-case \textit{Wasserstein distance}~\citep{villani2009optimal} between the privatized distribution of the mechanism and the target distribution.
Since Wasserstein distance captures the geometric structure of the support and is properly defined even when the supports of the distributions differ,
our framework provides a more flexible way to design private sampling mechanisms.
Furthermore, we develop a \emph{Wasserstein Projection Mechanism (WPM)} based on Wasserstein projection,
which is shown to be minimax optimal in terms of worst-case Wasserstein distance.
Since computing the exact projection can be computationally expensive,
we propose a practical algorithm for computing the approximate projection using entropic regularization and alternating projections.
We provide linear convergence guarantees and approximation error bounds for our algorithm.
Finally, we demonstrate the effectiveness of our proposed method through numerical experiments on synthetic and real-world datasets.

Our contributions are as follows:
\begin{itemize}
	\item To mitigate the limitations of density ratio-based formulations, we propose a novel framework for designing private sampling mechanisms by utilizing Wasserstein distance as the utility measure.
	\item We develop \emph{Wasserstein Projection Mechanism} (WPM), a minimax optimal mechanism in terms of worst-case Wasserstein distance.
	\item We propose practical algorithms for computing the optimal mechanism using entropic regularized Wasserstein distance and alternating projections.
	      By exploiting the special structure of the problem, we derive a linear convergence guarantee for our algorithm.
\end{itemize}

\subsection{Related Work}
\paragraph{Statistics Estimation in LDP}
A rich literature studies estimating statistics from data privatized under local differential privacy~\citep{asi2022optimal},
including frequency/histogram and mean estimation.
Foundationally, \citet{duchi2013local} derived minimax rates for several estimation tasks under LDP,
and \citet{kairouz2014extremal} characterized extremal LDP mechanisms.
In contrast, our focus is LDP \emph{sampling} mechanisms that control distributional distortion (here, in Wasserstein distance) rather than estimating a fixed statistic.
This can be viewed as a generalization of the classical estimation problem, where the input distribution is a Dirac measure at a single point.
See Appendix~\ref{app:related-work} for a more detailed discussion.

\paragraph{Wasserstein Distance in DP}
Several works have utilized the notion of Wasserstein distance in the context of differential privacy~\citep{feldman2024instance,rodriguez2025learning,gu2025differentially}.
For instance, \citet{le2019differentially,rakotomamonjy2021differentially} proposed randomized mechanisms to compute the (sliced) Wasserstein distance privately.
\citet{feldman2024instance} utilized Wasserstein distance as a utility measure for private density estimation.
In contrast to our work, which considers privacy leakage of a \emph{sample} from the distribution, in their scenario, the estimated distribution itself is the output of the mechanism.
Therefore, while these works have utilized similar terminology,
their focus is completely different from ours.

\subsection{Notation}
Let $\R_{+}$ ($\R_{++}$) denote the set of non-negative (positive) real numbers,
and $[k] = \{1, 2, \dots, k\}$ for $k \in \N$.
For $x \in \mathcal{X}$, we denote by $\delta_x$ the Dirac measure at $x$, i.e., $\delta_x(A) = 1$ if $x \in A$ and $\delta_x(A) = 0$ otherwise.
For $a \in \R$, we denote by $(a)_+$ the positive part of $a$, i.e., $(a)_+ = \max\{a, 0\}$.

\section{Preliminaries}
\subsection{Differentially Private Sampling}
Differential privacy is a mathematically rigorous framework for quantifying the privacy guarantees of algorithms that operate on sensitive data.
Let $\mathcal{X}$ be a measurable space equipped with a $\sigma$-algebra $\mathcal{F}$, and
let $\mathcal{P}(\mathcal{X})$ be the set of all probability measures on $\mathcal{X}$.
In the local model of differential privacy (LDP),
a client has an input distribution $\mu \in \mathcal{P}(\mathcal{X})$ which depends on sensitive data.
Therefore, sampling directly from $\mu$ can potentially leak sensitive information about the data.
To ensure privacy, a client samples from a transformed distribution $\mathcal{M}[\mu]$,
where $\mathcal{M}$ is a functional mapping $\mathcal{P}(\mathcal{X})$ to $\mathcal{P}(\mathcal{X})$.
We require that $\mathcal{M}$ satisfy the constraints of differential privacy, which ensures that a sample from the distribution $\mathcal{M}[\mu]$ does not reveal too much information about the input distribution $\mu$
while preserving as much utility as possible.
Formally, we have the following definitions of LDP.
\begin{definition}[Local Differential Privacy (LDP)]
	A mechanism $\mathcal{M}$ is said to satisfy $\varepsilon$-differential privacy for some $\varepsilon > 0$,
	if for all pairs of input distributions $\mu, \mu' \in \mathcal{P}(\mathcal{X})$
	and for all measurable subsets $S \subseteq \mathcal{X}$, we have:
	\begin{align*}
		\mathcal{M}[\mu](S) \leq e^{\varepsilon} \mathcal{M}[\mu'](S).
	\end{align*}
\end{definition}
Conventionally, $\varepsilon$-DP is defined for randomized mechanisms and here,
we identify a functional $\mathcal{M}: \mathcal{P}(\mathcal{X}) \to \mathcal{P}(\mathcal{X})$ with a randomized mechanism that samples from $\mathcal{M}[\mu]$.

\subsection{KL Projection Mechanism}
Here, we briefly review the $f$-divergence based mechanism proposed by~\citet{husain2020local,park2024exactly}, which we refer to as KL projection mechanism (KPM),
for private sampling under LDP.
For a convex function $f: \R_{+} \to \R$ with $f(1) = 0$, the $f$-divergence between two probability measures $\mu$ and $\nu$ is defined as
\begin{align*}
	D_f(\mu \| \nu) = \int_{\mathcal{X}} f\left(\frac{d\mu}{d\nu}\right) d\nu,
\end{align*}
where $\frac{d\mu}{d\nu}$ is the Radon-Nikodym derivative of $\mu$ with respect to $\nu$.
Common examples of $f$-divergences include the Kullback-Leibler (KL) divergence and $\chi^2$-divergence.

In the discrete setting $\mathcal{X}=[k]$ for some $k \in \N$,
the KL projection mechanism~\citep{park2024exactly} is defined as follows:
\begin{align*}
	\mathcal{M}^{\kl}[\mu](x) = \max\ab(\frac{1}{r_\mu} \mu(x), \frac{1}{e^\varepsilon + k - 1}),
\end{align*}
where $r_\mu > 0$ is a normalization constant so that $\mathcal{M}^{\kl}[\mu]$ is a probability distribution.
This is minimax optimal in terms of worst-case $f$-divergence $\sup_\mu D_f(\mu \| \mathcal{M}[\mu])$ for any $f$-divergence~\citep{park2024exactly}.
Furthermore, this can be regarded as a projection of $\mu$ with respect to KL divergence onto the LDP polytope with base measure $m(x) = \frac{e^{\varepsilon/2}}{e^{\varepsilon} + k - 1}$ defined as follows:
\begin{definition}[LDP Polytope]
	For a privacy parameter $\varepsilon$, the LDP polytope $Q_{m, \varepsilon}$ is defined as
	\begin{align}
		Q_{m, \varepsilon} = \{\nu \in \mathcal{P}(\mathcal{X}) : e^{-\varepsilon / 2} m(A) \leq \nu(A) \leq e^{\varepsilon / 2} m(A) \text{ for all measurable } A \subseteq \mathcal{X} \}, \label{eq:ldp-polytope}
	\end{align}
	where $m$ is a base measure on $\mathcal{X}$. Note that $m$ is not necessarily a probability measure.
\end{definition}
In the following, we only consider base measures $m$ such that $Q_{m,\varepsilon} \neq \emptyset$,
i.e., $e^{-\varepsilon/2}\, m(\mathcal{X}) \leq 1 \leq e^{\varepsilon/2}\, m(\mathcal{X})$.
It is not straightforward to extend this mechanism to continuous settings because $f$-divergences can be infinite when $\mu$ is not absolutely continuous with respect to $\nu$.
Prior works deal with this issue by restricting the input space of the mechanism. See~\citet{park2024exactly} for details.

While $f$-divergences are widely used in the literature to measure the distance between probability distributions,
they have several limitations.
First, $f$-divergences are not meaningful when the supports of the distributions differ.
For instance, in the case where we can only sample from a restricted output space $\mathcal{V} \subset \mathcal{X}$,
$\mu$ may not be absolutely continuous with respect to $\mathcal{M}[\mu]$ and thus, the Radon-Nikodym derivative $\frac{d\mu}{d\mathcal{M}[\mu]}$ does not exist
\footnote{Total variation distance can be defined even when the supports differ but discrepancies outside the common support mainly capture support mismatch and are not substantively meaningful.}.
Second, they do not capture the geometric structure of the underlying space $\mathcal{X}$.
Let us consider the case where the input distribution $\mu$ is a Dirac measure at a point $x \in \mathcal{X}$, i.e., $\mu = \delta_x$.
In this case, KPM assigns a positive mass $\frac{1}{e^\varepsilon + k - 1}$ to all points except $x$ including points \emph{far} from $x$ in a certain metric, which can lead to a large distortion.

\section{Proposed Method: Wasserstein Projection Mechanism}
To address these limitations of KPM, we propose a novel framework for designing private sampling mechanisms by utilizing Wasserstein distance as the utility measure.
Then, we develop a Wasserstein Projection Mechanism (WPM) based on Wasserstein projection, which is shown to be minimax optimal in terms of worst-case Wasserstein distance.

\subsection{Wasserstein Distance}
In this paper, we assume that the underlying space $\mathcal{X}$ is equipped with a metric $d: \mathcal{X} \times \mathcal{X} \to \R_{+}$.
For instance, if $\mathcal{X}$ is a subset of $\R^d$, we can use the Euclidean distance as the metric.
In such a case, the Wasserstein distance is a natural choice for measuring the distance between probability distributions on $\mathcal{X}$, as it captures the geometric structure of the space.
For any $\mu \in \mathcal{P}(\mathcal{X})$ and $\nu \in \mathcal{P}(\mathcal{X})$, the Wasserstein distance of order $p$ is defined as
\begin{align*}
	W_p(\nu, \mu) = \left( \inf_{\pi \in \Pi(\nu, \mu)} \int_{\mathcal{X} \times \mathcal{X}} d(x, y)^p \, d\pi(x, y) \right)^{1/p},
\end{align*}
where $\Pi(\nu, \mu)$ is the set of all couplings of $\nu$ and $\mu$, i.e., the set of all probability measures on $\mathcal{X} \times \mathcal{X}$ with marginals $\mu$ and $\nu$.
For simplicity, we assume that $\mathcal{X}$ is a compact Polish space, which ensures that the Wasserstein distance is properly defined and finite for all probability measures supported on $\mathcal{X}$.
This is in contrast to density ratio-based divergences (e.g., KL), which can be infinite when the supports differ.

\subsection{Generalized Problem Formulation}
In this paper, we generalize the problem formulation of prior works by
allowing the supports of the input and output distributions to differ.
Specifically, let $\mathcal{V} \subseteq \mathcal{X}$ be a measurable subset of $\mathcal{X}$ representing the output space of the mechanism $\mathcal{M}$.
Then, we aim to design a mechanism $\mathcal{M}:\mathcal{P}(\mathcal{X}) \to \mathcal{P}(\mathcal{V})$ that satisfies $\varepsilon$-LDP
and minimizes the Wasserstein distance between $\mathcal{M}[\mu]$ and $\mu$.
In particular, we consider the following worst-case cost:
\begin{align*}
	U[\mathcal{M}] = \sup_{\mu \in \mathcal{P}(\mathcal{X})} W_p(\mathcal{M}[\mu], \mu).
\end{align*}

This general formulation allows us to consider a sampling mechanism from a restricted output space $\mathcal{V}$, which can be useful in practice when we want
to limit the range of the output distribution for computational or privacy reasons.
Note that this general formulation cannot be handled with density ratio-based formulations,
since $f$-divergences do not provide a meaningful distance when the supports of the distributions differ.

\subsection{Wasserstein Projection Mechanism}
To solve the above problem, we propose a \emph{Wasserstein Projection Mechanism (WPM)} defined as follows:
\begin{definition}[Wasserstein Projection Mechanism]
	Wasserstein Projection Mechanism with base measure $m$ and privacy parameter $\varepsilon$ is defined as
	\begin{align*}
		\mathcal{M}_{m, \varepsilon}[\mu] \in \arg\min_{\nu \in Q_{m, \varepsilon}} W_p(\nu, \mu),
	\end{align*}
	where $Q_{m, \varepsilon}$ is the LDP polytope defined as in Eq.~\eqref{eq:ldp-polytope}.
\end{definition}
Since $Q_{m, \varepsilon}$ is convex and closed, a minimizer exists under our assumptions.

This mechanism satisfies $\varepsilon$-LDP as shown in the following proposition:
\begin{proposition}\label{prop:projection-ldp}
	The Wasserstein Projection Mechanism $\mathcal{M}_{m, \varepsilon}$ satisfies $\varepsilon$-LDP.
\end{proposition}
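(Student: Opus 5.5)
The plan is to use only the fact that the output of $\mathcal{M}_{m,\varepsilon}$ always lies in the LDP polytope $Q_{m,\varepsilon}$, and never the optimality of the projection. Fix two inputs $\mu, \mu' \in \mathcal{P}(\mathcal{X})$ and set $\nu = \mathcal{M}_{m,\varepsilon}[\mu]$ and $\nu' = \mathcal{M}_{m,\varepsilon}[\mu']$. By definition, both are minimizers over $Q_{m,\varepsilon}$, so $\nu, \nu' \in Q_{m,\varepsilon}$. The standing assumption $e^{-\varepsilon/2} m(\mathcal{X}) \le 1 \le e^{\varepsilon/2} m(\mathcal{X})$ guarantees that $Q_{m,\varepsilon}$ is nonempty. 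Together with the existence of a minimizer asserted just before the proposition, this makes the mechanism well defined. Any measurable selection of the argmin works, since the argument uses only membership.

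Now take any measurable $S \subseteq \mathcal{X}$ and chain the two sides of the polytope constraint:
\begin{align*}
	\nu(S) \le e^{\varepsilon/2} m(S) = e^{\varepsilon} \cdot e^{-\varepsilon/2} m(S) \le e^{\varepsilon} \nu'(S).
\end{align*}
The first inequality is the upper constraint for $\nu$, and the last is the lower constraint for $\nu'$. This is exactly the inequality $\mathcal{M}[\mu](S) \le e^{\varepsilon} \mathcal{M}[\mu'](S)$ in the LDP definition. Since $\mu, \mu'$ and $S$ were arbitrary, $\varepsilon$-LDP follows.

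The only delicate point is the output space $\mathcal{V}$. If the mechanism is meant to map into $\mathcal{P}(\mathcal{V})$, then $m$ should be supported on $\mathcal{V}$, so that every element of $Q_{m,\varepsilon}$ is too. The privacy argument does not depend on this, because the inequality holds for all measurable $S$ regardless. No case analysis is needed: the whole proof rests on the symmetric $e^{\pm \varepsilon/2}$ sandwich in the definition of $Q_{m,\varepsilon}$.
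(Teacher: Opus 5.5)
Your proposal is correct and is essentially the paper's proof. Both use only that every output lies in $Q_{m,\varepsilon}$, and chain the upper constraint $\mathcal{M}_{m,\varepsilon}[\mu](A)\le e^{\varepsilon/2}m(A)$ with the lower constraint $\mathcal{M}_{m,\varepsilon}[\mu'](A)\ge e^{-\varepsilon/2}m(A)$, without ever invoking optimality of the projection.
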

See Appendix~\ref{app:proof-ldp} for the proof.

Furthermore, we can show that given any $\varepsilon$-LDP mechanism $\mathcal{M}$,
there exists a Wasserstein Projection Mechanism $\mathcal{M}_{m, \varepsilon}$ that has a uniformly better or equal utility across all input distributions $\mu$.
\begin{proposition}[Uniform Optimality of WPM]\label{thm:projection}
	For any $\varepsilon$-LDP mechanism $\mathcal{M}$, there exists a base measure $m$ such that for all $\mu \in \mathcal{P}(\mathcal{X})$,
	\begin{align*}
		W_p(\mathcal{M}[\mu],\mu) \geq W_p(\mathcal{M}_{m,\varepsilon}[\mu],\mu).
	\end{align*}
\end{proposition}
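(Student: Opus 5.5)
The plan is to show that the range of any $\varepsilon$-LDP mechanism sits inside a single LDP polytope $Q_{m,\varepsilon}$. Once that holds, projecting onto the polytope can only do at least as well as $\mathcal{M}$, for every input $\mu$.

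Fix an $\varepsilon$-LDP mechanism $\mathcal{M}:\mathcal{P}(\mathcal{X})\to\mathcal{P}(\mathcal{V})$ and define the base measure as a geometric-mean type envelope of the outputs. A natural first attempt is to set, for measurable $A$,
\begin{align*}
	\overline{m}(A) = \sup_{\mu} \mathcal{M}[\mu](A), \qquad \underline{m}(A) = \inf_{\mu} \mathcal{M}[\mu](A).
\end{align*}
By $\varepsilon$-LDP, $\overline{m}(A) \le e^{\varepsilon}\underline{m}(A)$ for every $A$. The difficulty is that $\overline{m}$ and $\underline{m}$ are only set functions and need not be measures. To obtain a genuine measure, I would fix a reference output, say $\nu_0=\mathcal{M}[\mu_0]$ for some $\mu_0$.

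With this reference, every $\nu=\mathcal{M}[\mu]$ is absolutely continuous with respect to $\nu_0$. Its density $g_\mu = d\nu/d\nu_0$ satisfies $e^{-\varepsilon}\le g_\mu\le e^{\varepsilon}$ $\nu_0$-a.e. A short argument with the sets $\{g_\mu > e^{\varepsilon}\}$ gives this; in fact, for any two outputs, $d\nu/d\nu' \in [e^{-\varepsilon},e^{\varepsilon}]$.

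Next define the essential upper and lower envelopes $\overline{g} = \operatorname{ess\,sup}_\mu g_\mu$ and $\underline{g} = \operatorname{ess\,inf}_\mu g_\mu$. These are taken in the lattice $L^\infty(\nu_0)$, which is well defined as a countable sup/inf because $L^1(\nu_0)$ is separable-enough or by the standard essential supremum of a family. The pairwise ratio bound then gives $\overline{g} \le e^{\varepsilon}\underline{g}$ a.e. The base measure is
\begin{align*}
	dm = \sqrt{\overline{g}\,\underline{g}}\, d\nu_0 .
\end{align*}
For every $\mu$ we have $g_\mu \le \overline{g} \le e^{\varepsilon/2}\sqrt{\overline{g}\,\underline{g}}$, and symmetrically $g_\mu \ge \underline{g} \ge e^{-\varepsilon/2}\sqrt{\overline{g}\,\underline{g}}$. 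Integrating over $A$ gives $e^{-\varepsilon/2} m(A)\le \mathcal{M}[\mu](A)\le e^{\varepsilon/2}m(A)$. Hence $\mathcal{M}[\mu]\in Q_{m,\varepsilon}$ for all $\mu$, and in particular $Q_{m,\varepsilon}\neq\emptyset$. Since $m$ is supported where $\nu_0$ is, $m$ lives on $\mathcal{V}$, so the projection outputs are also supported in $\mathcal{V}$.

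The conclusion is now immediate. For each $\mu$, $\mathcal{M}_{m,\varepsilon}[\mu]$ minimizes $W_p(\cdot,\mu)$ over $Q_{m,\varepsilon}$, and $\mathcal{M}[\mu]$ is a feasible point. Therefore $W_p(\mathcal{M}_{m,\varepsilon}[\mu],\mu)\le W_p(\mathcal{M}[\mu],\mu)$.

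The main obstacle is the measure-theoretic step. It requires making the envelopes $\overline{g},\underline{g}$ rigorous for an uncountable family and verifying the pairwise density bound $\overline{g}\le e^{\varepsilon}\underline{g}$. The essential supremum is attained along a countable subfamily $\{\mu_i\}$, and likewise the essential infimum along $\{\mu'_j\}$. The bound then reduces to the pointwise inequality $g_{\mu_i}\le e^{\varepsilon} g_{\mu'_j}$ a.e., which follows from LDP applied to the set $\{g_{\mu_i} > e^{\varepsilon}g_{\mu'_j}\}$.
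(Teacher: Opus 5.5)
Your proposal is correct and is essentially the paper's argument. The paper likewise shows (Lemma~\ref{thm:polytope}) that the range of $\mathcal{M}$ lies in $Q_{m,\varepsilon}$: it takes $\lambda=\mathcal{M}[\mu_0]$ as reference, forms the essential supremum and infimum of the densities $d\mathcal{M}[\mu]/d\lambda$ along a countable subfamily, and sets $dm=\sqrt{\overline{f}\,\underline{f}}\,d\lambda$. It then concludes because $\mathcal{M}[\mu]$ is feasible for the projection; your remark that $g_{\mu_0}\equiv 1$ forces $e^{-\varepsilon}\le\underline{g}\le\overline{g}\le e^{\varepsilon}$, so that $m$ is finite, is a detail the paper leaves implicit.
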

See Appendix~\ref{app:proof-projection} for the proof.
The key ingredient of this proof is the fact that $\mathcal{M}$ satisfies $\varepsilon$-LDP
is equivalent to the range of $\mathcal{M}$ being contained in LDP polytope $Q_{m,\varepsilon}$ for some base measure $m$.
While prior work~\citep{husain2020local,park2024exactly} utilizes the LDP polytope as a sufficient condition for satisfying LDP,
we show that this is also a necessary condition.

This theorem implies a strong optimality guarantee for the WPM $\mathcal{M}_{m,\varepsilon}$:
given any $\varepsilon$-LDP mechanism $\mathcal{M}$, we can construct a WPM $\mathcal{M}_{m,\varepsilon}$ that has a uniformly better or equal utility across all input distributions $\mu$.
In other words, as long as the Wasserstein distance is used as the utility, there is no point in using other mechanisms.

\subsection{Minimax Optimality and Optimal Base Measure}
In the following, we focus on the finite discrete case,
i.e., $\mathcal{X}=[k]$ for some $k \in \N$.
Without loss of generality, we assume that $\mathcal{V} = [k_v]$ for some $k_v \leq k$.
Due to the uniform optimality guarantee of the WPM,
WPM with the optimal base measure $m^*$ is minimax optimal in the sense that it minimizes the worst-case cost across all $\varepsilon$-LDP mechanisms.
\begin{theorem}[Minimax Optimality]\label{thm:minimax}
	There exists a base measure $m^*$ such that
	\begin{align*}
		U[\mathcal{M}] \geq U[\mathcal{M}_{m^*,\varepsilon}]
	\end{align*}
	for any $\varepsilon$-LDP mechanism $\mathcal{M}$.
\end{theorem}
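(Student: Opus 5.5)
The plan is to reduce the minimax statement to a finite-dimensional optimization over base measures, and then combine it with the uniform optimality result, Proposition~\ref{thm:projection}. Since $\mathcal{X}=[k]$ and outputs lie in $\mathcal{V}=[k_v]$, a base measure is a vector $m\in\R_+^{k_v}$. I would define the worst-case cost of the WPM as a function of this vector,
\[
	g(m) = U[\mathcal{M}_{m,\varepsilon}] = \sup_{\mu\in\mathcal{P}(\mathcal{X})} \min_{\nu\in Q_{m,\varepsilon}} W_p(\nu,\mu),
\]
on the feasible set $K=\{m\in\R_+^{k_v}: e^{-\varepsilon/2}m(\mathcal{V})\le 1\le e^{\varepsilon/2}m(\mathcal{V})\}$, i.e., the set of $m$ with $Q_{m,\varepsilon}\neq\emptyset$. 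The claim then follows once I show two things: that $g$ attains its infimum at some $m^*\in K$, and that $\inf_{m\in K} g(m)\le U[\mathcal{M}]$ for every $\varepsilon$-LDP mechanism $\mathcal{M}$.

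The second point is direct. Given any $\varepsilon$-LDP $\mathcal{M}$, Proposition~\ref{thm:projection} supplies a base measure $m$ such that $W_p(\mathcal{M}[\mu],\mu)\ge W_p(\mathcal{M}_{m,\varepsilon}[\mu],\mu)$ for every $\mu$. Since $Q_{m,\varepsilon}$ is nonempty for this $m$ (it contains the range of $\mathcal{M}$), we have $m\in K$. Taking the supremum over $\mu$ gives $U[\mathcal{M}]\ge g(m)\ge \inf_K g$. Note also that the WPM does not depend on which minimizer is chosen, because its cost $W_p(\mathcal{M}_{m,\varepsilon}[\mu],\mu)=\min_{\nu\in Q_{m,\varepsilon}}W_p(\nu,\mu)$ is determined by $m$ alone.

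The main obstacle is the first point, attainment of the infimum. Here $K$ is compact, being closed and bounded by $m(\mathcal{V})\le e^{\varepsilon/2}$, so it suffices to show that $g$ is lower semicontinuous on $K$. My approach is as follows.
\begin{itemize}
	\item The set-valued map $m\mapsto Q_{m,\varepsilon}$ has closed graph and compact values in the simplex. In the discrete case $Q_{m,\varepsilon}$ reduces to the pointwise constraints $e^{-\varepsilon/2}m(x)\le\nu(x)\le e^{\varepsilon/2}m(x)$, since these imply the constraints for all sets.
	\item $W_p$ is continuous on $\mathcal{P}(\mathcal{X})\times\mathcal{P}(\mathcal{V})$.
	\item Take $m_n\to m$ with $g(m_n)\to\liminf g$. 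Then $\min_{\nu\in Q_{m_n,\varepsilon}}W_p(\nu,\mu)$ is lower semicontinuous in $m$ for each fixed $\mu$: pick minimizers $\nu_n$ and extract a convergent subsequence, whose limit lies in $Q_{m,\varepsilon}$ by the closed graph property.
	\item A supremum over $\mu$ of lower semicontinuous functions is lower semicontinuous, so $g$ is l.s.c.
\end{itemize}
Weierstrass then yields a minimizer $m^*\in K$, and combining with the second point gives $U[\mathcal{M}]\ge g(m^*)=U[\mathcal{M}_{m^*,\varepsilon}]$.

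If lower semicontinuity proves delicate, a fallback is to minimize directly over mechanisms. Identify an $\varepsilon$-LDP mechanism with a map $\mu\mapsto\mathcal{M}[\mu]$ whose range lies in some $Q_{m,\varepsilon}$. Replace each mechanism by its WPM, which does no worse by Proposition~\ref{thm:projection}. Then take a minimizing sequence of base measures in the compact set $K$ and pass to a limit as above.
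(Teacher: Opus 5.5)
Your proof is correct. It shares the paper's first step: Proposition~\ref{thm:projection} gives $U[\mathcal{M}]\ge U[\mathcal{M}_{m,\varepsilon}]$ for a feasible $m$. Where it differs is in how you show that the infimum over base measures is attained.

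The paper goes through the reformulation of Theorem~\ref{thm:base-measure-minimax}. It uses the Dirac reduction (Proposition~\ref{prop:dirac}) and the fractional-knapsack formula to write $U[\mathcal{M}_{m,\varepsilon}]^p=\max_i\phi_i(m)$. It then appeals to the convexity of this function (Step 4) and the compactness of $\{m: 1/\beta\le\sum_j m_j\le 1/\alpha\}$.

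You instead prove lower semicontinuity of $m\mapsto U[\mathcal{M}_{m,\varepsilon}]$ directly, using the closed graph of $m\mapsto Q_{m,\varepsilon}$, continuity of $W_p$, and the fact that a supremum of l.s.c.\ functions is l.s.c. This needs neither the Dirac reduction nor the closed form.

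Your route supplies exactly the property Weierstrass requires. Convexity plus compactness alone does not guarantee attainment, since a convex function can jump up at the boundary of its domain (e.g.\ $h(0)=1$, $h(x)=x$ on $(0,1]$). So the paper's one-line argument implicitly relies on continuity of $\max_i\phi_i$. That continuity does hold, for instance from the bound $|f(m)-f(m')|\le\beta D_p\|m-m'\|_1$ derived in Appendix~\ref{app:mirror-descent-base-measure}.

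Conversely, the paper's route gives more than existence. It identifies $m^*$ as the minimizer of an explicit convex program, which is what makes the optimal base measure computable. Your argument is purely existential. The fallback you sketch is unnecessary, since the main argument already closes.
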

See Appendix~\ref{app:proof-minimax} for the proof.

The optimal base measure $m^*$ can be obtained by optimizing the worst-case cost $U[\mathcal{M}_{m, \varepsilon}]$.
In the case of $f$-divergences, the optimal base measure is uniform over the support of the target distribution
since they are agnostic to the geometry of the space $\mathcal{X}$.
On the other hand, the optimal base measure for WPM depends on the geometry of the space $\mathcal{X}$, and thus,
finding the optimal base measure is a non-trivial problem.

Since evaluating the worst-case cost $U[\mathcal{M}]$ involves 1) calculating the Wasserstein distance for each input distribution $\mu$ and 2) taking the supremum over all input distributions $\mu$,
it is computationally intractable to directly solve the above minimax optimization problem to find the optimal base measure.
To mitigate this issue, we show that $U[\mathcal{M}_{m, \varepsilon}]$ admits a tractable reformulation
and interestingly, it is convex in $m$, which allows us to efficiently find the optimal base measure via convex optimization techniques.

\begin{theorem}[Simpler Reformulation]\label{thm:base-measure-minimax}
	Let $C_{ij} = d(i, j)^p$ for $i\in[k], j\in[k_v]$, and $\alpha := e^{-\varepsilon/2}, \beta := e^{\varepsilon/2}$.
	For each $m\in\R_+^{k_v}~(1/\beta \leq \sum_{j=1}^{k_v} m_j \leq 1/\alpha)$ and $i \in [k]$, define
	\begin{align*}
		\phi_i(m)
		 & = \tau_i(m) + \sum_{j=1}^{k_v} (\alpha(C_{ij}-\tau_i(m))_+ - \beta(\tau_i(m)-C_{ij})_+) m_j,
	\end{align*}
	where $\tau_i(m) := \min\{t: \sum_{j=1}^{k_v} (\beta m_j \mathbf{1}_{\{C_{ij} \leq t\}} + \alpha m_j \mathbf{1}_{\{C_{ij}>t\}}) \ge 1\}$.
	Then, $f(m) := \max_{i\in[k]} \phi_i(m)$ is a convex function of $m$, and for any $\varepsilon$-LDP mechanism $\mathcal{M}$,
	we have
	\begin{align*}
		U[\mathcal{M}] \geq f(m^*)^{1/p} = U[\mathcal{M}_{m^*,\varepsilon}],
	\end{align*}
	where $m^* \in \arg\min_{m\in\R_+^{k_v}:1/\beta \leq \sum_j m_j \leq 1/\alpha} f(m)$.
\end{theorem}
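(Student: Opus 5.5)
The plan is to compute $U[\mathcal{M}_{m,\varepsilon}]^p$ in closed form as $f(m)$, show convexity of each $\phi_i$, and then combine with Proposition~\ref{thm:projection} and Theorem~\ref{thm:minimax}.

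\textbf{Step 1: the supremum is attained at Dirac inputs.} Fix a feasible $m$. For any $\mu=\sum_i \mu_i\delta_i$, and any $\nu_i \in Q_{m,\varepsilon}$, the mixture $\sum_i \mu_i \nu_i$ lies in $Q_{m,\varepsilon}$, since the polytope is convex. Moreover, $\sum_i\mu_i\,(\delta_i\otimes\nu_i)$ is a coupling of $\mu$ and this mixture. Hence
\begin{align*}
W_p^p(\mathcal{M}_{m,\varepsilon}[\mu],\mu)\le \sum_i \mu_i \min_{\nu\in Q_{m,\varepsilon}}W_p^p(\nu,\delta_i)\le \max_i \min_{\nu\in Q_{m,\varepsilon}} \sum_j C_{ij}\nu_j .
\end{align*}
Taking $\mu=\delta_i$ shows this bound is attained. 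Therefore
\begin{align*}
U[\mathcal{M}_{m,\varepsilon}]^p=\max_i \min_{\nu\in Q_{m,\varepsilon}}\langle C_{i\cdot},\nu\rangle .
\end{align*}
Here we use that $W_p^p(\nu,\delta_i)=\sum_j C_{ij}\nu_j$, because the only coupling with a Dirac marginal is the product coupling.

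\textbf{Step 2: solve the inner LP, giving $\phi_i(m)$.} In the discrete case, $Q_{m,\varepsilon}$ is the box $\alpha m_j\le \nu_j\le\beta m_j$ intersected with $\sum_j\nu_j=1$. Minimizing a linear function over this set is a fractional knapsack problem. The greedy solution starts every coordinate at its lower bound $\alpha m_j$ and then raises coordinates to $\beta m_j$ in increasing order of $C_{ij}$ until total mass one is reached. The threshold at which this happens is exactly $\tau_i(m)$.

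To obtain the closed form, I would use LP duality. With a multiplier $t$ for the equality constraint, the dual is
\begin{align*}
\max_t\; t+\sum_j\min\{\alpha m_j (C_{ij}-t),\ \beta m_j (C_{ij}-t)\}.
\end{align*}
This equals $\max_t\, \big[t+\sum_j m_j(\alpha(C_{ij}-t)_+-\beta(t-C_{ij})_+)\big]$. The objective is concave piecewise linear in $t$, with a supergradient in $t$ equal to $1-\sum_j(\beta m_j\mathbf 1_{C_{ij}\le t}+\alpha m_j\mathbf 1_{C_{ij}>t})$. It is therefore maximized at $t=\tau_i(m)$, which gives $\phi_i(m)$.

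Strong duality holds because the primal is feasible by the assumption $1/\beta\le\sum_j m_j\le 1/\alpha$. Thus $U[\mathcal{M}_{m,\varepsilon}]^p=f(m)$.

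\textbf{Step 3: convexity.} The dual representation expresses $\phi_i(m)$ as a supremum over $t$ of functions that are affine in $m$. A pointwise supremum of affine functions is convex, so each $\phi_i$ is convex. A maximum of convex functions is convex, so $f$ is convex.

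This is the cleanest route, and it is why I prefer to derive $\phi_i$ via the dual rather than via the greedy solution. The main obstacle is getting strong duality and the characterization of $\tau_i$ carefully right. One subtlety is that the objective is flat when ties $C_{ij}=t$ occur. Another is that $t$ ranges over all reals, including values below $\min_j C_{ij}$; I would handle these by checking the one-sided slopes of the concave objective.

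\textbf{Step 4: conclude.} Let $\mathcal{M}$ be any $\varepsilon$-LDP mechanism. By Proposition~\ref{thm:projection} there is a base measure $m$, which is necessarily feasible, with $W_p(\mathcal{M}[\mu],\mu)\ge W_p(\mathcal{M}_{m,\varepsilon}[\mu],\mu)$ for all $\mu$. Taking the supremum over $\mu$ gives
\begin{align*}
U[\mathcal{M}]\ge U[\mathcal{M}_{m,\varepsilon}]=f(m)^{1/p}\ge f(m^*)^{1/p}=U[\mathcal{M}_{m^*,\varepsilon}].
\end{align*}
The minimizer $m^*$ exists because $f$ is continuous (it is convex and finite) on the compact feasible set.

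If the base measure produced by Proposition~\ref{thm:projection} is supported on $\mathcal{X}$ rather than $\mathcal{V}$, one must additionally check that its mass outside $\mathcal{V}$ can be taken to be zero. This holds because $\mathcal{M}$ outputs distributions in $\mathcal{P}(\mathcal{V})$, so $m$ can be restricted to $\mathcal{V}$.
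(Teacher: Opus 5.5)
Your proposal is correct. Its skeleton matches the paper's: the Dirac reduction via the mixture coupling (Proposition~\ref{prop:dirac}), the fractional-knapsack inner problem, and the conclusion via Proposition~\ref{thm:projection}. The difference is in how you obtain the closed form and the convexity of $\phi_i$.

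The paper computes $\phi_i$ primally from the greedy solution. It proves convexity by a primal feasibility argument: if $q$ is feasible for $m$ and $q'$ is feasible for $m'$, then $\theta q+(1-\theta)q'$ is feasible for $\theta m+(1-\theta)m'$, so the LP value is convex in the box bounds. You instead dualize $\sum_j q_j=1$ and observe that the stated formula for $\phi_i$ is exactly the dual objective $g(t;m)=t+\sum_j m_j\bigl(\alpha(C_{ij}-t)_+-\beta(t-C_{ij})_+\bigr)$ at its maximizer $t=\tau_i(m)$. Convexity is then immediate, because $g(t;\cdot)$ is affine for each fixed $t$.

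The two routes buy different things. The paper's is more elementary: it needs no strong duality and no identification of the threshold, and it applies unchanged to $\min_{\nu\in Q_{m,\varepsilon}}W_p^p(\nu,\mu)$ for non-Dirac $\mu$. Yours delivers the closed form, the convexity, and the subgradient $\bigl(\alpha(C_{ij}-\tau_i(m))_+-\beta(\tau_i(m)-C_{ij})_+\bigr)_j$ in a single computation. That is precisely the dual the paper derives separately in Appendix~\ref{app:mirror-descent-base-measure} for mirror descent.

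Two small points.

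First, ``convex and finite, hence continuous'' does not justify that $m^*$ is attained. A finite convex function on a compact polytope can fail to be lower semicontinuous at the boundary; the paper's proof of Theorem~\ref{thm:minimax} makes the same leap. Your dual repairs this. For feasible $m$, the slope of $g(\cdot;m)$ is $1-\alpha\sum_j m_j\ge 0$ to the left of $\min_j C_{ij}$ and $1-\beta\sum_j m_j\le 0$ to the right of $\max_j C_{ij}$. So the supremum over $t$ can be taken over the fixed compact interval $[\min_j C_{ij},\max_j C_{ij}]$. Hence $\phi_i$ is continuous, and indeed $|\phi_i(m)-\phi_i(m')|\le\beta\max_j C_{ij}\,\|m-m'\|_1$.

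Second, in the boundary case $\alpha\sum_j m_j=1$, the set defining $\tau_i(m)$ is unbounded below, so its minimum does not exist. In that case every $t\le\min_j C_{ij}$ maximizes $g$, with value $\alpha\sum_j C_{ij}m_j$, so the formula holds with any such $t$. This is a wrinkle in the statement itself, and your one-sided-slope check handles it.
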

See Appendix~\ref{app:proof-base-measure-minimax} for the proof.
This reduction is based on the following key observations:
1) the worst-case utility is achieved by a Dirac measure at some point $i \in [k]$,
2) the optimal projection cost for a Dirac measure has a simple closed-form expression.
Once the costs $\{C_{ij}\}_{j=1}^{k_v}$ are sorted for each $i$, the function $\phi_i(m)$ can be evaluated in $O(k)$ time
and thus, $f(m)$ can be evaluated in $O(k^2)$ time.
In addition, $f(m)$ is a convex function of $m$, and thus, the optimal base measure can be obtained
via standard mirror descent algorithms.
From a standard argument in convex optimization, we can obtain an $\varepsilon_{\mathrm{opt}}$-accurate solution
in $O(\log k/\varepsilon_{\mathrm{opt}}^2)$ iterations.
Algorithmic details and convergence analysis are given in Appendix~\ref{app:mirror-descent-base-measure}.

\paragraph{Extension to continuous spaces}
While we focus on the finite discrete case for computational tractability,
our framework is not limited to discrete spaces and can be applied to continuous settings as well.
As a concrete example, we derive the optimal base measure when $\mathcal{X}$ is the $d$-dimensional unit sphere $\mathcal{S}^d$ equipped with the Euclidean distance
by exploiting the rotational symmetry of the sphere.
See  Proposition~\ref{prop:sphere} in Appendix~\ref{app:proof-sphere} for details.

\section{Practical Implementation}\label{sec:practical-implementation}
Given a base measure $m$ and privacy parameter $\varepsilon$, computing the Wasserstein projection $\mathcal{M}_{m, \varepsilon}[\mu]$
for a given input distribution $\mu$ can be formulated as the following Linear Program (LP):
\begin{align*}
	\min_{\nu \in \R_+^{k_v}, \, \pi \in \R_+^{k\times k_v}} \sum_{i=1}^{k}\sum_{j=1}^{k_v} C_{ij} \pi_{ij}  \text{ s.t. } & \sum_{j=1}^{k_v} \pi_{ij} = \mu_i \text{ for any } i\in [k], \quad \sum_{i=1}^{k} \pi_{ij} = \nu_j \text{ for any } j\in [k_v], \\
	                                                                                                                       & e^{-\varepsilon/2} m_j \leq \nu_j \leq e^{\varepsilon/2} m_j \text{ for any } j\in [k_v].
\end{align*}
where $\mathcal{X}=[k]$, $\mathcal{V}=[k_v]$, and $C_{ij}=d(x_i,v_j)^p$.
This LP can be reformulated as a minimum cost flow problem similarly to the optimal transport problem, and thus,
can be solved in $\tilde O(k^3)$ time~\citep{pele2009fast}. See Appendix~\ref{app:lp-projection} for details.
Thus, exactly solving the above problem is computationally expensive for large $k$ as in the case of optimal transport problems~\citep{cuturi2013sinkhorn}.

To overcome this issue, we propose to utilize entropic regularization to approximately compute $\mathcal{M}_{m, \varepsilon}[\mu]$ efficiently.
The entropically regularized optimal transport cost~\citep{cuturi2013sinkhorn} is defined as follows:
\begin{align*}
	\mathsf{OT}_\lambda(\nu, \mu) := \inf_{\pi \in \Pi(\nu, \mu)} \sum_{i,j} C_{ij}\pi_{ij} + \lambda \sum_{i,j} \pi_{ij}\log\pi_{ij},
\end{align*}
where $\lambda > 0$ is the regularization parameter.
The entropic regularization allows us to compute the optimal transport cost efficiently using the Sinkhorn algorithm~\citep{sinkhorn1967concerning,cuturi2013sinkhorn}, which is an iterative method for solving the regularized optimal transport problem.

On the other hand, our focus is computing the projection of a distribution $\mu$ onto the LDP polytope $Q_{m, \varepsilon}$ with respect to the Wasserstein distance.
If we replace the Wasserstein distance with the regularized transport cost, the approximate projection $\mathcal{M}_{m, \varepsilon}^{\lambda}[\mu]$ is formulated as:
\begin{align*}
	\mathcal{M}_{m, \varepsilon}^{\lambda}[\mu] := \arg\min_{\nu \in Q_{m, \varepsilon}} \mathsf{OT}_\lambda(\nu, \mu).
\end{align*}
Since $x\mapsto x^{1/p}$ is monotone on $\mathbb{R}_+$, this coincides with the exact Wasserstein projection when $\lambda=0$.
Note the range of $\mathcal{M}_{m, \varepsilon}^{\lambda}$ is still contained in $Q_{m, \varepsilon}$, and thus, $\mathcal{M}_{m, \varepsilon}^{\lambda}$ satisfies $\varepsilon$-LDP.

For a fixed base measure $m$, the next proposition quantifies the approximation error of the entropic objective relative to the exact Wasserstein projection.
\begin{proposition}[Entropic approximation error for fixed base measure]\label{prop:entropic-gap}
	Fix $m$ and $\varepsilon$ such that $Q_{m,\varepsilon}\neq\emptyset$.
	For $\mu\in\mathcal{P}([k])$, let
	\begin{align*}
		\nu^* & \in \arg\min_{\nu\in Q_{m,\varepsilon}} W_p(\nu,\mu),
		\qquad
		\nu^\lambda := \mathcal{M}_{m,\varepsilon}^{\lambda}[\mu].
	\end{align*}
	Then, we have
	\begin{align*}
		0 \le W_p(\nu^\lambda,\mu)-W_p(\nu^*,\mu) \le \bigl(2\lambda\log(k)\bigr)^{1/p}.
	\end{align*}
\end{proposition}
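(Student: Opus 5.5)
The plan is to sandwich the entropic objective between the unregularized transport cost and that cost plus a uniform additive error, and then transfer the resulting gap in $W_p^p$ to a gap in $W_p$.

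\textbf{Lower bound.} Since $\nu^\lambda \in Q_{m,\varepsilon}$ and $\nu^*$ minimizes $W_p(\cdot,\mu)$ over $Q_{m,\varepsilon}$, the inequality $W_p(\nu^\lambda,\mu)-W_p(\nu^*,\mu)\ge 0$ is immediate.

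\textbf{Entropy bounds.} Write $H(\pi) := -\sum_{i,j}\pi_{ij}\log\pi_{ij}$ for a coupling $\pi\in\Pi(\nu,\mu)$, so that $\mathsf{OT}_\lambda(\nu,\mu)=\inf_\pi\{\langle C,\pi\rangle - \lambda H(\pi)\}$. Two elementary facts are needed.
\begin{itemize}
\item Every $\pi\in\Pi(\nu,\mu)$ satisfies $H(\pi)\ge 0$, because the entries are probabilities.
\item Every $\pi\in\Pi(\nu,\mu)$ satisfies $H(\pi)\le \log(k k_v)\le 2\log k$, since $\pi$ is a probability vector on $k\times k_v$ points and $k_v\le k$.
\end{itemize}
Together these give, for every $\nu\in Q_{m,\varepsilon}$,
\begin{align*}
W_p^p(\nu,\mu) - 2\lambda\log k \;\le\; \mathsf{OT}_\lambda(\nu,\mu) \;\le\; W_p^p(\nu,\mu).
\end{align*}
The upper bound follows by evaluating the entropic objective at an optimal unregularized coupling and using $H\ge 0$. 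The lower bound follows because every coupling has $\langle C,\pi\rangle\ge W_p^p(\nu,\mu)$ and $-\lambda H(\pi)\ge -2\lambda\log k$.

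\textbf{Comparison of minimizers.} Chain these bounds using the optimality of $\nu^\lambda$ for $\mathsf{OT}_\lambda$:
\begin{align*}
W_p^p(\nu^\lambda,\mu) \le \mathsf{OT}_\lambda(\nu^\lambda,\mu)+2\lambda\log k \le \mathsf{OT}_\lambda(\nu^*,\mu)+2\lambda\log k \le W_p^p(\nu^*,\mu)+2\lambda\log k .
\end{align*}

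\textbf{From $p$-th powers to distances.} The one non-routine point is converting this additive gap in $W_p^p$ into a gap in $W_p$. For $p\ge1$ and $a,b\ge 0$, subadditivity of the concave map $t\mapsto t^{1/p}$ on $\R_+$ gives $(a+b)^{1/p}\le a^{1/p}+b^{1/p}$. Applying it with $a=W_p^p(\nu^*,\mu)$ and $b=2\lambda\log k$ yields
\begin{align*}
W_p(\nu^\lambda,\mu)\le W_p(\nu^*,\mu)+(2\lambda\log k)^{1/p},
\end{align*}
which is the claimed upper bound.

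If the proposition is meant to cover $p<1$, this subadditivity step fails, and the bound would have to be stated directly in terms of $W_p^p$.
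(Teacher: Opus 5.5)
Your proposal is correct and follows essentially the same route as the paper: the sandwich $\mathsf{OT}(\nu,\mu)-\lambda\log(kk_v)\le\mathsf{OT}_\lambda(\nu,\mu)\le\mathsf{OT}(\nu,\mu)$, the three-step chain through the optimality of $\nu^\lambda$, and subadditivity of $t\mapsto t^{1/p}$ for $p\ge 1$. Your explicit step $\log(kk_v)\le 2\log k$ (using $k_v\le k$) also fills in the last step that the paper leaves implicit, since its proof ends at $(\lambda\log(kk_v))^{1/p}$.
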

See Appendix~\ref{app:proof-entropic-gap} for the proof.
By choosing $\lambda$ sufficiently small, we can make the approximation error arbitrarily small.

Naively, the above optimization problem is a bi-level optimization problem, which can be computationally expensive to solve.
However, we can show that the above optimization problem can be reformulated as a KL projection onto
the intersection of two convex sets on the probability simplex and projection onto each set can be computed efficiently.
\begin{lemma}\label{lem:kl-projection}
	Let $\Delta_{k\times k_v} := \left\{\pi \in \R_+^{k\times k_v} : \sum_{i=1}^{k}\sum_{j=1}^{k_v} \pi_{ij} = 1\right\}$, and
	\begin{align*}
		R_\mu & := \{\pi \in \Delta_{k\times k_v} : \pi \1 = \mu\}, \quad
		R_{m,\varepsilon} := \{\pi \in \Delta_{k\times k_v} : \pi^\top \1 \in Q_{m,\varepsilon}\}.
	\end{align*}
	Assume $Q_{m,\varepsilon}$ is nonempty, equivalently
	$e^{-\varepsilon/2}\sum_j m_j \le 1 \le e^{\varepsilon/2}\sum_j m_j$.
	Then, we have
	\begin{align*}
		\pi_* & = \arg\min_{\pi \in R_\mu \cap R_{m, \varepsilon}} \mathrm{KL}(\pi \| K), \quad \mathcal{M}_{m, \varepsilon}^{\lambda}[\mu] = \pi_*^\top \1,
	\end{align*}
	where $K$ is the Gibbs kernel (depending on $\lambda$) defined as
	$
		K_{i, j} = \frac{1}{Z_\lambda} \exp\left(-\frac{C_{i, j}}{\lambda}\right),
	$
	and $Z_\lambda$ is the normalization constant.
	Furthermore, projection onto $R_\mu$ and $R_{m, \varepsilon}$ can be computed as follows:
	\begin{align*}
		\proj^{\kl}_{R_\mu}(\pi)_{ij} & = \frac{\mu_i}{\sum_j \pi_{ij}} \pi_{ij}, \quad \proj^{\kl}_{R_{m,\varepsilon}}(\pi)_{ij} = \frac{q_j^*}{\sum_i \pi_{ij}}\pi_{ij},
	\end{align*}
	where $s_j := \sum_i \pi_{ij}$ and
	$
		q^* = \proj^{\kl}_{Q_{m,\varepsilon}}(s) = \arg\min_{q \in Q_{m,\varepsilon}} \mathrm{KL}(q \| s).
	$
	Moreover, $q_j^* = \min\{\max\{e^{\theta}s_j, e^{-\varepsilon/2}m_j\}, e^{\varepsilon/2}m_j\}$,
	where $\theta$ is a scalar chosen so that $\sum_j q_j^* = 1$.
\end{lemma}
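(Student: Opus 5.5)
The plan has three parts: rewrite the bilevel problem as a single KL minimization, compute the two single-constraint projections directly, and handle the box-constrained marginal projection through its KKT conditions.

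\emph{Step 1: collapse the bilevel problem.} Write $\mathcal{M}^\lambda_{m,\varepsilon}[\mu]=\arg\min_{\nu\in Q_{m,\varepsilon}}\mathsf{OT}_\lambda(\nu,\mu)$ as a joint minimization
\begin{align*}
\min_{\nu\in Q_{m,\varepsilon}}\ \min_{\pi\in\Pi(\nu,\mu)}\ \sum_{i,j}C_{ij}\pi_{ij}+\lambda\sum_{i,j}\pi_{ij}\log\pi_{ij}.
\end{align*}
The feasible set of pairs $(\nu,\pi)$ projects bijectively onto $\{\pi\ge0:\pi\1=\mu,\ \pi^\top\1\in Q_{m,\varepsilon}\}$, since $\nu$ is determined by $\nu=\pi^\top\1$. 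Because $\mu$ is a probability vector, every such $\pi$ lies in $\Delta_{k\times k_v}$, so this set is exactly $R_\mu\cap R_{m,\varepsilon}$.

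On $\Delta_{k\times k_v}$ we have the identity
\begin{align*}
\lambda\,\mathrm{KL}(\pi\|K)=\sum_{ij}C_{ij}\pi_{ij}+\lambda\sum_{ij}\pi_{ij}\log\pi_{ij}+\lambda\log Z_\lambda,
\end{align*}
using $\sum\pi_{ij}=1$. The joint objective is therefore $\lambda\,\mathrm{KL}(\pi\|K)$ up to an additive constant. The KL divergence is strictly convex and the feasible set is convex, compact and nonempty (it contains $\mu\otimes q$ for any $q\in Q_{m,\varepsilon}$), so $\pi_*$ exists and is unique. Then $\mathcal{M}^\lambda_{m,\varepsilon}[\mu]=\pi_*^\top\1$.

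\emph{Step 2: projection onto $R_\mu$.} This constraint couples only entries within each row $i$. Minimize $\sum_j\pi'_{ij}\log(\pi'_{ij}/\pi_{ij})$ subject to $\sum_j\pi'_{ij}=\mu_i$. The Lagrange condition gives $\log(\pi'_{ij}/\pi_{ij})=\text{const}_i$, so $\pi'_{ij}=c_i\pi_{ij}$, and the normalization forces $c_i=\mu_i/\sum_j\pi_{ij}$. Rows with $\mu_i=0$ are set to zero.

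\emph{Step 3: projection onto $R_{m,\varepsilon}$.} Decompose any candidate as $\pi'_{ij}=q_j\,\rho_{ij}$, where $q=\pi'^\top\1$ and $\rho_{\cdot j}$ is a column-conditional distribution. Write $s_j=\sum_i\pi_{ij}$ and $\bar\pi_{\cdot j}=\pi_{\cdot j}/s_j$. The chain rule for KL gives
\begin{align*}
\mathrm{KL}(\pi'\|\pi)=\mathrm{KL}(q\|s)+\sum_j q_j\,\mathrm{KL}(\rho_{\cdot j}\|\bar\pi_{\cdot j}),
\end{align*}
with the unnormalized KL $\sum q\log(q/s)-q+s$ if $s$ does not sum to one. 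The constraint involves only $q$, so the optimum takes $\rho_{\cdot j}=\bar\pi_{\cdot j}$ and $q=q^*=\arg\min_{q\in Q_{m,\varepsilon}}\mathrm{KL}(q\|s)$. This yields $\pi'_{ij}=q^*_j\pi_{ij}/s_j$.

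\emph{Step 4: the form of $q^*$ (the main obstacle).} In the finite case, $Q_{m,\varepsilon}$ reduces to the box $\alpha m_j\le q_j\le\beta m_j$ intersected with the simplex. Indeed, the set-wise inequalities in \eqref{eq:ldp-polytope} follow from the pointwise ones by summation. The problem is then separable and strictly convex with a single equality constraint $\sum_j q_j=1$. The KKT conditions give
\begin{align*}
\log(q_j/s_j)+\theta_0+\eta_j^+-\eta_j^-=0,
\end{align*}
with complementary slackness on the box multipliers $\eta_j^\pm\ge0$. Hence $q_j$ equals $e^{\theta}s_j$ clipped to $[\alpha m_j,\beta m_j]$, where $\theta=-\theta_0$.

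The map $\theta\mapsto\sum_j\min\{\max\{e^\theta s_j,\alpha m_j\},\beta m_j\}$ is continuous and nondecreasing. It tends to $\sum_j\alpha m_j\le1$ as $\theta\to-\infty$ and to $\sum_j\beta m_j\ge1$ as $\theta\to+\infty$, with the limits reached exactly for indices where $s_j>0$. The intermediate value theorem then gives a $\theta$ with total mass one. Because the problem is convex, the KKT conditions are sufficient, so this $q$ is the minimizer.

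The delicate points are the edge cases: columns with $s_j=0$, where $q_j$ must be $\alpha m_j$ and the multiplier argument needs the convention $0\log0=0$, and the need to verify Slater-type feasibility. Both follow from the standing assumption $\alpha\sum_j m_j\le1\le\beta\sum_j m_j$.
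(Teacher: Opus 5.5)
Your proposal is correct and follows the paper's proof essentially step for step. It uses the identity $\lambda\,\mathrm{KL}(\pi\|K)=\sum_{ij}C_{ij}\pi_{ij}+\lambda\sum_{ij}\pi_{ij}\log\pi_{ij}+\lambda\log Z_\lambda$ on the unit-mass set, the Lagrange row rescaling, the chain-rule decomposition that reduces the column projection to $q^*=\arg\min_{q\in Q_{m,\varepsilon}}\mathrm{KL}(q\|s)$, and KKT plus the intermediate value theorem for the clipped form; you also justify existence and uniqueness of $\pi_*$, which the paper leaves implicit.

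The one slip is your closing edge-case remark. If $s_j=0$ and $m_j>0$, every $q\in Q_{m,\varepsilon}$ has $q_j\ge e^{-\varepsilon/2}m_j>0$, so $\mathrm{KL}(q\|s)=+\infty$. Neither the $0\log 0$ convention nor the assumption $e^{-\varepsilon/2}\sum_j m_j\le 1\le e^{\varepsilon/2}\sum_j m_j$ repairs this: the $\theta\to+\infty$ limit is then only $\sum_{s_j>0}e^{\varepsilon/2}m_j+\sum_{s_j=0}e^{-\varepsilon/2}m_j$, which can be below $1$. The correct fix, implicit in the paper as well, is to take $s\in\mathbb{R}^{k_v}_{++}$. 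This holds automatically in Algorithm~\ref{alg:projection} because $K$ has positive entries.
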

See Appendix~\ref{app:proof-kl-projection} for the proof. The projection $q^*$ onto $Q_{m,\varepsilon}$ can be computed efficiently using scalar bisection since $q_j^*$ is a monotone function of $\theta$ as discussed in~\citet{park2024exactly}.
For simplicity, we assume that $q^*$ can be computed exactly in the following analysis.

The above problem is a special case of Bregman projection onto the intersection of convex sets, and can be solved using the Dykstra's algorithm~\citep{benamou2015iterative}
based on alternating projections onto $R_\mu$ and $R_{m,\varepsilon}$.
In general, Dykstra's algorithm requires additional correction terms to ensure convergence to the optimal solution
and convergence rate cannot be easily characterized.
However, utilizing the special structure of the problem, we can prove that a vanilla iterative Bregman projection algorithm
without Dykstra's correction is sufficient to find the optimal solution, which significantly reduces the computational cost.
Furthermore, we can show that the algorithm converges linearly to the optimal solution, which is a stronger convergence guarantee than the general case.
We provide the details of the algorithm in Algorithm~\ref{alg:projection}.

\begin{algorithm}[ht]
	\caption{Projection onto LDP polytope with entropic regularization}
	\label{alg:projection}
	\begin{algorithmic}[1]
		\STATE \textbf{Input:} Target distribution $\mu$, base measure $m$, privacy parameter $\varepsilon$, regularization parameter $\lambda$, cost matrix $C$.
		\STATE \textbf{Output:} Projected distribution $\mathcal{M}_{m, \varepsilon}^{\lambda}[\mu]$.
		\STATE Initialize $v^{(0)} \in \mathbb{R}_{++}^{k_v}$.
		\STATE Compute the Gibbs kernel $K$ as $[K]_{i, j} = \frac{1}{Z_\lambda} \exp\left(-\frac{C_{i, j}}{\lambda}\right)$.
		\FOR{$t = 0, 1, 2, \dots$ until convergence}
		\STATE Update $u^{(t)} = \frac{\mu}{K v^{(t)}}$.
		\STATE Set $s^{(t)} = K^\top u^{(t)}$.
		\STATE Compute $q^{(t)} = \operatorname{Proj}^{\mathrm{KL}}_{Q_{m,\varepsilon}}(s^{(t)})$ via scalar bisection.
		\STATE Update $v^{(t+1)} = \frac{q^{(t)}}{s^{(t)}}$.
		\ENDFOR
		\STATE Return $\mathcal{M}_{m, \varepsilon}^{\lambda}[\mu] = q^{(t)}$.
	\end{algorithmic}
\end{algorithm}

To analyze the convergence of the above algorithm,
we define the composite mapping $T : \mathbb{R}^{k_v}_{++} \to \mathbb{R}^{k_v}_{++}$ by
$
	T(v) := \frac{\Pi\bigl(K^\top (\mu/(K v))\bigr)}{K^\top (\mu/(K v))},
$
where $\Pi(s) := \proj^{\kl}_{Q_{m,\varepsilon}}(s)$. Note that $v^{(t)} = T^t(v^{(0)})$ for all $t \ge 0$.
We equip $\mathbb{R}^{k_v}_{++}$ with the Hilbert metric defined by
$
	d_H(x,y) = \log \frac{\max_i (x_i/y_i)}{\min_i (x_i/y_i)}.
$

Then, we have the following convergence guarantee:
\begin{theorem}[Linear convergence of Algorithm~\ref{alg:projection}]\label{thm:convergence}
	Assume that $\mu_i > 0$ for all $i \in [k]$, $m_j > 0$ for all $j \in [k_v]$, and $K_{i,j} > 0$ for all $i \in [k], j \in [k_v]$.
	(If some $\mu_i$ or $m_j$ vanish, one may restrict $K$ to the rows and columns in $\operatorname{supp}(\mu) \times \operatorname{supp}(m)$.)
	Let $\tau(\cdot)$ denote the Birkhoff contraction coefficient~\citep{birkhoff1957extensions}
	\begin{align*}
		\tau(A) = \sup_{x,y \in \mathbb{R}^{k_v}_{++}} \frac{d_H(Ax,Ay)}{d_H(x,y)},
	\end{align*}
	which is strictly less than 1 for any positive matrix $A$, and $c := \tau(K^\top)\tau(K) \in (0,1)$.
	Then for any initial point $v^{(0)} \in \mathbb{R}^{k_v}_{++}$, the sequence $\{v^{(t)}\}_{t=0}^\infty$ generated
	by Algorithm~\ref{alg:projection} satisfies
	\begin{align*}
		d_H(v^{(t)},v^*) \le c^t \, d_H(v^{(0)},v^*),
	\end{align*}
	where $v^*$ is a fixed point of $T$ (unique up to positive scaling).
	Moreover, defining
	\begin{align*}
		s^{(t)} & := K^\top\!\left(\frac{\mu}{Kv^{(t)}}\right),\quad q^{(t)} := \Pi(s^{(t)}),\quad s^* := K^\top\!\left(\frac{\mu}{Kv^*}\right),\quad q^* := \Pi(s^*),
	\end{align*}
	we have $v^* = q^*/s^*$ (componentwise) and
	\begin{align*}
		\mathrm{KL}(q^{(t)}\|q^*)
		 & \le 2c^{t+1} d_H(v^{(0)},v^*).
	\end{align*}
\end{theorem}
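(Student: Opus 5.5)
We will view $T$ as a composition of four maps on $\mathbb{R}^{k_v}_{++}$ and bound the Hilbert-metric Lipschitz constant of each:
\[
v \mapsto Kv,\qquad w\mapsto \mu/w,\qquad u\mapsto K^\top u,\qquad s\mapsto \Pi(s)/s.
\]

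\emph{The linear maps.} By definition of the Birkhoff coefficient, $d_H(Kx,Ky)\le \tau(K)\,d_H(x,y)$, and similarly for $K^\top$.

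\emph{The componentwise inversion.} The map $w\mapsto \mu/w$ is an isometry for $d_H$. The ratios $(\mu_i/w_i)/(\mu_i/w'_i)=w'_i/w_i$ have the same max/min ratio as $w_i/w'_i$ with the roles of max and min swapped, and $\mu$ cancels.

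\emph{The nonlinear map $g(s):=\Pi(s)/s$.} This is the key step. By Lemma~\ref{lem:kl-projection},
\[
g(s)_j=\min\{\max\{e^{\theta(s)},\,\alpha m_j/s_j\},\,\beta m_j/s_j\}.
\]
We want to show that $g$ is nonexpansive in $d_H$. Compare $s$ and $s'$ and write $r_j=s'_j/s_j\in[a,b]$ with $a=\min_j r_j$ and $b=\max_j r_j$. Since $\Pi$ is invariant to positive scaling of $s$ (the KL projection onto a set of probability vectors depends only on the normalized $s$, which the $\theta$ absorbs), we may rescale $s'$ so that $\theta(s')=\theta(s)$. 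Then each $g(s')_j$ is the same clip function applied with bounds $\alpha m_j/s'_j=\alpha m_j/(s_j r_j)$ and $\beta m_j/s'_j=\beta m_j/(s_j r_j)$.

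Coordinatewise, $g(s')_j/g(s)_j\in[\min(1,1/r_j),\,\max(1,1/r_j)]$. This holds because clipping a common value to intervals that are scaled by a factor $1/r_j$ changes the result by a factor between $1$ and $1/r_j$. Hence
\[
d_H(g(s),g(s'))\le \log\max(1,1/a)-\log\min(1,1/b).
\]
This is at most $\log(b/a)$ provided $a\le 1\le b$. The remaining task is to justify that the rescaling making $\theta(s)=\theta(s')$ forces $a\le1\le b$.

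I expect this to be the main obstacle. The argument I would try is by contradiction. If all $r_j>1$ with equal $\theta$, then every coordinate $q'_j=\mathrm{clip}(e^\theta s_j r_j)$ is $\ge q_j$. Both vectors sum to $1$, so equality must hold everywhere. In that case every coordinate is clipped and $g$ agrees up to the factors above, which keeps the bound valid. Alternatively, one can choose the scaling of $s'$ directly so that $a\le 1\le b$ and then use monotonicity of $\theta$ in the scaling. If this route fails, a fallback is to show that $\log g$ is a $1$-Lipschitz function of $\log s$ in oscillation seminorm via the monotonicity and homogeneity properties of $\Pi$ (order-preserving and subhomogeneous), mirroring the standard argument for Sinkhorn.

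\emph{Composition and the fixed point.} Composing the four bounds gives
\[
d_H(T(v),T(v'))\le \tau(K^\top)\,\tau(K)\,d_H(v,v')=c\,d_H(v,v').
\]
So $T$ is a strict contraction on the projective space of rays, which is complete under $d_H$ (Birkhoff). Banach's theorem then gives a fixed point $v^*$, unique up to scaling, and iterating yields $d_H(v^{(t)},v^*)\le c^t d_H(v^{(0)},v^*)$. The identity $v^*=q^*/s^*$ is simply the fixed-point equation.

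\emph{The KL bound.} From $q^{(t)}=g(s^{(t)})\,s^{(t)}=v^{(t+1)}s^{(t)}$ and $q^*=v^*s^*$, we get
\[
\frac{q^{(t)}_j}{q^*_j}=\frac{v^{(t+1)}_j}{v^*_j}\cdot\frac{s^{(t)}_j}{s^*_j}.
\]
For two probability vectors, $\mathrm{KL}(q\|q^*)\le \log\max_j(q_j/q^*_j)\le d_H(q,q^*)$, since $\min_j q_j/q^*_j\le1$. Moreover $d_H$ of a product is at most the sum of the $d_H$'s. We have $d_H(v^{(t+1)},v^*)\le c^{t+1}d_H(v^{(0)},v^*)$, and the same composition bound gives $d_H(s^{(t)},s^*)\le \tau(K^\top)\tau(K)\,d_H(v^{(t)},v^*)\le c^{t+1}d_H(v^{(0)},v^*)$. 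Summing the two yields $2c^{t+1}d_H(v^{(0)},v^*)$.
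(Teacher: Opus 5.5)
Your proposal is correct and follows essentially the same route as the paper: the same decomposition of $T$, the same three-case argument for nonexpansiveness of $s\mapsto\Pi(s)/s$ (your normalization $\theta(s)=\theta(s')$ is the paper's shift $a(x)_i=\log x_i+\theta(x)$, and your cases $a\le 1\le b$, $a>1$, $b<1$ correspond to its three cases in Lemma~\ref{lem:ratio-nonexpansive}), and the same product bound $d_H(x\odot y,x'\odot y')\le d_H(x,x')+d_H(y,y')$ for the KL step. One small correction: the case $a>1$ is not a contradiction but a genuine fully clipped case, in which $q'=q$ forces $g(s')_j=g(s)_j/r_j$ and the bound holds with equality. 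Your write-up does in fact end up arguing this.
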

See Appendix~\ref{app:proof-convergence} for the proof.
Theorem~\ref{thm:convergence} guarantees linear convergence of both the iterates $\{v^{(t)}\}$ in Hilbert metric and the output distributions $\{q^{(t)}\}$ in KL divergence.
These results ensure the efficiency of Algorithm~\ref{alg:projection}.

\section{Numerical Experiments}
In this section, we validate our proposed methods through numerical experiments on synthetic data and the MovieLens dataset.
We also provide additional experiments on geographical datasets in Appendix~\ref{app:additional-experiments}.
\begin{figure}[t]
	\begin{minipage}{0.66\textwidth}
		\includegraphics[width=0.495\textwidth]{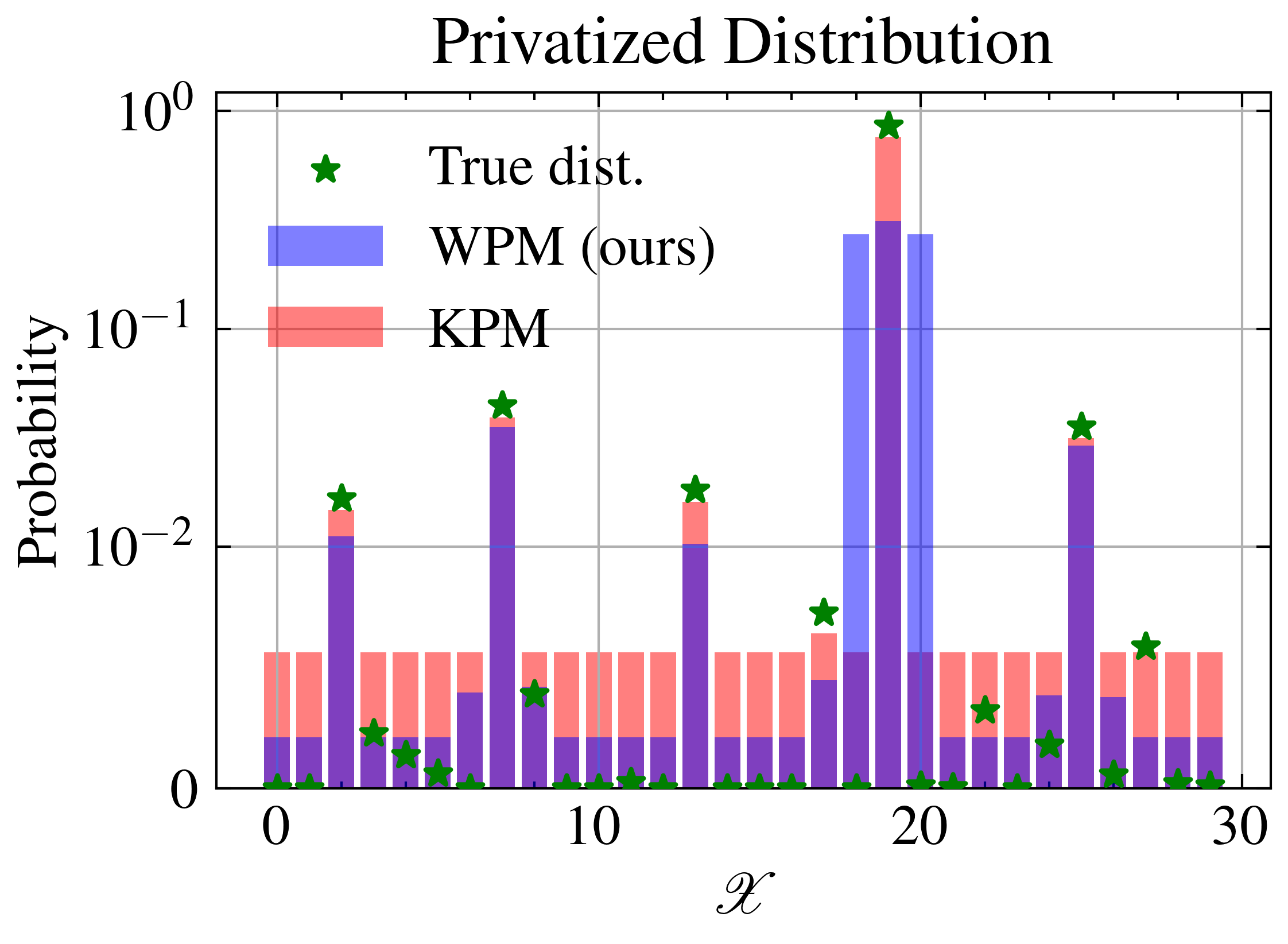}
		\includegraphics[width=0.495\textwidth]{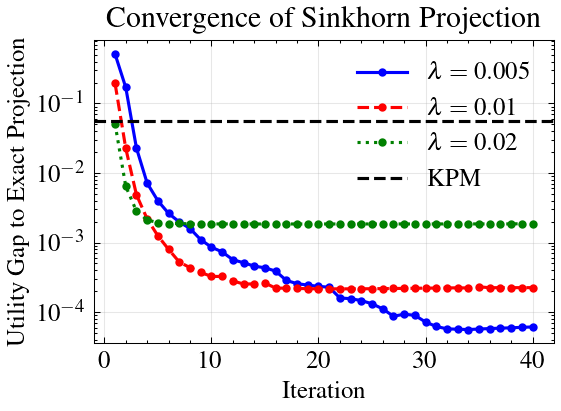}
		\caption{Results for synthetic experiments.
			Left: Transformed distribution with WPM and KPM.
			Right: Convergence speed of Algorithm~\ref{alg:projection}.}
		\label{fig:synthetic}
	\end{minipage}
	\begin{minipage}{0.33\textwidth}
		\includegraphics[width=0.99\textwidth]{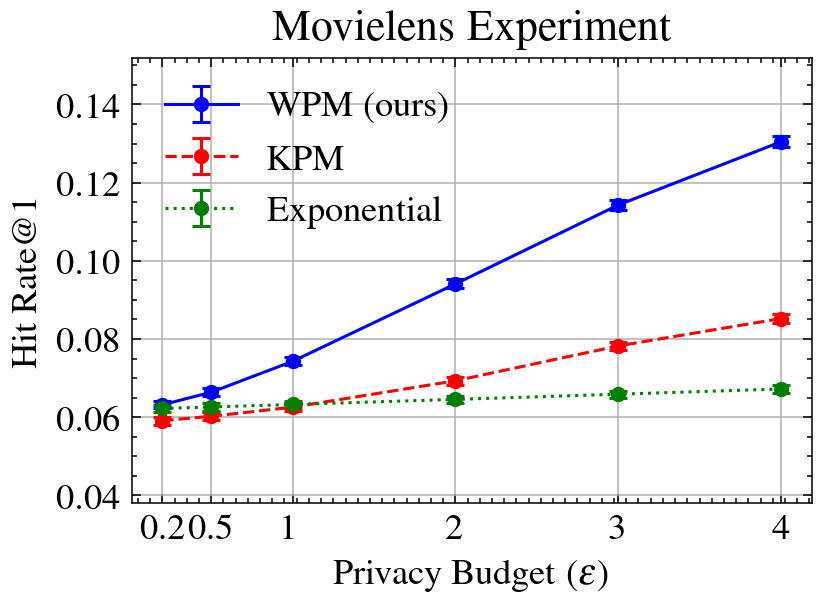}
		\caption{Results for MovieLens experiments. Our WPM outperforms the baselines.}
		\label{fig:movielens}
	\end{minipage}
\end{figure}

\subsection{Synthetic Data}
To see the difference of privatized distributions between the WPM (ours) and the KPM baseline,
we consider a simple synthetic example where $\mathcal{X}=\mathcal{V}= \{0, \dots, k-1\}$ is a discrete set of $k$ elements on a ring,
and the cost matrix $C$ is defined as $d(i, j) = \min\{|i-j|, k - |i-j|\}$.
Unless otherwise specified, we set $k = 30$, $\varepsilon = 5$, and $p = 2$.
We construct an input distribution $\mu$ by sampling from a Dirichlet distribution with concentration parameter $0.1$ and normalizing the sample to sum to 1.

\paragraph{WPM captures the underlying geometry}
Fig.~\ref{fig:synthetic} (Left) shows an example of the projected distributions of $\mu$ with WPM and KPM.
While KPM produces a uniformly spread distribution,
WPM captures the geometric structure and assigns high probability to regions near the true high-probability areas,
and low probability to regions that are not.

\paragraph{Sinkhorn projection converges rapidly}
Fig.~\ref{fig:synthetic} (Right) shows the convergence of Algorithm~\ref{alg:projection} in terms of the Wasserstein distance between the input and the output distributions.
We see that our proposed algorithm converges rapidly near the exact projection.
For large $\lambda$, the algorithm converges faster but the approximation error is larger, while for small $\lambda$, the algorithm converges slower but the approximation error is smaller, which is consistent with Proposition~\ref{prop:entropic-gap}.

\subsection{MovieLens Dataset}
Following the experimental setup of~\citet{zamanlooy2025locally},
we consider a movie recommendation scenario, where the goal is to privately recommend the next movie based on the user's past movie ratings.
While \citet{zamanlooy2025locally} consider the problem of genre recommendation,
we consider the more practical problem of movie recommendation, where the support (i.e., the set of candidate movies for recommendation)
is essentially different from the support of the input distribution (i.e., movies that the user has rated in the past).
In such scenarios, $f$-divergences are not suitable for measuring the distance between the input and output distributions,
while Wasserstein distance can still capture the distance between the distributions by utilizing the underlying geometry of the movie space.
As a baseline, we consider the $f$-divergence based method which first samples from the privatized genre distribution and then selects a movie uniformly at random from the selected genre,
which is a natural extension of the method proposed by~\citet{zamanlooy2025locally}.
In addition, we also consider the Exponential Mechanism~\citep{mcsherry2007mechanism} with a utility function defined as the negative expected distance $u(\mu, j) = -E_{i \sim \mu}[d(i, j)]$.

We use the MovieLens100k dataset~\citep{harper2015movielens} and divide movies into train and test sets.
We construct the input distribution so that it is proportional to the ratings of the movies the user has rated in the train set, and
sample a movie from the output distribution of the mechanism over the candidate movies in the test set.
As an underlying metric, we compute embeddings of the movies via matrix factorization, capturing user co-consumption patterns, and use cosine distance between the embeddings.
In this experiment, we regard the base measure as a hyperparameter and select the best base measure for each method from a set of uniform measures based on the validation performance.
We evaluate each method by computing Hit Rate@1, the probability that the output movie is included in the user’s test data with high ratings ($\ge 4$).
The error bars in the figures represent the standard error.

\paragraph{WPM works under support mismatch}
Fig.~\ref{fig:movielens} shows the performance of WPM and KPM in terms of Hit Rate@1.
We see that WPM significantly outperforms KPM across all values of $\varepsilon$, and the performance gap becomes larger as $\varepsilon$ increases.
This is because the WPM can handle the movie distribution directly even under support mismatch by utilizing the underlying geometry of the movie space,
while the KPM baseline method suffers from the support mismatch issue and thus, its performance degrades significantly.
Exponential Mechanism also performs worse, which may be due to the fact that it only depends on the mean of the input distribution, while WPM can utilize the full distributional information.



\section{Conclusion}\label{sec:conclusion}
In this paper, we proposed a novel framework for differentially private sampling from distributions based on Wasserstein geometry.
We provide the minimax optimal mechanism in terms of worst-case utility based on Wasserstein projection and we show that the optimal mechanism can be obtained by solving a convex optimization problem.
We also proposed a practical algorithm for computing approximate projections using entropic regularization and provided convergence guarantees.
Finally, we demonstrated the effectiveness of our proposed method through numerical experiments on synthetic and real-world datasets.

\paragraph{Limitations and Future Work}
Our main contribution lies in the theoretical framework and analysis of optimal private sampling mechanisms based on Wasserstein geometry.
Since our framework is quite general and can be applied to various settings, extensive empirical evaluation in specific applications is left for future work.
In addition, our focus in this paper is on the local model of differential privacy, and extending our framework to the centralized model of differential privacy is an interesting direction for future research.

\clearpage

\bibliographystyle{plainnat}
\bibliography{main}


\appendix

\section{Detailed Discussion of Related Work} \label{app:related-work}
In this section, we show that our framework can be viewed as a generalization of the classical estimation problem in DP.
Let us consider the matched-support setting $\mathcal X=\mathcal V$ where the user has a single data point $x \in \mathcal{X}$,
and the goal is to design a randomized mechanism $\mathcal{L}:\mathcal{X} \to \mathcal{X}$
that outputs a privatized version of $x$ while satisfying $\varepsilon$-LDP.
The utility of the mechanism is typically measured by the mean squared error $\mathbb{E}[\|x - \mathcal{L}(x)\|_2^2]$.

On the other hand, in our framework, let us consider the case where
the input distribution $\mu$ is a Dirac measure at $x$, i.e., $\mu = \delta_x$.
If we set $d(x, y) = \|x - y\|_2$ and $p=2$,
then the Wasserstein distance between $\mathcal{M}[\delta_x]$ and $\delta_x$ is given by
\begin{align*}
	W_2^2(\mathcal{M}[\delta_x], \delta_x) = \int_{\mathcal{X}} \|x - y\|_2^2 \, d\mathcal{M}[\delta_x](y) = \mathbb{E}[\|x - \mathcal{L}(x)\|_2^2].
\end{align*}
This matches the mean squared error of the mechanism $\mathcal{L}$ in the classical estimation problem.

Therefore, classical estimation under LDP can be seen as a special case of our framework where the input distribution is a Dirac measure.
Note that estimation under LDP typically focuses on estimating a mean across users, and thus it poses an unbiasedness constraint on the mechanism.
In such a case, the problem is different from our framework, which does not require unbiasedness.

\section{Auxiliary Results}
\begin{lemma}\label{thm:polytope}
	A mechanism $\mathcal{M}$ satisfies $\varepsilon$-LDP
	if and only if there exists a base measure $m$ such that for all $\mu \in \mathcal{P}(\mathcal{X})$, $\mathcal{M}[\mu] \in Q_{m, \varepsilon}$.
\end{lemma}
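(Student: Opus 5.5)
The proof has two directions.

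\emph{Sufficiency} (the easy direction). Suppose $\mathcal{M}[\mu]\in Q_{m,\varepsilon}$ for all $\mu$. Then for any $\mu,\mu'$ and any measurable $S$,
\begin{align*}
\mathcal{M}[\mu](S)\le e^{\varepsilon/2}m(S)=e^{\varepsilon}\,e^{-\varepsilon/2}m(S)\le e^{\varepsilon}\mathcal{M}[\mu'](S),
\end{align*}
which is exactly $\varepsilon$-LDP. This is also the argument behind Proposition~\ref{prop:projection-ldp}.

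\emph{Necessity.} Given an $\varepsilon$-LDP mechanism $\mathcal{M}$, I will build a base measure $m$ directly from the range of $\mathcal{M}$. For each measurable $A$, set
\begin{align*}
\overline{m}(A):=\sup_{\mu}\mathcal{M}[\mu](A),\qquad \underline{m}(A):=\inf_{\mu}\mathcal{M}[\mu](A).
\end{align*}
LDP gives $\overline{m}(A)\le e^{\varepsilon}\underline{m}(A)$ for every $A$. The natural candidate is a geometric mean, $m(A)=\sqrt{\overline{m}(A)\underline{m}(A)}$. Each $\mathcal{M}[\mu](A)$ lies in $[\underline{m}(A),\overline{m}(A)]$, and this interval is contained in $[e^{-\varepsilon/2}m(A),e^{\varepsilon/2}m(A)]$ exactly because $\overline{m}/\underline{m}\le e^{\varepsilon}$. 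The difficulty is that this $m$ is not additive, so it is not a measure.

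\emph{Constructing a genuine measure.} I would fix an arbitrary reference output $\nu_0=\mathcal{M}[\mu_0]$ and look for $m=c\,\nu_0$, or more flexibly $m=g\,\nu_0$ for some density $g$.

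First, every $\mathcal{M}[\mu]$ satisfies $e^{-\varepsilon}\nu_0\le \mathcal{M}[\mu]\le e^{\varepsilon}\nu_0$ setwise. Hence all outputs are mutually absolutely continuous with densities $h_\mu=d\mathcal{M}[\mu]/d\nu_0\in[e^{-\varepsilon},e^{\varepsilon}]$ $\nu_0$-a.e. I would justify this by Radon--Nikodym plus the standard fact that setwise inequalities give a.e.\ density bounds.

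Next, define pointwise $\overline h(x)=\operatorname{ess\,sup}_\mu h_\mu(x)$ and $\underline h(x)=\operatorname{ess\,inf}_\mu h_\mu(x)$. The pairwise LDP condition, applied to the sets $\{h_\mu>a\,h_{\mu'}\}$, gives $h_\mu\le e^{\varepsilon}h_{\mu'}$ a.e.\ for each pair. Then set $g=\sqrt{\overline h\,\underline h}$ and $m=g\,\nu_0$. With this choice, $e^{-\varepsilon/2}g\le h_\mu\le e^{\varepsilon/2}g$ a.e., and integrating over $A$ yields $\mathcal{M}[\mu]\in Q_{m,\varepsilon}$.

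\emph{Main obstacle.} The hard step is making the pointwise sup and inf over an uncountable family of densities measurable and a.e.-meaningful. Each pairwise bound holds only off a null set that depends on the pair, so these null sets cannot simply be unioned. I would handle this with the essential supremum of a family of measurable functions: there is a countable subfamily $\{\mu_n\}$ realizing $\operatorname{ess\,sup}$ and $\operatorname{ess\,inf}$. The pairwise inequalities then need to hold only across countably many pairs, plus one comparison with each fixed $\mu$, so the exceptional set is a countable union of null sets. That gives $\overline h\le e^{\varepsilon}\underline h$ a.e.

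Alternatively, in the finite discrete case the whole construction is elementary: take $m_j=\sqrt{\sup_\mu\mathcal{M}[\mu](j)\,\inf_\mu\mathcal{M}[\mu](j)}$ pointwise and sum over $A$.
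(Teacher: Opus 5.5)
Your proposal is correct and follows the paper's proof essentially step for step. The sufficiency direction is identical. For necessity, the paper also fixes a reference output $\lambda=\mathcal{M}[\mu_0]$ and takes Radon--Nikodym densities $f_\mu$. It then uses a countable subfamily to realize the essential supremum and infimum, obtains $\overline{f}\le e^{\varepsilon}\underline{f}$ a.e., and sets $dm=\sqrt{\overline{f}\,\underline{f}}\,d\lambda$. Your extra remarks (the two-sided bound $h_\mu\in[e^{-\varepsilon},e^{\varepsilon}]$ and the elementary discrete construction) are correct but not needed.
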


\begin{proof}
	\emph{If Part.}
	Suppose there exists a base measure $m$ such that for all $\mu \in \mathcal{P}(\mathcal{X})$, $\mathcal{M}[\mu] \in Q_{m, \varepsilon}$.
	Then, we have for all $\mu, \mu' \in \mathcal{P}(\mathcal{X})$ and for all measurable subsets $A \subseteq \mathcal{V}$:
	\begin{align*}
		\mathcal{M}[\mu](A)  & \leq e^{\varepsilon / 2} m(A)  \\
		\mathcal{M}[\mu'](A) & \geq e^{-\varepsilon / 2} m(A)
	\end{align*}
	and thus,
	\begin{align*}
		\mathcal{M}[\mu](A) \leq e^{\varepsilon} \mathcal{M}[\mu'](A).
	\end{align*}
	This shows that $\mathcal{M}$ satisfies $\varepsilon$-LDP.

	\emph{Only if Part.}
	Suppose $\mathcal{M}$ satisfies $\varepsilon$-LDP.
	Fix an arbitrary $\mu_0 \in \mathcal{P}(\mathcal{X})$ and define a reference measure $\lambda = \mathcal{M}[\mu_0]$.
	For any $\mu \in \mathcal{P}(\mathcal{X})$ and measurable $A \subseteq \mathcal{V}$,
	$\varepsilon$-LDP gives
	\begin{align*}
		\mathcal{M}[\mu](A) \leq e^{\varepsilon} \lambda(A).
	\end{align*}
	Hence $\mathcal{M}[\mu] \ll \lambda$, so by the Radon-Nikodym theorem
	$f_\mu := \frac{d\mathcal{M}[\mu]}{d\lambda}$ exists for each $\mu$.

	Moreover, for any $\mu,\mu' \in \mathcal{P}(\mathcal{X})$, $\varepsilon$-LDP implies
	$\mathcal{M}[\mu](A) \le e^{\varepsilon}\mathcal{M}[\mu'](A)$ for all measurable $A$.
	Taking its Radon-Nikodym derivative with respect to $\lambda$ yields
	\begin{align}
		f_\mu(x) \le e^{\varepsilon}f_{\mu'}(x) \quad \text{for } \lambda\text{-almost every }x. \label{eq:pointwise-inequality}
	\end{align}

	Define
	\begin{align*}
		\overline{f} := \operatorname{ess\,sup}_{\mu \in \mathcal{P}(\mathcal{X})} f_\mu,
		\qquad
		\underline{f} := \operatorname{ess\,inf}_{\mu \in \mathcal{P}(\mathcal{X})} f_\mu.
	\end{align*}
	Since $\lambda$ is a probability measure, especially a $\sigma$-finite measure,
	there is a countable set $\mathcal{C} \subset \mathcal{P}(\mathcal{X})$ such that
	\begin{align*}
		\overline{f}(x) = \sup_{\mu \in \mathcal{C}} f_\mu(x), \quad
		\underline{f}(x) = \inf_{\mu \in \mathcal{C}} f_\mu(x)
	\end{align*}
	for $\lambda$-almost every $x$.
	From Eq.~\eqref{eq:pointwise-inequality}, we have
	\begin{align*}
		\overline{f}(x) \leq e^{\varepsilon} \underline{f}(x)
	\end{align*}
	for $\lambda$-almost every $x$.

	We now construct the base measure $m$ by defining its density with respect to $\lambda$ as $dm(x) = \sqrt{\overline{f}(x)\underline{f}(x)} \, d\lambda(x)$.
	For any $\mu \in \mathcal{P}(\mathcal{X})$, we have $\underline{f}(x) \leq f_\mu(x) \leq \overline{f}(x)$ $\lambda$-almost everywhere.
	Thus, $\lambda$-almost everywhere,
	\begin{align*}
		f_\mu(x) & \leq \sqrt{\overline{f}(x) \cdot \overline{f}(x)} \leq \sqrt{e^{\varepsilon} \underline{f}(x) \cdot \overline{f}(x)} = e^{\varepsilon / 2} \sqrt{\overline{f}(x)\underline{f}(x)}      \\
		f_\mu(x) & \geq \sqrt{\underline{f}(x) \cdot \underline{f}(x)} \geq \sqrt{e^{-\varepsilon} \overline{f}(x) \cdot \underline{f}(x)} = e^{-\varepsilon / 2} \sqrt{\overline{f}(x)\underline{f}(x)}.
	\end{align*}

	Integrating these bounds over any measurable subset $A \subseteq \mathcal{V}$ with respect to $\lambda$, we obtain:
	\begin{align*}
		e^{-\varepsilon / 2} m(A) \leq \mathcal{M}[\mu](A) \leq e^{\varepsilon / 2} m(A).
	\end{align*}
	Therefore, $\mathcal{M}[\mu] \in Q_{m, \varepsilon}$ for all $\mu \in \mathcal{P}(\mathcal{X})$.
\end{proof}

\begin{proposition}\label{prop:dirac}
	For $x \in \mathcal{X}$, let $\delta_x$ be the Dirac measure at $x$.
	Then, if $\mathcal{M}$ is a Wasserstein projection, we have
	\begin{align*}
		\sup_{\mu \in \mathcal{P}(\mathcal{X})} W_p(\mathcal{M}[\mu], \mu) = \sup_{x \in \mathcal{X}} W_p(\mathcal{M}[\delta_x], \delta_x).
	\end{align*}
\end{proposition}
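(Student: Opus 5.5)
Here $\mathcal{M}=\mathcal{M}_{m,\varepsilon}$ denotes a Wasserstein projection onto $Q:=Q_{m,\varepsilon}$, so $W_p(\mathcal{M}[\mu],\mu)=\min_{\nu\in Q}W_p(\nu,\mu)=:D(\mu)$. The plan is to show that $D^p$ is convex in $\mu$. The supremum of a convex function over $\mathcal{P}(\mathcal{X})$ should then be attained, or approached, at extreme points, which are the Dirac measures. The inequality ``$\geq$'' is trivial because Diracs belong to $\mathcal{P}(\mathcal{X})$. It remains to show $D(\mu)\le\sup_x D(\delta_x)$ for every $\mu$.

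I will prove this directly rather than through an abstract extreme-point argument, which avoids compactness subtleties.

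Fix $\mu$ and $\eta>0$. For each $x$, choose $\nu_x\in Q$ with $W_p^p(\nu_x,\delta_x)\le D(\delta_x)^p+\eta$. Since $\delta_x$ is a Dirac, $W_p^p(\nu_x,\delta_x)=\int d(x,y)^p\,d\nu_x(y)$, because the only coupling is $\nu_x\otimes\delta_x$.

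Define the mixture $\bar\nu:=\int\nu_x\,d\mu(x)$, i.e.\ $\bar\nu(A)=\int\nu_x(A)\,d\mu(x)$. The constraints defining $Q$ are linear inequalities $e^{-\varepsilon/2}m(A)\le\nu(A)\le e^{\varepsilon/2}m(A)$ that hold setwise. Integrating them against $\mu$ shows $\bar\nu\in Q$.

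Next, build the coupling $\pi(dx,dy)=\mu(dx)\,\nu_x(dy)$ of $\mu$ and $\bar\nu$. Then
\[
D(\mu)^p\le W_p^p(\bar\nu,\mu)\le\int\!\!\int d(x,y)^p\,d\nu_x(y)\,d\mu(x)\le\sup_x D(\delta_x)^p+\eta .
\]
Letting $\eta\to0$ gives the claim.

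The main obstacle is measurability. The map $x\mapsto\nu_x$ must be a measurable kernel for $\bar\nu$ and $\pi$ to be well defined.

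\begin{itemize}
\item In the finite discrete case this is automatic. The mixture is a finite convex combination, and the minimizers exist by compactness, so one may even take $\eta=0$.
\item In the general compact Polish case, I would first try a measurable selection theorem. The minimizer set of $(x,\nu)\mapsto\int d(x,y)^p\,d\nu(y)$ over the weakly compact convex set $Q$ is a closed-valued correspondence. Kuratowski--Ryll-Nardzewski then gives a measurable selector.
\item Failing that, I would approximate $\mu$ weakly by finitely supported measures $\mu_n$. Apply the finite argument to each $\mu_n$, and then use continuity of $D$ in $W_p$. This continuity follows from $|D(\mu)-D(\mu')|\le W_p(\mu,\mu')$ by the triangle inequality, and $W_p$ metrizes weak convergence on a compact space.
\end{itemize}

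The second route avoids selection issues entirely, so I would likely present that one.
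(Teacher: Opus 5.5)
Your argument is correct and is essentially the paper's proof. You mix the projections of the Dirac masses against $\mu$, use convexity of $Q_{m,\varepsilon}$ to get $\bar\nu\in Q_{m,\varepsilon}$, and bound $W_p^p(\bar\nu,\mu)$ through the coupling $\int \nu_x\otimes\delta_x\,d\mu(x)$ by $\int W_p^p(\mathcal M[\delta_x],\delta_x)\,d\mu(x)\le\sup_x W_p^p(\mathcal M[\delta_x],\delta_x)$. Your one addition is handling measurability of $x\mapsto\nu_x$, which the paper silently assumes; reducing to finitely supported $\mu_n$ and using $|D(\mu)-D(\mu')|\le W_p(\mu,\mu')$ is valid and closes that point cleanly.
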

\begin{proof}
	Let $Q := Q_{m,\varepsilon}$ and assume that
	\begin{align*}
		\mathcal M[\mu] \in \arg\min_{\nu\in Q} W_p(\nu,\mu)
	\end{align*}
	for every $\mu\in\mathcal P(\mathcal X)$.

	The inequality
	\begin{align*}
		\sup_{x\in\mathcal X} W_p(\mathcal M[\delta_x],\delta_x)
		\leq
		\sup_{\mu\in\mathcal P(\mathcal X)} W_p(\mathcal M[\mu],\mu)
	\end{align*}
	is immediate since $\delta_x \in \mathcal P(\mathcal X)$.

	For the reverse inequality, fix $\mu\in\mathcal P(\mathcal X)$ and define
	\begin{align*}
		\nu_x := \mathcal M[\delta_x], \qquad
		\bar\nu := \int_{\mathcal X} \nu_x \, d\mu(x).
	\end{align*}
	Since $Q$ is convex, $\bar\nu \in Q$. By the projection property,
	\begin{align*}
		W_p^p(\mathcal M[\mu],\mu) \leq W_p^p(\bar\nu,\mu).
	\end{align*}

	Let
	\begin{align*}
		\pi_x(dy,dz)=\nu_x(dy)\delta_x(dz)
	\end{align*}
	and define
	\begin{align*}
		\pi := \int_{\mathcal X} \pi_x\, d\mu(x).
	\end{align*}
	Then $\pi$ is a coupling of $\bar\nu$ and $\mu$. Hence
	\begin{align*}
		W_p^p(\bar\nu,\mu) & \leq \int_{\mathcal X\times\mathcal X} d(y,z)^p\, d\pi(y,z)                     \\
		                   & = \int_{\mathcal X}\!\left(\int_{\mathcal V} d(y,x)^p\, d\nu_x(y)\right)d\mu(x) \\
		                   & = \int_{\mathcal X} W_p^p(\nu_x,\delta_x)\, d\mu(x)                             \\
		                   & = \int_{\mathcal X} W_p^p(\mathcal M[\delta_x],\delta_x)\, d\mu(x)              \\
		                   & \leq \sup_{x\in\mathcal X} W_p^p(\mathcal M[\delta_x],\delta_x).
	\end{align*}
	Therefore
	\begin{align*}
		W_p(\mathcal M[\mu],\mu) \leq \sup_{x\in\mathcal X} W_p(\mathcal M[\delta_x],\delta_x).
	\end{align*}
	Taking supremum over $\mu$ completes the proof.
\end{proof}

\section{Proof of Proposition~\ref{prop:projection-ldp}} \label{app:proof-ldp}
\begin{proof}
	Fix $\mu,\mu'\in\mathcal P(\mathcal X)$ and $A\subseteq\mathcal V$ measurable.
	By the definition of $Q_{m,\varepsilon}$,
	\begin{align*}
		\mathcal M_{m,\varepsilon}[\mu](A)  & \le \beta m(A)  \\
		\mathcal M_{m,\varepsilon}[\mu'](A) & \ge \alpha m(A)
	\end{align*}
	where $\alpha:=e^{-\varepsilon/2}$ and $\beta:=e^{\varepsilon/2}$.
	Hence
	\begin{align*}
		\mathcal M_{m,\varepsilon}[\mu](A) \le e^{\varepsilon}\mathcal M_{m,\varepsilon}[\mu'](A).
	\end{align*}
	This proves that $\mathcal M_{m,\varepsilon}$ satisfies $\varepsilon$-LDP.
\end{proof}

\section{Proof of Proposition~\ref{thm:projection}} \label{app:proof-projection}
\begin{proof}
	For any $\varepsilon$-LDP mechanism $\mathcal M$, by Lemma~\ref{thm:polytope}, there exists a base measure $m$ such that for all $\mu$,
	\begin{align*}
		\mathcal M[\mu] \in Q_{m,\varepsilon}.
	\end{align*}
	By the definition of $\mathcal M_{m,\varepsilon}$, $\mathcal M_{m,\varepsilon}[\mu]$ is the projection of $\mu$ onto $Q_{m,\varepsilon}$ with respect to $W_p$.
	From the property of projection, for all $\nu \in Q_{m,\varepsilon}$,
	\begin{align*}
		W_p(\mathcal M_{m,\varepsilon}[\mu],\mu) \le W_p(\nu,\mu).
	\end{align*}
	Taking $\nu = \mathcal M[\mu] \in Q_{m,\varepsilon}$ gives the desired result.
\end{proof}

\section{Proof of Theorem~\ref{thm:minimax}} \label{app:proof-minimax}
\begin{proof}
	By Theorem~\ref{thm:projection}, for any $\varepsilon$-LDP mechanism $\mathcal M$, there exists a base measure $m$ such that for all $\mu$,
	\begin{align*}
		W_p(\mathcal M[\mu],\mu) \ge W_p(\mathcal M_{m,\varepsilon}[\mu],\mu).
	\end{align*}
	Hence
	\begin{align*}
		\inf_{\varepsilon\text{-LDP }\mathcal M}\sup_{\mu} W_p(\mathcal M[\mu],\mu)
		 & \ge \inf_{m}\sup_{\mu} W_p(\mathcal M_{m,\varepsilon}[\mu],\mu) \\
		 & \ge \inf_{m}\max_i \phi_i(m)^{1/p}.
	\end{align*}
	The reverse inequality is immediate since $\mathcal M_{m,\varepsilon}$ is $\varepsilon$-LDP for every $m$.
	Thus, it suffices to show that the infimum over $m$ is attained by some $m^*$.

	By Step 4 in the proof of Theorem~\ref{thm:base-measure-minimax}, $m\mapsto\max_i \phi_i(m)$ is a convex function on $\R_+^{k_v}$.
	Moreover, the constraint set $\{m\in\R_+^{k_v}: 1/\beta \le \sum_j m_j \le 1/\alpha\}$ is a compact convex set.
	Therefore, the infimum is attained by some $m^*$.
\end{proof}

\section{Proof of Proposition~\ref{thm:base-measure-minimax}} \label{app:proof-base-measure-minimax}

\begin{proof}
	Throughout, let $\mathcal X=[k]$, $\mathcal V=[k_v]$, $\alpha:=e^{-\varepsilon/2}$, and $\beta:=e^{\varepsilon/2}$.

	\paragraph{Step 1: Coordinate form of $Q_{m,\varepsilon}$.}
	In the discrete case, for any nonnegative vector $m\in\R_+^{k_v}$ and any probability vector $\nu\in\mathcal P([k_v])$,
	\begin{align*}
		\nu\in Q_{m,\varepsilon}
		\iff
		\alpha m_i\le \nu_i\le \beta m_i \quad (\forall i\in[k_v]).
	\end{align*}
	The ``only if'' direction follows by taking $A=\{i\}$ in the definition of $Q_{m,\varepsilon}$.
	The converse follows by summing coordinate-wise inequalities over any $A\subseteq[k_v]$.

	\paragraph{Step 2: Reduction to Dirac inputs.}
	By Proposition~\ref{prop:dirac}, for projection mechanisms,
	\begin{align*}
		\sup_{\mu\in\mathcal P(\mathcal X)} W_p\bigl(\mathcal M_{m,\varepsilon}[\mu],\mu\bigr)
		=
		\max_{i\in[k]} W_p\bigl(\mathcal M_{m,\varepsilon}[\delta_i],\delta_i\bigr).
	\end{align*}
	Since $x\mapsto x^p$ is strictly increasing on $\R_+$,
	minimizing the left-hand side over $m$ is equivalent to minimizing
	\begin{align*}
		\max_{i\in[k]} W_p^p\bigl(\mathcal M_{m,\varepsilon}[\delta_i],\delta_i\bigr).
	\end{align*}

	\paragraph{Step 3: Inner optimization for each $i$.}
	Fix $i\in[k]$. For any $\nu\in\mathcal P([k_v])$, since one marginal is $\delta_i$, the only coupling of $(\nu,\delta_i)$ is
	\begin{align*}
		\pi(j,\ell)=\nu_j\mathbf 1_{\{\ell=i\}}.
	\end{align*}
	Therefore,
	\begin{align*}
		W_p^p(\nu,\delta_i)
		= \sum_{j=1}^{k_v} d(v_j,x_i)^p\,\nu_j
		= \sum_{j=1}^{k_v} C_{ij}\,\nu_j.
	\end{align*}
	Hence
	\begin{align*}
		W_p^p\bigl(\mathcal M_{m,\varepsilon}[\delta_i],\delta_i\bigr)
		=
		\min_{q\in\R^{k_v}}
		\left\{\sum_{j=1}^{k_v} C_{ij}q_j:
		\sum_{j=1}^{k_v} q_j=1,\ q_j\ge 0,\ \alpha m_j\le q_j\le \beta m_j\ (\forall j)\right\}.
	\end{align*}
	The above problem is a fractional knapsack problem: for each $i\in[k]$, sort indices by increasing $C_{ij}$ and greedily fill capacities $(\beta-\alpha)m_j$.
	Therefore, $\phi_i(m)$ can be expressed in the following closed form:
	\begin{align*}
		\phi_i(m)
		 & = \alpha \sum_{j: C_{ij} > \tau_i(m)} C_{ij} m_j + \beta \sum_{j: C_{ij} < \tau_i(m)} C_{ij} m_j               \\
		 & \quad + \tau_i(m)\left(1-\alpha\sum_{j: C_{ij} > \tau_i(m)} m_j - \beta\sum_{j: C_{ij} < \tau_i(m)} m_j\right) \\
		 & = \tau_i(m) + \sum_{j=1}^{k_v} (\alpha(C_{ij}-\tau_i(m))_+ - \beta(\tau_i(m)-C_{ij})_+) m_j,
	\end{align*}
	where $\tau_i(m)$ is defined as $\tau_i(m) := \min\{t: \sum_{j=1}^{k_v} (\beta m_j \mathbf{1}_{\{C_{ij} \leq t\}} + \alpha m_j \mathbf{1}_{\{C_{ij}>t\}}) \ge 1\}$.

	Combining Steps 2 and 3,
	\begin{align*}
		\inf_{m\in\R_+^{k_v}}\sup_{\mu\in\mathcal P(\mathcal X)} W_p\bigl(\mathcal M_{m,\varepsilon}[\mu],\mu\bigr)
		=
		\left(\inf_{m\in\R_+^{k_v}} \max_{i\in[k]} \phi_i(m)\right)^{1/p}.
	\end{align*}
	Therefore, minimizing worst-case utility is equivalent to
	\begin{align*}
		\min_{m\in\R_+^{k_v}} \max_{i\in[k]} \phi_i(m).
	\end{align*}

	\paragraph{Step 4: Convexity in the base measure.}
	Define
	\begin{align*}
		\mathcal Q(m)
		:=
		\left\{q\in\R_+^{k_v}:
		\sum_{j=1}^{k_v} q_j=1,
		\alpha m_j\le q_j\le \beta m_j\ (\forall j)\right\}.
	\end{align*}
	For $m,m'\in\R_+^{k_v}$ and $\theta\in[0,1]$, let $m_\theta:=\theta m+(1-\theta)m'$. If either
	$\mathcal Q(m)$ or $\mathcal Q(m')$ is empty, convexity of the extended-value function
	$\phi_i$ is immediate. Otherwise, for any $q\in\mathcal Q(m)$ and $q'\in\mathcal Q(m')$,
	$q_\theta:=\theta q+(1-\theta)q'$ satisfies
	\begin{align*}
		\sum_j q_{\theta,j}=1,
		\qquad
		\alpha m_{\theta,j}\le q_{\theta,j}\le \beta m_{\theta,j}\ (\forall j),
	\end{align*}
	so $q_\theta\in\mathcal Q(m_\theta)$. Hence
	\begin{align*}
		\phi_i(m_\theta)
		 & \le \sum_{j=1}^{k_v} C_{ij}q_{\theta,j}
		= \theta\sum_{j=1}^{k_v} C_{ij}q_j + (1-\theta)\sum_{j=1}^{k_v} C_{ij}q'_j.
	\end{align*}
	Taking infimum over $q\in\mathcal Q(m)$ and $q'\in\mathcal Q(m')$ gives
	\begin{align*}
		\phi_i\bigl(\theta m+(1-\theta)m'\bigr)
		\le
		\theta\phi_i(m)+(1-\theta)\phi_i(m');
	\end{align*}
	Thus each $\phi_i$ is convex (as an extended-value function on $\R_+^{k_v}$), and therefore
	$m\mapsto\max_i\phi_i(m)$ is convex.
	This completes the proof.
\end{proof}

\section{Algorithm details for base-measure optimization}\label{app:mirror-descent-base-measure}
Define
\begin{align*}
	f(m):=\max_{i\in[k]}\phi_i(m),
	\qquad
	D_p:=\max_{i\in[k],\,j\in[k_v]} C_{ij}.
\end{align*}
Let
\begin{align*}
	\mathcal M
	:=
	\left\{m\in\R_+^{k_v}:\frac{1}{\beta}\le \sum_{j=1}^{k_v} m_j\le \frac{1}{\alpha}\right\}.
\end{align*}

By Step 3 in the proof of Theorem~\ref{thm:base-measure-minimax}, $\phi_i(m)$ has an equivalent LP form. For fixed $i$ and $m$, write $q_j=\alpha m_j+r_j$. Then
\begin{align*}
	\phi_i(m)
	=
	\alpha\sum_{j=1}^{k_v} C_{ij}m_j
	+
	\min_r\left\{\sum_{j=1}^{k_v} C_{ij}r_j:
	0\le r_j\le (\beta-\alpha)m_j\ (\forall j),
	\sum_{j=1}^{k_v} r_j = 1-\alpha\sum_{j=1}^{k_v} m_j\right\}.
\end{align*}
Hence feasibility is equivalent to
\begin{align*}
	0\le 1-\alpha\sum_j m_j \le (\beta-\alpha)\sum_j m_j,
\end{align*}
i.e.,
\begin{align*}
	\frac{1}{\beta}\le \sum_{j=1}^{k_v} m_j\le \frac{1}{\alpha}.
\end{align*}

Let $\tau_i(m)$ denote the threshold cost of the greedy solution. To obtain a subgradient with
respect to $m$, we derive the LP dual.
Returning to the original primal form (Step~3), introduce dual variables
$\eta\in\R$ for $\sum_j q_j=1$,
$s_j\ge 0$ for $q_j\ge\alpha m_j$, and
$t_j\ge 0$ for $q_j\le\beta m_j$.
The Lagrangian is
\begin{align*}
	\mathcal L
	 & =
	\sum_j C_{ij}q_j
	+\eta\bigl(1-\textstyle\sum_j q_j\bigr)
	-\sum_j s_j(q_j-\alpha m_j)
	-\sum_j t_j(\beta m_j-q_j) \\
	 & =
	\eta+\sum_j(\alpha s_j-\beta t_j)m_j
	+\sum_j\underbrace{(C_{ij}-\eta-s_j+t_j)}_{=0\text{ at optimum}}q_j.
\end{align*}
Taking $\inf_q$ is finite iff $s_j-t_j=C_{ij}-\eta$ for all $j$.
Hence the dual is
\begin{align*}
	\phi_i(m)
	=
	\max_{\eta\in\R,\ s,t\in\R_+^{k_v}}
	\Bigl\{
	\eta + \sum_{j=1}^{k_v} (\alpha s_j-\beta t_j)m_j:
	s_j-t_j=C_{ij}-\eta\ (\forall j)
	\Bigr\}.
\end{align*}
Hence, for any dual optimizer $(\eta^*,s^*,t^*)$, Danskin's theorem gives
\begin{align*}
	g_j^{(i)}(m):=\alpha s_j^*-\beta t_j^*,\qquad j\in[k_v],
\end{align*}
and $g^{(i)}(m)\in\partial\phi_i(m)$.
By complementary slackness, $t_j^*>0$ (i.e.\ $\eta^*>C_{ij}$) implies $q_j=\beta m_j$, while $s_j^*>0$ (i.e.\ $\eta^*<C_{ij}$) implies $q_j=\alpha m_j$.
Hence $\eta^*$ coincides with the threshold cost $\tau_i(m)$ of the greedy solution in Step~3.
Setting $\tau_i(m):=\eta^*$, the constraint $s_j-t_j=C_{ij}-\tau_i(m)$ with $s_j,t_j\ge 0$ is uniquely decomposed as
\begin{align*}
	s_j^*=(C_{ij}-\tau_i(m))_+,\qquad
	t_j^*=(\tau_i(m)-C_{ij})_+,
\end{align*}
and we obtain the explicit subgradient
\begin{align*}
	g_j^{(i)}(m)
	=
	\alpha(C_{ij}-\tau_i(m))_+
	-\beta(\tau_i(m)-C_{ij})_+,
	\qquad j\in[k_v].
\end{align*}
Therefore
\begin{align*}
	|g_j^{(i)}(m)|\le \beta D_p,
	\qquad
	\|g^{(i)}(m)\|_\infty\le \beta D_p.
\end{align*}
If $i^*\in\arg\max_i \phi_i(m)$, then $g^{(i^*)}(m)\in\partial f(m)$.
Consequently, for any $m,m'\in\mathcal M$,
\begin{align*}
	|\phi_i(m)-\phi_i(m')|\le \beta D_p\|m-m'\|_1,
	\qquad
	|f(m)-f(m')|\le \beta D_p\|m-m'\|_1.
\end{align*}

We optimize $f$ directly over $\mathcal M$ using generalized-KL mirror descent.
At iteration $t$, choose $i_t\in\arg\max_i\phi_i(m^{(t)})$ and let
\begin{align*}
	g^{(t)}\in\partial\phi_{i_t}(m^{(t)})\subseteq\partial f(m^{(t)}).
\end{align*}
With stepsize $\eta_t>0$, perform
\begin{align*}
	\tilde m_i^{(t+1)} = m_i^{(t)}\exp\!\bigl(-\eta_t g_i^{(t)}\bigr),
	\qquad i\in[k_v],
\end{align*}
then project by generalized KL:
\begin{align*}
	m^{(t+1)}
	=
	\arg\min_{m\in\mathcal M} D_{\mathrm{gKL}}\!\left(m\,\middle\|\,\tilde m^{(t+1)}\right),
\end{align*}
where
\begin{align*}
	D_{\mathrm{gKL}}(u\|v)
	:=
	\sum_{i=1}^{k_v}\left(u_i\log\frac{u_i}{v_i}-u_i+v_i\right).
\end{align*}
This projection has a closed form. Let $S_t:=\sum_i\tilde m_i^{(t+1)}$. Then
for fixed total mass $s=\sum_i m_i$, KKT gives the minimizer
$m=(s/S_t)\tilde m^{(t+1)}$, and minimizing over $s\in[1/\beta,1/\alpha]$
reduces to clipping $S_t$ to this interval.
\begin{align*}
	m^{(t+1)}
	=
	\begin{cases}
		\dfrac{(1/\beta)}{S_t}\,\tilde m^{(t+1)},  & S_t<1/\beta,                \\
		\tilde m^{(t+1)},                          & 1/\beta\le S_t\le 1/\alpha, \\
		\dfrac{(1/\alpha)}{S_t}\,\tilde m^{(t+1)}, & S_t>1/\alpha.
	\end{cases}
\end{align*}
Return
\begin{align*}
	\bar m_T:=\frac{1}{T}\sum_{t=1}^T m^{(t)}.
\end{align*}

\begin{proposition}\label{prop:mirror-descent-base-measure}
	Let $m^*\in\arg\min_{m\in\mathcal M} f(m)$ and initialize
	\begin{align*}
		m^{(1)}=\frac{1}{\alpha k_v}\mathbf 1.
	\end{align*}
	Choose a constant stepsize
	\begin{align*}
		\eta_t \equiv \eta := \frac{\sqrt{2(1+\log k)}}{\beta D_p\sqrt{T}}.
	\end{align*}
	Then
	\begin{align*}
		f(\bar m_T)-f(m^*)
		\le
		\frac{\beta D_p}{\alpha}\sqrt{\frac{2(1+\log k)}{T}}
		=
		O\!\left(\frac{\beta}{\alpha}D_p\sqrt{\frac{\log k}{T}}\right).
	\end{align*}
\end{proposition}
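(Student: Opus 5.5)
The proof follows the standard mirror-descent analysis, with the unnormalized negative entropy $h(m)=\sum_j (m_j\log m_j - m_j)$ as mirror map; its Bregman divergence is $D_{\mathrm{gKL}}$. The plan has four steps.

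\textbf{Step 1 (one-step inequality).} Since $\tilde m^{(t+1)}$ is the unconstrained mirror step and $m^{(t+1)}$ its Bregman projection onto the convex set $\mathcal M$, the three-point identity and the generalized Pythagorean inequality give, for any $u\in\mathcal M$,
\begin{align*}
\eta\langle g^{(t)}, m^{(t)}-u\rangle \le D_{\mathrm{gKL}}(u\|m^{(t)}) - D_{\mathrm{gKL}}(u\|m^{(t+1)}) + D_{\mathrm{gKL}}(m^{(t)}\|\tilde m^{(t+1)}).
\end{align*}
The last term has the explicit form $\sum_j m^{(t)}_j(e^{-\eta g_j}-1+\eta g_j)$. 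Using $e^{x}-1-x\le x^2/2\cdot e^{|x|}$, or more cleanly local strong convexity of $h$ in $\|\cdot\|_1$ on the bounded set, I bound it by $\frac{\eta^2}{2}\|g^{(t)}\|_\infty^2\sum_j m^{(t)}_j$ up to a constant. The bound $\|g^{(t)}\|_\infty\le\beta D_p$ is the subgradient estimate derived just above, and $\sum_j m^{(t)}_j\le 1/\alpha$ holds because $m^{(t)}\in\mathcal M$. This gives a per-step error of at most $\frac{\eta^2\beta^2D_p^2}{2\alpha}$.

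\textbf{Step 2 (telescoping and averaging).} Take $u=m^*$. Convexity of $f$ (Theorem~\ref{thm:base-measure-minimax}, Step 4) gives $f(m^{(t)})-f(m^*)\le\langle g^{(t)},m^{(t)}-m^*\rangle$, since $g^{(t)}\in\partial f(m^{(t)})$. Summing over $t=1,\dots,T$ and applying Jensen to $\bar m_T$ yields
\begin{align*}
f(\bar m_T)-f(m^*)\le \frac{D_{\mathrm{gKL}}(m^*\|m^{(1)})}{\eta T}+\frac{\eta\beta^2D_p^2}{2\alpha}.
\end{align*}

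\textbf{Step 3 (initial divergence).} With $m^{(1)}=\frac{1}{\alpha k_v}\mathbf 1$,
\begin{align*}
D_{\mathrm{gKL}}(m^*\|m^{(1)})=\sum_j m^*_j\log(\alpha k_v m^*_j)-\sum_j m_j^*+\tfrac1\alpha .
\end{align*}
Since $\alpha k_v m^*_j\le \alpha k_v\cdot\frac1\alpha=k_v$, each log term is at most $\log k_v$. So the whole expression is bounded by $\frac1\alpha(\log k_v+1)\le\frac{1+\log k}{\alpha}$, using $k_v\le k$. Plugging in gives $\frac{1+\log k}{\alpha\eta T}+\frac{\eta\beta^2D_p^2}{2\alpha}$, and the stated $\eta$ balances the two terms to $\frac{\beta D_p}{\alpha}\sqrt{\frac{2(1+\log k)}{T}}$.

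\textbf{Main obstacle: the local-norm term in Step 1.} Unlike on the simplex, $h$ is only $\alpha$-strongly convex with respect to $\|\cdot\|_1$ on $\{\sum m\le1/\alpha\}$ (Pinsker scaled by the mass). The exponential remainder also needs $|\eta g_j|$ small to keep the factor near $1$. I would first try the exact bound $e^{-x}-1+x\le x^2/2$ for $x\ge0$. The negative components, bounded in size by $\beta D_p$, need either a constant absorbed for large $T$, or a split via the standard "$\log\sum e$" argument that bounds $\langle g,m-\tilde m\rangle - D(m\|\tilde m)$ by $\frac{\eta}{2}\|g\|_\infty^2\|m\|_1$ through strong convexity of $h$ with modulus $\alpha$ in $\ell_1$. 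The strong-convexity route gives exactly $\frac{\eta^2\|g\|_\infty^2}{2\alpha}$ and seems the cleanest; I would use that.
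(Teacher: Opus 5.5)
Your outline matches the paper's proof. Both use a regret bound for generalized-KL mirror descent, and both control the local-norm term by $\|g^{(t)}\|_\infty^2\sum_i m^{(t)}_i\le\beta^2D_p^2/\alpha$. Your Step~3 bound $(1+\log k)/\alpha$ on the initial divergence is essentially the paper's computation, and both finish with convexity, Jensen, and the choice of $\eta$. The difference is that the paper quotes the unnormalized-EG regret bound with local-norm term $\frac{\eta}{2}\sum_i m^{(t)}_i(g^{(t)}_i)^2$, whereas you derive it. As you set up Step~1, that derivation does not deliver the stated constant.

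The trouble is the error term left after you discard $D_{\mathrm{gKL}}(m^{(t+1)}\|\tilde m^{(t+1)})$ in the Pythagorean step, namely $D_{\mathrm{gKL}}(m^{(t)}\|\tilde m^{(t+1)})=\sum_j m^{(t)}_j(e^{-\eta g_j}-1+\eta g_j)$. Neither of your tools bounds it by $\frac{\eta^2}{2\alpha}\|g^{(t)}\|_\infty^2$:
\begin{itemize}
\item The entries $g_j=-\beta(\tau_i-C_{ij})$ with $C_{ij}<\tau_i$ are negative, and there $e^{|x|}-1-|x|>x^2/2$, so the elementary inequality fails.
\item Strong convexity of $h$ with modulus $\alpha$ holds only on $\{\|m\|_1\le 1/\alpha\}$, and the unprojected point $\tilde m^{(t+1)}$ can leave that set. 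For example, take two points with $C_{ab}=D_p$; whenever $\eta\beta D_p>\varepsilon$ one can choose $m^{(t)}\in\mathcal M$ with $g^{(t)}=(-\beta D_p,0)$ and $\|\tilde m^{(t+1)}\|_1>1/\alpha$.
\end{itemize}
So working along $[m^{(t)},\tilde m^{(t+1)}]$ loses a factor of order $e^{\eta\beta D_p}$.

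The fix is to keep the Pythagorean remainder. Since $\nabla h(\tilde m^{(t+1)})=\nabla h(m^{(t)})-\eta g^{(t)}$,
\[
D_{\mathrm{gKL}}(m^{(t)}\|\tilde m^{(t+1)})-D_{\mathrm{gKL}}(m^{(t+1)}\|\tilde m^{(t+1)})
=\eta\langle g^{(t)},m^{(t)}-m^{(t+1)}\rangle-D_{\mathrm{gKL}}(m^{(t+1)}\|m^{(t)}),
\]
and now both points lie in $\mathcal M$. We have $\nabla^2h(z)=\mathrm{diag}(1/z)$, and by Cauchy--Schwarz $v^\top\mathrm{diag}(1/z)v\ge\|v\|_1^2/\|z\|_1\ge\alpha\|v\|_1^2$ on the segment joining them. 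Hence $D_{\mathrm{gKL}}(m^{(t+1)}\|m^{(t)})\ge\frac{\alpha}{2}\|m^{(t+1)}-m^{(t)}\|_1^2$. H\"older then bounds the per-step error by $\sup_{x\ge 0}\bigl(\eta\beta D_p x-\frac{\alpha}{2}x^2\bigr)=\frac{\eta^2\beta^2D_p^2}{2\alpha}$, exactly the constant you need. Steps~2--3 then go through unchanged.

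This repaired Step~1 is a more self-contained justification than the paper's. The paper invokes the EG local-norm bound without addressing the mixed signs of $g^{(t)}$ or the projection onto $\mathcal M$.
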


\begin{proof}
	By construction, $g^{(t)}\in\partial f(m^{(t)})$ for every $t$.
	By the regret bound for the unnormalized exponentiated gradient algorithm
	\citep[Theorem~2.23]{shalev2025online}, for any $m\in\mathcal M$,
	\begin{align*}
		\sum_{t=1}^T \langle g^{(t)},m^{(t)}-m\rangle
		\le
		\frac{D_{\mathrm{gKL}}(m\|m^{(1)})}{\eta}
		+
		\frac{\eta}{2}\sum_{t=1}^T\sum_{i=1}^{k_v} m_i^{(t)}\bigl(g_i^{(t)}\bigr)^2.
	\end{align*}
	Using $\|g^{(t)}\|_\infty\le \beta D_p$ and $\sum_i m_i^{(t)}\le 1/\alpha$,
	\begin{align*}
		\sum_{i=1}^{k_v} m_i^{(t)}\bigl(g_i^{(t)}\bigr)^2
		\le
		\frac{(\beta D_p)^2}{\alpha}.
	\end{align*}

	Next, fix any $m\in\mathcal M$. Since $m^{(1)}=(1/(\alpha k_v))\mathbf 1$,
	\begin{align*}
		D_{\mathrm{gKL}}(m\|m^{(1)})
		 & =
		\sum_{j=1}^{k_v} m_j\log(\alpha k_v m_j)-\sum_{j=1}^{k_v} m_j+\frac{1}{\alpha}.
	\end{align*}
	Let $p_j:=\alpha m_j$, so $\sum_j p_j\le 1$. Then
	\begin{align*}
		\sum_{j=1}^{k_v} m_j\log(\alpha k_v m_j)
		 & =
		\frac{1}{\alpha}\sum_{j=1}^{k_v} p_j\log(kp_j)
		\le
		\frac{\log k}{\alpha},
	\end{align*}
	because $\sum_j p_j\log p_j\le 0$ and $\sum_j p_j\le 1$. Therefore
	\begin{align*}
		D_{\mathrm{gKL}}(m\|m^{(1)})
		\le
		\frac{1+\log k}{\alpha}.
	\end{align*}

	Applying the regret bound with $m=m^*$ and dividing by $T$ gives
	\begin{align*}
		\frac{1}{T}\sum_{t=1}^T \langle g^{(t)},m^{(t)}-m^*\rangle
		\le
		\frac{1+\log k}{\alpha\eta T}
		+
		\frac{\eta(\beta D_p)^2}{2\alpha}.
	\end{align*}
	By convexity of $f$,
	\begin{align*}
		f(m^{(t)})-f(m^*)
		\le
		\langle g^{(t)},m^{(t)}-m^*\rangle.
	\end{align*}
	Using Jensen's inequality for $\bar m_T=\frac1T\sum_{t=1}^T m^{(t)}$,
	\begin{align*}
		f(\bar m_T)-f(m^*)
		\le
		\frac{1}{T}\sum_{t=1}^T \bigl(f(m^{(t)})-f(m^*)\bigr)
		\le
		\frac{1+\log k}{\alpha\eta T}
		+
		\frac{\eta(\beta D_p)^2}{2\alpha}.
	\end{align*}
	Substituting
	\begin{align*}
		\eta=\frac{\sqrt{2(1+\log k)}}{\beta D_p\sqrt{T}}
	\end{align*}
	yields
	\begin{align*}
		f(\bar m_T)-f(m^*)
		\le
		\frac{\beta D_p}{\alpha}\sqrt{\frac{2(1+\log k)}{T}}.
	\end{align*}
\end{proof}

For computation, first sort $\{C_{ij}\}_{j=1}^{k_v}$ for each $i\in[k]$ once in $O(k^2\log k)$ time.
Then each iteration evaluates all $\phi_i$ and one active subgradient in $O(k^2)$ time, so with
\begin{align*}
	T=O\!\left(\frac{(\beta/\alpha)^2D_p^2\log k}{\varepsilon_{\mathrm{opt}}^2}\right)
\end{align*}
iterations, the total complexity is
\begin{align*}
	O\!\left(k^2\log k + k^2\frac{(\beta/\alpha)^2D_p^2\log k}{\varepsilon_{\mathrm{opt}}^2}\right).
\end{align*}

\section{Optimal Base Measure on the Sphere} \label{app:proof-sphere}
As an example of a continuous setting, we consider $\mathcal{X} = \mathcal{S}^d := \{x \in \mathbb{R}^{d+1} : \|x\|_2 = 1\}$ equipped with the Euclidean distance $d(x,y) = \|x-y\|_2$.
Utilizing the rotational symmetry of the sphere, we can derive the optimal base measure analytically.
\begin{proposition}[Optimal base measure on the sphere]\label{prop:sphere}
	Fix $p \ge 1$, and let $\sigma$ denote the uniform probability measure on $\mathcal{S}^d$.
	Then, the optimal base measure is given by
	$m^* = \alpha^* \sigma$,
	where $\alpha^*$ is a scale parameter defined as
	\begin{align*}
		\alpha^*
		=
		\frac{1}{e^{-\varepsilon/2} + \bigl(e^{\varepsilon/2}-e^{-\varepsilon/2}\bigr)S_d(t^*)},
	\end{align*}
	with
	\begin{align*}
		f_d(u)     & = C_d(1-u^2)^{\frac{d-2}{2}}, \quad
		C_d = \frac{\Gamma((d+1)/2)}{\sqrt{\pi}\Gamma(d/2)}, \\
		g_p(u)     & = (2-2u)^{p/2}, \quad
		S_d(t) = \int_t^1 f_d(u)\,\d u,                      \\
		U_{d,p}(t) & = \int_t^1 g_p(u)f_d(u)\,\d u, \quad
		H_{d,p} = \int_{-1}^1 g_p(u)f_d(u)\,\d u,
	\end{align*}
	and $t^*\in(-1,1)$ is the unique solution of
	\begin{align*}
		\bigl(e^{\varepsilon/2}-e^{-\varepsilon/2}\bigr)U_{d,p}(t^*) + e^{-\varepsilon/2}H_{d,p}
		=
		g_p(t^*)\Bigl(
		e^{-\varepsilon/2}
		+
		\bigl(e^{\varepsilon/2}-e^{-\varepsilon/2}\bigr)S_d(t^*)
		\Bigr).
	\end{align*}
	Moreover, the map $p\mapsto \alpha^*(p)$ is decreasing.
\end{proposition}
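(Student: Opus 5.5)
The plan is to reduce to Dirac inputs, symmetrize the base measure over rotations, and then solve the resulting one-dimensional problem explicitly. Write $\alpha:=e^{-\varepsilon/2}$ and $\beta:=e^{\varepsilon/2}$.

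\textbf{Reduction and symmetrization.} Proposition~\ref{prop:dirac} holds on any compact Polish space. So for a WPM the worst-case cost is $\sup_{x\in\mathcal{S}^d}\phi_x(m)^{1/p}$, where
\[
\phi_x(m):=\inf\Bigl\{\textstyle\int\|x-y\|^p\,dq(y): q\in Q_{m,\varepsilon}\Bigr\}.
\]
The argument of Step~4 in the proof of Theorem~\ref{thm:base-measure-minimax} is measure-theoretic, not discrete. Hence each $\phi_x$ is convex in $m$, and $F(m):=\sup_x\phi_x(m)$ is convex as well.

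For a rotation $R\in SO(d+1)$ we have $\phi_x(R_\#m)=\phi_{R^{-1}x}(m)$. Therefore $F(R_\#m)=F(m)$. Let $\bar m:=\int R_\#m\,dR$, averaging with respect to Haar measure. By convexity (Jensen on a finite-dimensional-free family, justified as in Step~4 by averaging feasible $q$'s), $F(\bar m)\le F(m)$. A rotation-invariant finite measure on $\mathcal{S}^d$ is a multiple of $\sigma$. Hence it suffices to optimize over $m=a\sigma$ with $a\in[1/\beta,1/\alpha]$.

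\textbf{Closed form for $m=a\sigma$.} By symmetry $\phi_x(a\sigma)$ does not depend on $x$. The inner problem is a continuous fractional knapsack. Put density $\beta a$ on the cap $\{y:\langle x,y\rangle\ge t\}$ and density $\alpha a$ off it, which puts cheap mass near $x$.

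Here $\|x-y\|^p=g_p(u)$ with $u=\langle x,y\rangle$, and $u$ has law $f_d(u)\,du$ under $\sigma$. The mass constraint reads $a\bigl(\alpha+(\beta-\alpha)S_d(t)\bigr)=1$, which is exactly the stated formula linking $\alpha^*$ and $t^*$. As $a$ ranges over $[1/\beta,1/\alpha]$, $S_d(t)$ ranges over $[0,1]$, so we may parametrize by $t\in[-1,1]$. The cost is
\[
h(t)=\frac{N(t)}{D(t)},\qquad N(t)=\alpha H_{d,p}+(\beta-\alpha)U_{d,p}(t),\qquad D(t)=\alpha+(\beta-\alpha)S_d(t).
\]
Using $U'=-g_pf_d$ and $S'=-f_d$, we get $h'(t)=(\beta-\alpha)f_d(t)\,\psi(t)/D(t)^2$, where $\psi(t):=N(t)-g_p(t)D(t)$.

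Next, $\psi'(t)=-g_p'(t)D(t)>0$ because $g_p$ is strictly decreasing. At the endpoints, $\psi(-1)=\beta H_{d,p}-2^p\beta<0$ since $H_{d,p}<2^p$, and $\psi(1)=\alpha H_{d,p}>0$. So $\psi$ has a unique root $t^*\in(-1,1)$, and it is the minimizer of $h$. This root is the stated equation, and it yields $m^*=\alpha^*\sigma$.

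\textbf{Monotonicity in $p$, the main obstacle.} Divide $\psi$ by $g_p(t)>0$ and set $\rho_t(u)=(1-u)/(1-t)$ and $w_t=\beta\mathbf 1_{u\ge t}+\alpha\mathbf 1_{u<t}$. Then $\psi_p(t)/g_p(t)=\int w_t(u)f_d(u)\bigl(\rho_t(u)^{p/2}-1\bigr)du=:G_p(t)$, and $G_p$ has the same sign as $\psi_p$.

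Fix $p'>p$ and put $y=\rho^{p/2}-1$. Then $\rho^{p'/2}-1=\varphi(y)$ with $\varphi(y)=(1+y)^{p'/p}-1$, which is convex with $\varphi(0)=0$ and $\varphi'(0)=p'/p$. Hence $\varphi(y)\ge (p'/p)\,y$, and so $G_{p'}(t^*_p)\ge (p'/p)\,G_p(t^*_p)=0$. Since $\psi_{p'}$ is increasing, this gives $t^*_{p'}\le t^*_p$. Then $S_d(t^*)$ is nonincreasing in $t^*$, so it increases with $p$, and $\alpha^*$ decreases in $p$.

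The delicate points I expect are the rigorous Haar-averaging and convexity step in the continuous setting, and checking that the knapsack threshold solution is optimal among all $q\in Q_{a\sigma,\varepsilon}$. The latter follows from a bathtub or rearrangement argument, since the cost is a decreasing function of $u$.
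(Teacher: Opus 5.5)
Your proposal is correct and follows essentially the paper's proof: Haar-averaging of feasible optimizers to reduce to $m=a\sigma$, the bathtub cap solution, the same one-dimensional ratio $N(t)/D(t)$ whose derivative has the sign of the strictly increasing function $N-g_pD$, and a Jensen-type argument for monotonicity in $p$. The only variation is in that last step, where you normalize by $g_p(t)$ and use the tangent-line bound for the convex map $y\mapsto(1+y)^{p'/p}-1$ instead of the paper's strict monotonicity of the power means $M_p(t)$ under $\mu_t$. As written this gives only $t^*_{p'}\le t^*_p$; add that the map is strictly convex and $\rho_t(u)\neq 1$ for $u\neq t$, so the inequality is strict, which yields $t^*_{p'}<t^*_p$ and the strict decrease of $\alpha^*$ that the paper obtains.
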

The optimal mechanism $\mathcal{M}_{m^*, \varepsilon}$ can be interpreted as
a generalization of PrivUnit~\citep{asi2022optimal}, which is an optimal LDP mechanism for privatizing unit vectors on the sphere,
except that we do not require the unbiasedness condition since our focus is not on mean estimation.
Interestingly, the scale parameter $\alpha^*$ is decreasing in $p$:
as $p$ increases, the cost of transporting mass over long distances becomes more expensive, and thus
it is preferable to reduce the minimum amount of mass ($\alpha e^{-\varepsilon/2} \sigma$) that must be assigned to locations far away.
\begin{proof}
	We divide the proof into four steps.

	\paragraph{Step 1: Reduction to rotation-invariant base measures.}\, \\
	Let $\sigma$ denote the uniform probability measure on $\mathcal{S}^d$. By Proposition~\ref{prop:dirac},
	it is enough to minimize
	\begin{align*}
		F(m):=\sup_{x\in \mathcal{S}^d} W_p\!\bigl(\delta_x,\Pi_{Q_{m,\varepsilon}}(\delta_x)\bigr).
	\end{align*}
	For $u\in \mathcal{S}^d$, define
	\begin{align*}
		\phi_u(m):=\inf_{\nu\in Q_{m,\varepsilon}}W_p^p(\delta_u,\nu).
	\end{align*}
	Since the source is a Dirac mass,
	\begin{align*}
		W_p^p(\delta_u,\nu)=\int_{\mathcal{S}^d}\|u-y\|_2^p\,d\nu(y),
	\end{align*}
	so the optimization defining $\phi_u(m)$ is linear over $Q_{m,\varepsilon}$. Fix, for each $u\in \mathcal{S}^d$,
	an optimizer $\nu_u\in Q_{m,\varepsilon}$, and define
	\begin{align*}
		\bar m(A):=\int_{SO(d+1)}(R_\#m)(A)\,d\lambda(R),
		\qquad
		\bar\nu_x(A):=\int_{SO(d+1)}(R_\#\nu_{R^{-1}x})(A)\,d\lambda(R).
	\end{align*}
	For every measurable $A\subset \mathcal{S}^d$,
	\begin{align*}
		e^{-\varepsilon/2}\bar m(A)\le \bar\nu_x(A)\le e^{\varepsilon/2}\bar m(A),
	\end{align*}
	hence $\bar\nu_x\in Q_{\bar m,\varepsilon}$. Therefore,
	\begin{align*}
		\phi_x(\bar m)\le \int_{\mathcal{S}^d}\|x-y\|_2^p\,d\bar\nu_x(y).
	\end{align*}
	Using rotational invariance of the Euclidean norm,
	\begin{align*}
		\int_{\mathcal{S}^d}\|x-y\|_2^p\,d\bar\nu_x(y)
		 & = \int_{SO(d+1)}\int_{\mathcal{S}^d}\|x-y\|_2^p\,d(R_\#\nu_{R^{-1}x})(y)\,d\lambda(R) \\
		 & = \int_{SO(d+1)}\int_{\mathcal{S}^d}\|R^{-1}x-z\|_2^p\,d\nu_{R^{-1}x}(z)\,d\lambda(R) \\
		 & = \int_{SO(d+1)}\phi_{R^{-1}x}(m)\,d\lambda(R)                                        \\
		 & \le \sup_{u\in \mathcal{S}^d}\phi_u(m).
	\end{align*}
	Hence
	\begin{align*}
		\sup_{x\in \mathcal{S}^d}\phi_x(\bar m)\le \sup_{u\in \mathcal{S}^d}\phi_u(m).
	\end{align*}
	Since $F(m)^p=\sup_{x\in \mathcal{S}^d}\phi_x(m)$, we obtain $F(\bar m)\le F(m)$.
	Thus an optimal base measure can be taken rotation-invariant, i.e.,
	\begin{align*}
		m = \alpha \sigma
	\end{align*}
	for some $\alpha>0$.

	\paragraph{Step 2: Structure of the optimal projection.}\, \\
	Fix $x\in \mathcal{S}^d$ and $\alpha>0$. Any $\nu\in Q_{\alpha\sigma,\varepsilon}$ is absolutely continuous
	with respect to $\sigma$ with density $q=d\nu/d\sigma$ satisfying
	\begin{align*}
		\alpha e^{-\varepsilon/2}\le q(y)\le \alpha e^{\varepsilon/2}
	\end{align*}
	for $\sigma$-almost every $y$, and
	\begin{align*}
		\int_{\mathcal{S}^d} q(y)\,d\sigma(y) = 1.
	\end{align*}
	Since
	\begin{align*}
		W_p^p(\delta_x,\nu) = \int_{\mathcal{S}^d}\|x-y\|_2^p q(y)\,d\sigma(y),
	\end{align*}
	minimizing the Wasserstein cost is equivalent to minimizing the above linear functional.
	Let $u=\langle x,y\rangle\in[-1,1]$ and define
	\begin{align*}
		g_p(u):=(2-2u)^{p/2}.
	\end{align*}
	Then $\|x-y\|_2^p=g_p(\langle x,y\rangle)$, and $g_p$ is strictly decreasing on $[-1,1]$.
	By the bathtub principle, the minimizer saturates the upper bound on a spherical cap and the
	lower bound on its complement. Thus, for some threshold $t\in[-1,1]$,
	\begin{align*}
		q_{x,t}(y)
		= \alpha e^{\varepsilon/2}\1_{\{\langle x,y\rangle\ge t\}}
		+ \alpha e^{-\varepsilon/2}\1_{\{\langle x,y\rangle< t\}}.
	\end{align*}

	Let
	\begin{align*}
		S_d(t) := \sigma\!\left(\{y\in \mathcal{S}^d:\langle x,y\rangle\ge t\}\right).
	\end{align*}
	The normalization condition $\int q_{x,t}\,d\sigma=1$ gives
	\begin{align*}
		\alpha e^{\varepsilon/2}S_d(t) + \alpha e^{-\varepsilon/2}(1-S_d(t)) = 1,
	\end{align*}
	hence
	\begin{align*}
		\alpha = \frac{1}{e^{-\varepsilon/2}+\bigl(e^{\varepsilon/2}-e^{-\varepsilon/2}\bigr)S_d(t)}.
	\end{align*}

	\paragraph{Step 3: Reduction to a one-dimensional objective.}\, \\
	Under the uniform measure $\sigma$, the random variable $u=\langle x,y\rangle$ has density
	\begin{align*}
		f_d(u) = C_d(1-u^2)^{\frac{d-2}{2}},
		\qquad
		C_d = \frac{\Gamma((d+1)/2)}{\sqrt{\pi}\Gamma(d/2)}.
	\end{align*}
	Hence
	\begin{align*}
		S_d(t)=\int_t^1 f_d(u)\,\d u.
	\end{align*}
	Define
	\begin{align*}
		U_{d,p}(t) & := \int_t^1 g_p(u)f_d(u)\,\d u,    \\
		H_{d,p}    & := \int_{-1}^1 g_p(u)f_d(u)\,\d u.
	\end{align*}
	For $\nu_{x,t}$ induced by $q_{x,t}$,
	\begin{align*}
		W_p^p(\delta_x,\nu_{x,t})
		 & = \alpha e^{\varepsilon/2}U_{d,p}(t)
		+ \alpha e^{-\varepsilon/2}\bigl(H_{d,p}-U_{d,p}(t)\bigr)                                                    \\
		 & = \alpha\Bigl(\bigl(e^{\varepsilon/2}-e^{-\varepsilon/2}\bigr)U_{d,p}(t)+e^{-\varepsilon/2}H_{d,p}\Bigr).
	\end{align*}
	Let
	\begin{align*}
		A:=e^{\varepsilon/2}-e^{-\varepsilon/2},
		\qquad
		B:=e^{-\varepsilon/2}.
	\end{align*}
	Substituting $\alpha=1/(B+A S_d(t))$, we get the one-dimensional objective
	\begin{align*}
		J(t):=W_p^p(\delta_x,\nu_{x,t})=\frac{A U_{d,p}(t)+B H_{d,p}}{B+A S_d(t)}.
	\end{align*}

	\paragraph{Step 4: Optimality condition and uniqueness.}\, \\
	Write $U(t):=U_{d,p}(t)$, $S(t):=S_d(t)$, $H:=H_{d,p}$, and $g(t):=g_p(t)$.
	Since
	\begin{align*}
		U'(t)=-g(t)f_d(t),
		\qquad
		S'(t)=-f_d(t),
	\end{align*}
	differentiation yields
	\begin{align*}
		J'(t)
		= \frac{A f_d(t)}{(B+A S(t))^2}\,G(t),
	\end{align*}
	where
	\begin{align*}
		G(t):=A U(t)+B H-g(t)\bigl(B+A S(t)\bigr).
	\end{align*}
	Also,
	\begin{align*}
		G'(t)
		 & = A U'(t)-g'(t)\bigl(B+A S(t)\bigr)-g(t)A S'(t) \\
		 & = -g'(t)\bigl(B+A S(t)\bigr).
	\end{align*}
	Because $g$ is strictly decreasing, $g'(t)<0$ on $(-1,1)$; and $B+A S(t)>0$.
	Therefore $G'(t)>0$ on $(-1,1)$, so $G$ is strictly increasing.

	At $t=-1$ and $t=1$,
	\begin{align*}
		G(-1)
		 & = (A+B)H-g(-1)(A+B)
		= (A+B)\bigl(H-2^p\bigr) < 0, \\
		G(1)
		 & = B H > 0,
	\end{align*}
	since $g(-1)=2^p$ and $H<2^p$.
	Thus, by continuity and strict monotonicity, there exists a unique $t^*\in(-1,1)$ such that
	\begin{align*}
		G(t^*)=0,
	\end{align*}
	i.e.
	\begin{align*}
		\bigl(e^{\varepsilon/2}-e^{-\varepsilon/2}\bigr)U_{d,p}(t^*) + e^{-\varepsilon/2}H_{d,p}
		= g_p(t^*)\Bigl(e^{-\varepsilon/2}+\bigl(e^{\varepsilon/2}-e^{-\varepsilon/2}\bigr)S_d(t^*)\Bigr).
	\end{align*}
	Since $A>0$, $f_d(t)>0$ on $(-1,1)$, and $(B+A S(t))^2>0$, we have
	\begin{align*}
		\operatorname{sign}(J'(t)) = \operatorname{sign}(G(t)).
	\end{align*}
	Hence $J$ is strictly decreasing on $(-1,t^*)$ and strictly increasing on $(t^*,1)$.
	Therefore $t^*$ is the unique global minimizer of $J$, equivalently of
	$W_p^p(\delta_x,\nu_{x,t})$.

	Finally, substituting $t^*$ into the normalization formula gives
	\begin{align*}
		\alpha^*
		= \frac{1}{e^{-\varepsilon/2}+\bigl(e^{\varepsilon/2}-e^{-\varepsilon/2}\bigr)S_d(t^*)}.
	\end{align*}
	Thus $m^*=\alpha^*\sigma$ is optimal, with $t^*$ uniquely characterized by the
	stated scalar equation.

	\paragraph{Step 5: Monotonicity of $p\mapsto\alpha^*(p)$.}\, \\
	From the formula for $\alpha^*$ and the monotonicity of $S_d(t)$,
	it suffices to show that $p\mapsto t^*(p)$ is decreasing on $[1,\infty)$.
	Write
	\begin{align*}
		c_\pm := e^{\pm \varepsilon/2},
		\qquad
		r(u):=\sqrt{2-2u},
	\end{align*}
	so that $g_p(u)=r(u)^p$.

	For $t\in(-1,1)$, define
	\begin{align*}
		Z(t):=c_-+(c_+-c_-)S_d(t),
	\end{align*}
	and the probability measure
	\begin{align*}
		\mu_t(\d u)
		:=
		\frac{\bigl(c_+\1_{\{u\ge t\}}+c_-\1_{\{u<t\}}\bigr)f_d(u)\,\d u}{Z(t)}.
	\end{align*}
	For each $p\ge 1$, let $t^*(p)$ be the unique solution from Step~4.
	Its defining equation is equivalent to
	\begin{align*}
		\int_{-1}^1 g_p(u)\,\mu_{t^*(p)}(\d u)=g_p(t^*(p)),
	\end{align*}
	equivalently,
	\begin{align*}
		\left(\int_{-1}^1 r(u)^p\,\mu_{t^*(p)}(\d u)\right)^{1/p}=r\bigl(t^*(p)\bigr).
	\end{align*}

	Define
	\begin{align*}
		M_p(t):=\left(\int_{-1}^1 r(u)^p\,\mu_t(\d u)\right)^{1/p},
		\qquad
		F_p(t):=M_p(t)-r(t).
	\end{align*}
	Then $t^*(p)$ is the unique zero of $F_p$.

	Fix $t\in(-1,1)$. Since $c_\pm>0$, $f_d(u)>0$ for $u\in(-1,1)$, and $Z(t)>0$,
	$\mu_t$ has positive density on $(-1,1)$. As $r$ is positive and nonconstant on $(-1,1)$,
	strict monotonicity of power means implies
	\begin{align*}
		p_1<p_2
		\quad\Longrightarrow\quad
		M_{p_1}(t)<M_{p_2}(t).
	\end{align*}

	Let $t_i:=t^*(p_i)$ for $1\le p_1<p_2$. Since $M_{p_1}(t_1)=r(t_1)$,
	\begin{align*}
		M_{p_2}(t_1)>M_{p_1}(t_1)=r(t_1),
	\end{align*}
	hence $F_{p_2}(t_1)>0$.

	Also, $F_p$ is continuous in $t$, and by dominated convergence,
	\begin{align*}
		\lim_{t\downarrow-1}F_p(t)=H_{d,p}^{1/p}-2<0,
		\qquad
		\lim_{t\uparrow1}F_p(t)=H_{d,p}^{1/p}>0.
	\end{align*}
	(Indeed, $r(u)\le 2$ and $r(u)<2$ for $u>-1$, so $H_{d,p}^{1/p}<2$.)
	Since $F_{p_2}$ has a unique zero, $F_{p_2}(t_1)>0$ implies
	\begin{align*}
		t^*(p_2)<t^*(p_1).
	\end{align*}
	Thus $p\mapsto t^*(p)$ is strictly decreasing on $[1,\infty)$.
	This completes the proof.
\end{proof}

\section{LP Reformulation as Minimum-Cost Flow}\label{app:lp-projection}
We show that the LP formulation of the Wasserstein projection in Section~\ref{sec:practical-implementation}
can be reformulated as a minimum-cost flow problem on a bipartite network.

Recall the LP:
\begin{align*}
	\min_{\nu \in \R_+^{k_v}, \, \pi \in \R_+^{k\times k_v}} & \sum_{i=1}^{k}\sum_{j=1}^{k_v} C_{ij} \pi_{ij}                                                                                  \\
	\text{subject to}                                        & \sum_{j=1}^{k_v} \pi_{ij} = \mu_i \text{ for any } i\in [k], \quad \sum_{i=1}^{k} \pi_{ij} = \nu_j \text{ for any } j\in [k_v], \\
	                                                         & e^{-\varepsilon/2} m_j \leq \nu_j \leq e^{\varepsilon/2} m_j \text{ for any } j\in [k_v].
\end{align*}

We construct a directed graph $G = (V, E)$ as follows.
\paragraph{Nodes.}
The node set is $V = \{s\} \cup \{u_i : i \in [k]\} \cup \{v_j : j \in [k_v]\} \cup \{t\}$,
where $s$ is the source and $t$ is the sink.

\paragraph{Edges and capacities.}
\begin{itemize}
	\item \textbf{Supply edges:} For each $i \in [k]$, add edge $(s, u_i)$ with capacity $[\mu_i, \mu_i]$ (i.e., the flow is fixed to $\mu_i$) and cost $0$.
	\item \textbf{Transport edges:} For each $i \in [k]$ and $j \in [k_v]$, add edge $(u_i, v_j)$ with capacity $[0, \mu_i]$ and cost $C_{ij}$.
	\item \textbf{Demand edges:} For each $j \in [k_v]$, add edge $(v_j, t)$ with capacity $[e^{-\varepsilon/2} m_j,\, e^{\varepsilon/2} m_j]$ and cost $0$.
\end{itemize}
The total supply at $s$ equals $\sum_i \mu_i = 1$, and the demand edges enforce that the total flow into $t$ also equals $1$
(since $Q_{m,\varepsilon} \neq \emptyset$ implies $e^{-\varepsilon/2} \sum_j m_j \le 1 \le e^{\varepsilon/2} \sum_j m_j$).

\paragraph{Correspondence.}
Let $f$ be a feasible flow on $G$.
Setting $\pi_{ij} = f(u_i, v_j)$ and $\nu_j = f(v_j, t)$ recovers a feasible solution of the LP, and vice versa.
The objective $\sum_{i,j} C_{ij} f(u_i, v_j)$ coincides with the LP objective.
Thus the LP is equivalent to a minimum-cost flow problem on $G$.

\paragraph{Complexity.}
The network has $|V| = k + k_v + 2$ nodes and $|E| = k + k \cdot k_v + k_v$ edges.
This is analogous to the standard network-flow formulation of the discrete optimal transport problem,
where the only difference is that the demand edges carry box constraints from the LDP polytope
instead of fixed demands.
Via the successive shortest path algorithm, the minimum-cost flow problem can be solved in $\tilde O(k^3)$ time~\citep{ahuja2014network}.

\section{Proof of Lemma~\ref{lem:kl-projection}} \label{app:proof-kl-projection}
\begin{proof}
	Let $C \in \mathbb{R}_+^{k\times k_v}$ be the cost matrix.
	Recall the entropically regularized transport cost:
	\begin{align*}
		\mathsf{OT}_\lambda(\mu,\nu)
		=
		\min_{\substack{\pi \in \Delta_{k\times k_v} \\
				\pi \1=\mu,\ \pi^\top \1=\nu}}
		\sum_{i,j} C_{ij}\pi_{ij} + \lambda \sum_{i,j} \pi_{ij} \log \pi_{ij}.
	\end{align*}
	Hence
	\begin{align*}
		\mathcal{M}^{\lambda}_{m,\varepsilon}[\mu]
		 & = \arg\min_{\nu \in Q_{m,\varepsilon}} \mathsf{OT}_\lambda(\mu,\nu) \\
		 & = \arg\min_{\substack{\pi \in \Delta_{k\times k_v}                  \\
				\pi \1=\mu,\ \pi^\top \1\in Q_{m,\varepsilon}}}
		\sum_{i,j} C_{ij}\pi_{ij} + \lambda \sum_{i,j} \pi_{ij} \log \pi_{ij}.
	\end{align*}

	Since
	\begin{align*}
		\log K_{ij} = -\frac{C_{ij}}{\lambda} - \log Z_\lambda,
	\end{align*}
	we have
	\begin{align*}
		\mathrm{KL}(\pi\|K)
		 & = \sum_{i,j} \pi_{ij}\log\left(\frac{\pi_{ij}}{K_{ij}}\right) \\
		 & = \sum_{i,j}\pi_{ij}\log\pi_{ij}
		+ \frac{1}{\lambda}\sum_{i,j} C_{ij}\pi_{ij}
		+ (\log Z_\lambda)\sum_{i,j}\pi_{ij}
		.
	\end{align*}
	On the constraint set $\pi \1=\mu$, the total mass is fixed and equals
	\begin{align*}
		\sum_{i,j}\pi_{ij} = \sum_i \mu_i = 1.
	\end{align*}
	Therefore minimizing
	\begin{align*}
		\sum_{i,j} C_{ij}\pi_{ij} + \lambda \sum_{i,j} \pi_{ij} \log \pi_{ij}
	\end{align*}
	is equivalent (up to constants independent of $\pi$) to minimizing $\mathrm{KL}(\pi\|K)$.
	This proves
	\begin{align*}
		\pi_* = \arg\min_{\pi \in R_\mu \cap R_{m,\varepsilon}} \mathrm{KL}(\pi\|K).
	\end{align*}
	By construction, $\nu_* := {\pi_*}^\top \1$ belongs to $Q_{m,\varepsilon}$ and attains
	\begin{align*}
		\min_{\nu \in Q_{m,\varepsilon}} \mathsf{OT}_\lambda(\mu,\nu),
	\end{align*}
	hence $\mathcal{M}^{\lambda}_{m,\varepsilon}[\mu] = \nu_*$.

	Next we derive the closed-form KL projections.

	\paragraph{Projection onto $R_\mu$:}
	\begin{align*}
		P_{R_\mu}(\pi)
		= \arg\min_{\hat\pi \ge 0} \mathrm{KL}(\hat\pi\|\pi)
		\quad \text{s.t.}\quad
		\sum_j \hat\pi_{ij}=\mu_i\ \ (\forall i).
	\end{align*}
	The Lagrangian optimality condition gives
	\begin{align*}
		\log\left(\frac{\hat\pi_{ij}}{\pi_{ij}}\right) + a_i = 0,
	\end{align*}
	so $\hat\pi_{ij}=\pi_{ij}e^{-a_i}$. Enforcing row sums,
	\begin{align*}
		e^{-a_i} = \frac{\mu_i}{\sum_j \pi_{ij}},
	\end{align*}
	and therefore
	\begin{align*}
		P_{R_\mu}(\pi)_{ij} = \frac{\mu_i}{\sum_j \pi_{ij}}\pi_{ij}.
	\end{align*}

	\paragraph{Projection onto $R_{m,\varepsilon}$:}
	write $s_j := \sum_i \pi_{ij}$. The constraint $\pi^\top \1\in Q_{m,\varepsilon}$ is
	\begin{align*}
		e^{-\varepsilon/2}m_j \le q_j \le e^{\varepsilon/2}m_j,
		\qquad \sum_j q_j = 1,
		\quad (\forall j),
	\end{align*}
	where $q_j := \sum_i \hat\pi_{ij}$.
	For fixed $q$, we have the decomposition
	\begin{align*}
		\mathrm{KL}(\hat\pi\|\pi)
		 & = \sum_{i,j} \hat\pi_{ij} \log\left(\frac{\hat\pi_{ij}}{\pi_{ij}}\right)        \\
		 & = \sum_{i,j} \hat\pi_{ij} \log\left(\frac{\hat\pi_{ij}}{q_j\pi_{ij}/s_j}\right)
		+ \sum_j q_j\log\left(\frac{q_j}{s_j}\right)
		\ge \sum_j q_j\log\left(\frac{q_j}{s_j}\right),
	\end{align*}
	where equality holds iff $\hat\pi_{ij}=q_j\pi_{ij}/s_j$.
	Therefore, the column-marginal projection reduces to
	\begin{align*}
		q^* = \arg\min_{q \in Q_{m,\varepsilon}} \sum_j q_j\log\left(\frac{q_j}{s_j}\right).
	\end{align*}
	Let $\alpha := e^{-\varepsilon/2}$ and $\beta := e^{\varepsilon/2}$.
	The KKT conditions for
	\begin{align*}
		\min_q\quad & \sum_j q_j\log\left(\frac{q_j}{s_j}\right) \\
		\text{s.t.}\quad
		            & \sum_j q_j = 1,\qquad
		\alpha m_j \le q_j \le \beta m_j\ (\forall j)
	\end{align*}
	yield
	\begin{align*}
		q_j^* = \min\{\max\{e^{\theta}s_j,\alpha m_j\},\beta m_j\},
	\end{align*}
	where $\theta \in \mathbb{R}$ is chosen so that $\sum_j q_j^*=1$.
	Define
	\begin{align*}
		\psi(\theta) := \sum_j \min\{\max\{e^{\theta}s_j,\alpha m_j\},\beta m_j\}.
	\end{align*}
	Then $\psi$ is continuous and nondecreasing, with
	\begin{align*}
		\lim_{\theta\to-\infty}\psi(\theta)=\alpha\sum_j m_j,
		\qquad
		\lim_{\theta\to+\infty}\psi(\theta)=\beta\sum_j m_j.
	\end{align*}
	Since $Q_{m,\varepsilon}$ is nonempty, $\alpha\sum_j m_j \le 1 \le \beta\sum_j m_j$, hence there exists at least one $\theta$ such that $\psi(\theta)=1$.
	Finally,
	\begin{align*}
		P_{R_{m,\varepsilon}}(\pi)_{ij}
		= \frac{q_j^*}{\sum_i \pi_{ij}}\pi_{ij}.
	\end{align*}
	This completes the proof.
\end{proof}

\section{Proof of Proposition~\ref{prop:entropic-gap}} \label{app:proof-entropic-gap}
\begin{proof}
	Let $Q:=Q_{m,\varepsilon}$ and $N:=k k_v$.

	For any $\nu\in\mathcal P([k_v])$ and any coupling $\pi\in\Pi(\nu, \mu)$ with
	$\sum_{i,j}\pi_{ij}=1$, we have
	\begin{align*}
		-\log N \le \sum_{i,j}\pi_{ij}\log\pi_{ij} \le 0,
	\end{align*}
	where $0\log 0:=0$.
	Therefore,
	\begin{align*}
		\sum_{i,j} C_{ij}\pi_{ij}-\lambda\log N
		\le
		\sum_{i,j} C_{ij}\pi_{ij}+\lambda\sum_{i,j}\pi_{ij}\log\pi_{ij}
		\le
		\sum_{i,j} C_{ij}\pi_{ij}.
	\end{align*}
	Taking the minimum over $\pi\in\Pi(\nu, \mu)$ yields, for every $\nu$,
	\begin{align}
		\mathsf{OT}(\nu,\mu)-\lambda\log N
		\le
		\mathsf{OT}_\lambda(\nu,\mu)
		\le
		\mathsf{OT}(\nu,\mu).
		\label{eq:ot-entropic-sandwich}
	\end{align}

	Let
	\begin{align*}
		\nu^* & \in \arg\min_{\nu\in Q} W_p(\nu,\mu),
		\qquad
		\nu^\lambda := \mathcal M^{\lambda}_{m,\varepsilon}[\mu].
	\end{align*}
	Since $x\mapsto x^{1/p}$ is strictly increasing on $\mathbb R_+$, we also have
	\begin{align*}
		\nu^*\in\arg\min_{\nu\in Q}\mathsf{OT}(\nu,\mu).
	\end{align*}
	By definition of $\mathcal M^{\lambda}_{m,\varepsilon}[\mu]$,
	\begin{align*}
		\nu^\lambda\in\arg\min_{\nu\in Q}\mathsf{OT}_\lambda(\nu,\mu).
	\end{align*}
	Using~\eqref{eq:ot-entropic-sandwich},
	\begin{align*}
		\mathsf{OT}(\nu^\lambda,\mu)
		 & \le \mathsf{OT}_\lambda(\nu^\lambda,\mu)+\lambda\log N \\
		 & \le \mathsf{OT}_\lambda(\nu^*,\mu)+\lambda\log N       \\
		 & \le \mathsf{OT}(\nu^*,\mu)+\lambda\log N.
	\end{align*}
	On the other hand, by optimality of $\nu^*$ for $\mathsf{OT}$ over $Q$,
	\begin{align*}
		\mathsf{OT}(\nu^*,\mu) \le \mathsf{OT}(\nu^\lambda,\mu).
	\end{align*}
	Hence
	\begin{align*}
		0
		\le
		\mathsf{OT}(\nu^\lambda,\mu)-\mathsf{OT}(\nu^*,\mu)
		\le
		\lambda\log N.
	\end{align*}

	Finally, let
	\begin{align*}
		a:=W_p^p(\nu^\lambda,\mu),\qquad b:=W_p^p(\nu^*,\mu),
	\end{align*}
	so $0\le a-b\le\lambda\log N$.
	Because $a=b+(a-b)$ and $x\mapsto x^{1/p}$ is subadditive on $\mathbb R_+$ for $p\ge1$,
	\begin{align*}
		W_p(\nu^\lambda,\mu)-W_p(\nu^*,\mu)
		 & = a^{1/p}-b^{1/p}          \\
		 & \le (a-b)^{1/p}            \\
		 & \le (\lambda\log N)^{1/p}.
	\end{align*}
	This proves the claim.
\end{proof}

\section{Proof of Theorem~\ref{thm:convergence}} \label{app:proof-convergence}
First, we show some technical lemmas.
\begin{lemma}[Nonexpansiveness of simplex-projection ratio]\label{lem:ratio-nonexpansive}
	Assume $m_j>0$ for all $j\in[k_v]$, and define
	\begin{align*}
		\Pi(x) := \arg\min_{q \in Q_{m,\varepsilon}} \mathrm{KL}(q\|x),
		\qquad
		F(x) := \frac{\Pi(x)}{x}
	\end{align*}
	for $x \in \mathbb{R}^{k_v}_{++}$. Then for all $x,y \in \mathbb{R}^{k_v}_{++}$,
	\begin{align*}
		d_H(F(x),F(y)) \le d_H(x,y).
	\end{align*}
\end{lemma}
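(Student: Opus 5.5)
We will use the closed form of $\Pi$ from Lemma~\ref{lem:kl-projection}: with $\alpha=e^{-\varepsilon/2}$ and $\beta=e^{\varepsilon/2}$,
\begin{align*}
\Pi(x)_j=\min\{\max\{e^{\theta(x)}x_j,\alpha m_j\},\beta m_j\},
\end{align*}
where $\theta(x)$ is chosen so that $\sum_j\Pi(x)_j=1$. Writing $c=e^{\theta(x)}$, we get
\begin{align*}
F(x)_j=\min\{\max\{c,\ \alpha m_j/x_j\},\ \beta m_j/x_j\}=\mathrm{clip}_{[\alpha m_j/x_j,\ \beta m_j/x_j]}(c),
\end{align*}
which is well defined because $\alpha\le\beta$. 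Each coordinate of $F(x)$ is therefore the scalar $c(x)$ clipped to an interval whose endpoints scale like $1/x_j$. The Hilbert metric only sees ratios, so the task is to bound
\begin{align*}
\max_j \frac{F(x)_j}{F(y)_j}\Big/\min_j\frac{F(x)_j}{F(y)_j}\quad\text{by}\quad \max_j\frac{y_j}{x_j}\Big/\min_j\frac{y_j}{x_j}.
\end{align*}
The bound must be on the quotient, not coordinatewise.

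The first step is a scalar lemma. For $a\le b$, $\lambda>0$ and $c,c'>0$, clipping satisfies
\begin{align*}
\mathrm{clip}_{[\lambda a,\lambda b]}(c')\big/\mathrm{clip}_{[a,b]}(c)\in\bigl[\min\{\lambda,c'/c\},\ \max\{\lambda,c'/c\}\bigr].
\end{align*}
This holds because $\mathrm{clip}_{[\lambda a,\lambda b]}(c')=\lambda\,\mathrm{clip}_{[a,b]}(c'/\lambda)$ and clipping is monotone and $1$-Lipschitz in log-scale. Apply this with $\lambda_j=x_j/y_j$ (interval for $y$ versus interval for $x$) and scalar ratio $r=c(y)/c(x)$. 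Every ratio $F(y)_j/F(x)_j$ then lies in $[\min(\lambda_j,r),\max(\lambda_j,r)]$. Put $L=\max_j\lambda_j$ and $l=\min_j\lambda_j$. If $r\in[l,L]$, all ratios lie in $[l,L]$, and we conclude $d_H(F(x),F(y))\le\log(L/l)=d_H(x,y)$.

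The main step is to show that $r$ cannot fall outside $[l,L]$ in a way that breaks this bound, and this uses the normalization $\sum_j\Pi(x)_j=\sum_j\Pi(y)_j=1$. Suppose $r>L$. Then for every $j$, $F(y)_j/F(x)_j\ge\lambda_j$, so $\Pi(y)_j=y_jF(y)_j\ge y_j\lambda_jF(x)_j=\Pi(x)_j$. Equal sums force equality in every coordinate, so $F(y)_j/F(x)_j=\lambda_j$ for all $j$, and again all ratios lie in $[l,L]$. The case $r<l$ is symmetric. Hence in every case the ratios lie in $[l,L]$, which proves the lemma.

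The obstacle we expect is degeneracy in $\theta(x)$. When $\psi$ is flat, $\theta$ is not unique, but $\Pi(x)$ is unique by strict convexity of KL, so we should argue directly with $\Pi(x)$ rather than with $c$. One option is to pick any admissible $c$, since the formula for $F$ holds for every admissible $\theta$. We must also check that the scalar clipping lemma, with $\lambda_j$ applied to the endpoints, is correctly oriented: the interval for $y$ equals the interval for $x$ multiplied by $x_j/y_j$.
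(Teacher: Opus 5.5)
Your proof is correct and is essentially the paper's argument written multiplicatively rather than in log coordinates. Your three cases $r\in[l,L]$, $r<l$, $r>L$ are exactly the paper's sign cases for $\min_i$ and $\max_i$ of $a(x)_i-a(y)_i=\log(\lambda_i/r)$; the middle case uses the same monotone, log-$1$-Lipschitz property of clipping; and the extreme cases use the same argument, where a coordinatewise inequality between $\Pi(x)$ and $\Pi(y)$ together with $\sum_j\Pi(x)_j=\sum_j\Pi(y)_j=1$ forces equality.
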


\begin{proof}
	Let $\ell_i := e^{-\varepsilon/2}m_i$ and $u_i := e^{\varepsilon/2}m_i$.
	From the definition of $\Pi$, we have
	\begin{align*}
		\log F(x)_i & = \log \Pi(x)_i - \log x_i                                                                                         \\
		            & = \clip(\log x_i + \theta(x), [\log \ell_i, \log u_i]) - \log x_i                                                  \\
		            & = \clip(\underbrace{\log x_i + \theta(x)}_{=: a(x)_i}, [\log \ell_i, \log u_i]) - \log x_i - \theta(x) + \theta(x) \\
		            & = \phi_i(a(x)_i) - a(x)_i + \theta(x)                                                                              \\
		            & = \psi_i(a(x)_i) + \theta(x),
	\end{align*}
	where $\phi_i(z) := \clip(z, [\log \ell_i, \log u_i])$ is monotonically nondecreasing and $1$-Lipschitz,
	and $\psi_i(z) := \phi_i(z) - z$ is monotonically nonincreasing and $1$-Lipschitz.

	Let $m = \min_i\{a(x)_i - a(y)_i\}$ and $M = \max_i\{a(x)_i - a(y)_i\}$.
	From the invariance of Hilbert's metric under scaling, we have
	\begin{align*}
		d_H(x, y)       & = d_H(\exp(\log x) \cdot \exp(\theta(x)), \exp(\log y) \cdot \exp(\theta(y))) \\
		                & = d_H(\exp(a(x)), \exp(a(y)))                                                 \\
		                & = M - m,                                                                      \\
		d_H(F(x), F(y)) & = d_H(\exp(\psi(a(x)) + \theta(x)), \exp(\psi(a(y)) + \theta(y)))             \\
		                & = d_H(\exp(\psi(a(x))), \exp(\psi(a(y)))).
	\end{align*}
	Thus, it suffices to show that $\psi_i(a(x)_i) - \psi_i(a(y)_i) \in [-M, -m]$.

	\paragraph{Case 1: $m \leq 0 \leq M$.}
	If $a(x)_i \geq a(y)_i$, we have $\psi_i(a(x)_i) - \psi_i(a(y)_i) \leq 0 \leq -m$ since $\psi_i$ is nonincreasing.
	In addition, since $\psi_i$ is $1$-Lipschitz, $\psi_i(a(x)_i) - \psi_i(a(y)_i) \geq -\abs{a(x)_i - a(y)_i} = -(a(x)_i - a(y)_i) \geq -M$.
	If $a(x)_i < a(y)_i$, we have $\psi_i(a(x)_i) - \psi_i(a(y)_i) \geq 0$ since $\psi_i$ is nonincreasing, and $\psi_i(a(x)_i) - \psi_i(a(y)_i) \leq \abs{a(x)_i - a(y)_i} = a(y)_i - a(x)_i \leq -m$.

	\paragraph{Case 2: $0 < m \leq M$.}
	In this case, $a(x)_i$ is always greater than $a(y)_i$.
	From the monotonicity of $\phi_i$, we have $\phi(a(x))_i \geq \phi(a(y))_i$,
	which implies $\sum \Pi(x)_i \geq \sum \Pi(y)_i$ since $\Pi(x)_i = \exp(\phi(a(x))_i)$ and $\Pi(y)_i = \exp(\phi(a(y))_i)$.
	On the other hand, from the definition of $\Pi$, we have $\sum \Pi(x)_i = \sum \Pi(y)_i = 1$.
	Therefore, $\phi(a(x))_i = \phi(a(y))_i$ for every $i$, and hence $\psi(a(x))_i - \psi(a(y))_i = a(y)_i - a(x)_i \in [-M, -m]$.

	\paragraph{Case 3: $m \leq M < 0$.}
	Similar to Case 2, we have $\phi(a(x))_i \leq \phi(a(y))_i$ for every $i$, and hence $\psi(a(x))_i - \psi(a(y))_i = a(y)_i - a(x)_i \in [-M, -m]$.

	Combining the three cases completes the proof.
\end{proof}

\begin{lemma}[Birkhoff contraction~\citep{birkhoff1957extensions}]
	Let $A \in \mathbb{R}^{n\times m}_{++}$. Then
	\begin{align*}
		d_H(Ax,Ay) \leq \tau(A)\, d_H(x,y),
	\end{align*}
	where
	\begin{align*}
		\tau(A) = \tanh\left(\frac{\Delta(A)}{4}\right) < 1, \qquad
		\Delta(A) = \log \max_{i,j,k,l} \frac{A_{ik} A_{jl}}{A_{il} A_{jk}}.
	\end{align*}
\end{lemma}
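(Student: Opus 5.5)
The plan is the classical Birkhoff--Hopf argument. It reduces the contraction estimate to a one-variable extremal problem for a cross-ratio.

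\emph{Step 1 (setup).} Fix $x,y\in\R^{m}_{++}$ that are not proportional; otherwise both sides vanish. Put $M:=\max_k x_k/y_k$, $\underline m:=\min_k x_k/y_k$ and $D:=d_H(x,y)=\log(M/\underline m)>0$. Define
\begin{align*}
z:=x-\underline m\,y\ge 0, \qquad w:=My-x\ge 0 .
\end{align*}
A direct computation gives $(M-\underline m)x=Mz+\underline m w$ and $(M-\underline m)y=z+w$. Set $a:=Az$ and $b:=Aw$. Because $A$ is entrywise positive and $z,w$ are nonnegative and nonzero, both $a$ and $b$ lie in $\R^n_{++}$. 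Then
\begin{align*}
\frac{(Ax)_i}{(Ay)_i}=\frac{M t_i+\underline m}{t_i+1},\qquad t_i:=\frac{a_i}{b_i}.
\end{align*}

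\emph{Step 2 (diameter bound).} For nonnegative nonzero $z,w$ and any $i,j$,
\begin{align*}
\frac{t_i}{t_j}=\frac{\sum_{k,l}A_{ik}A_{jl}z_kw_l}{\sum_{k,l}A_{jk}A_{il}z_kw_l}.
\end{align*}
The right-hand side is a ratio of two positive combinations whose termwise ratios $\frac{A_{ik}A_{jl}}{A_{jk}A_{il}}$ are all at most $e^{\Delta(A)}$. Hence $t_i/t_j\le e^{\Delta}$ with $\Delta:=\Delta(A)$, i.e.\ the image cone has projective diameter at most $\Delta$.

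\emph{Step 3 (scalar extremal problem).} Using $\kappa:=M/\underline m=e^{D}$, write
\begin{align*}
d_H(Ax,Ay)=\max_{i,j}\log\frac{(\kappa t_i+1)(t_j+1)}{(t_i+1)(\kappa t_j+1)}.
\end{align*}
Put $t_i=e^{\delta}t_j$ with $\delta\le\Delta$ and $s:=t_j>0$. The expression is increasing in $\delta$, so I take $\delta=\Delta$. I then maximize over $s$; the calculus gives the optimum at $s=(\kappa e^{\Delta})^{-1/2}$. This yields
\begin{align*}
d_H(Ax,Ay)\le g(D):=2\log\frac{\sqrt{\kappa e^{\Delta}}+1}{\sqrt{\kappa}+\sqrt{e^{\Delta}}}.
\end{align*}

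\emph{Step 4 (main obstacle).} It remains to show $g(D)\le\tanh(\Delta/4)\,D$ for all $D\ge 0$. My plan is to check three facts: $g(0)=0$; $g'(D)=\tfrac12\bigl(\tanh\tfrac{D+\Delta}{4}-\tanh\tfrac{D-\Delta}{4}\bigr)$ after rewriting in hyperbolic form; and this derivative is maximized at $D=0$, where it equals $\tanh(\Delta/4)$. The third fact would follow since $\tanh$ is odd and concave on $[0,\infty)$, which makes the symmetric difference quotient decreasing in $D\ge0$. Then $g$ is concave with slope at the origin equal to $\tanh(\Delta/4)$, and the bound follows. Finally, $\tau(A)=\tanh(\Delta/4)<1$ because $\Delta<\infty$ for a positive matrix.

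The hardest part is this last hyperbolic-calculus identity and monotonicity; I would verify it by direct differentiation of the closed form in Step 3.
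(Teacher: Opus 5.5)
Your proposal is correct. The paper, however, gives no proof of this lemma: it is quoted from \citet{birkhoff1957extensions} and used as a black box in the contraction estimate for the composite map $T$. So your argument is a genuinely different route, since it makes the result self-contained. It is the classical elementary Birkhoff--Hopf computation, and each step checks out:
\begin{itemize}
\item the identities $(M-\underline m)x=Mz+\underline m w$ and $(M-\underline m)y=z+w$;
\item the bound $t_i/t_j\le e^{\Delta}$ from termwise comparison of the double sums;
\item the reduction to $\delta=\Delta$, which holds because $t\mapsto(\kappa t+1)/(t+1)$ is increasing for $\kappa>1$;
\item the maximizer $s=(\kappa e^{\Delta})^{-1/2}$;
\item the closed form $g(D)=2\log\frac{\cosh((D+\Delta)/4)}{\cosh((D-\Delta)/4)}$, whose derivative is the stated $\tanh$ difference with $g'(0)=\tanh(\Delta/4)$.
\end{itemize}

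Two places deserve one extra sentence each when you write this up.
\begin{itemize}
\item \emph{Global maximality in Step~3.} In $u=\log s$, the derivative of $\log\frac{\kappa e^{u}+1}{e^{u}+1}$ is $\sigma(u+D)-\sigma(u)$, with $\sigma$ the logistic function. This is symmetric and strictly unimodal about $u=-D/2$, so $u=-(D+\Delta)/2$ is the unique critical point. Moreover, the objective tends to $0$ as $s\to 0$ and as $s\to\infty$.
\item \emph{Monotonicity of $g'$ on $[0,\infty)$.} The cleanest route is $g''(D)=\tfrac18\bigl(\cosh^{-2}\tfrac{D+\Delta}{4}-\cosh^{-2}\tfrac{D-\Delta}{4}\bigr)\le 0$, since $|D+\Delta|\ge|D-\Delta|$. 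Your ``odd and concave'' argument also works. For $D<\Delta$, though, it needs the rewriting $\tanh\frac{\Delta+D}{4}+\tanh\frac{\Delta-D}{4}$ together with concavity of $a\mapsto\tanh a+\tanh(\Delta/2-a)$ on $[0,\Delta/2]$.
\end{itemize}

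As for what each approach buys: the citation is economical, and it also supplies Birkhoff's sharper statement that $\tanh(\Delta/4)$ \emph{equals} the supremum used to define $\tau$ in Theorem~\ref{thm:convergence}. Your argument gives only $\sup\le\tanh(\Delta/4)$. That is exactly what the lemma asserts, and it is all the convergence proof needs.
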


\begin{lemma}[Contraction of the composite map]\label{lem:map-contraction}
	Define
	\begin{align*}
		T(v) := \frac{\Pi\bigl(K^\top(\mu/(Kv))\bigr)}{K^\top(\mu/(Kv))}.
	\end{align*}
	Then for all $v,w\in\mathbb{R}_{++}^{k_v}$,
	\begin{align*}
		d_H(T(v),T(w)) \le \tau(K^\top)\tau(K)\,d_H(v,w).
	\end{align*}
\end{lemma}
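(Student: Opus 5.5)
We will write $T$ as a composition of four maps and bound the Hilbert-metric distortion of each factor. Namely,
\begin{align*}
	T = F \circ L_{K^\top} \circ R \circ L_K,
\end{align*}
where $L_K(v) := Kv$, $R(y) := \mu / y$ (componentwise), $L_{K^\top}(u) := K^\top u$, and $F(s) := \Pi(s)/s$ is the map of Lemma~\ref{lem:ratio-nonexpansive}. Each factor maps strictly positive vectors to strictly positive vectors. This holds because $K$ has strictly positive entries and $\mu_i>0$, so every intermediate quantity lies in the open positive orthant where $d_H$ is defined.

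The factors are handled in turn.
\begin{enumerate}
	\item By the Birkhoff contraction lemma, $d_H(Kv,Kw)\le \tau(K)\,d_H(v,w)$.
	\item The map $R$ is an isometry for $d_H$. Write $y=Kv$ and $z=Kw$. Then the ratio $(\mu_i/y_i)/(\mu_i/z_i)=z_i/y_i$ does not involve $\mu$. Inverting componentwise swaps the max and min of the ratios, and $\log(\max/\min)$ is unchanged. Hence $d_H(\mu/y,\mu/z)=d_H(y,z)$.
	\item Applying Birkhoff again gives $d_H(K^\top u,K^\top u')\le\tau(K^\top)\,d_H(u,u')$.
	\item Lemma~\ref{lem:ratio-nonexpansive} gives $d_H(F(s),F(s'))\le d_H(s,s')$. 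This lemma requires $m_j>0$, which is assumed.
\end{enumerate}
Chaining these four inequalities yields $d_H(T(v),T(w))\le \tau(K^\top)\tau(K)\,d_H(v,w)$.

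The substantive difficulty, nonexpansiveness of the clipped KL projection ratio, has already been settled in Lemma~\ref{lem:ratio-nonexpansive}. The one point needing care is the exact definition of $\tau$. The statement defines $\tau(A)$ as a supremum over pairs $x,y\in\mathbb{R}^{k_v}_{++}$, and the definition should be applied with the matching input dimension: $K$ acts on $\mathbb{R}^{k_v}$, while $K^\top$ acts on $\mathbb{R}^{k}$. We read $\tau(K)$ and $\tau(K^\top)$ as the best constants for those respective domains. With either the supremum form or the $\tanh$ form of Birkhoff's lemma, the inequality $d_H(Ax,Ay)\le\tau(A)\,d_H(x,y)$ holds directly.

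We also make sure the bound survives the degenerate case $d_H(v,w)=0$, that is, when $v$ and $w$ are proportional. In that case every factor preserves proportionality:
\begin{itemize}
	\item Linear maps preserve proportionality.
	\item $R$ maps $y$ and $cy$ to vectors that are proportional with factor $1/c$.
	\item $\Pi(cs)=\Pi(s)$, since scaling $s$ only shifts the multiplier $\theta$, and therefore $F(cs)=F(s)/c$.
\end{itemize}
So $T(w)$ is proportional to $T(v)$, both sides of the bound are zero, and the inequality holds trivially.
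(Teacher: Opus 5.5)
Your proof is correct and follows the paper's argument exactly. The paper likewise observes that $u(v)=\mu/(Kv)$ satisfies $d_H(u(v),u(w))=d_H(Kv,Kw)$, applies Birkhoff's contraction to $K$ and then $K^\top$, and finishes with Lemma~\ref{lem:ratio-nonexpansive}. Your extra remarks on the input dimension for $\tau(K^\top)$ and on proportional $v,w$ are harmless refinements of points the paper leaves implicit; the proportional case is already covered, since each inequality in the chain holds for every pair of positive vectors.
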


\begin{proof}
	Let $v,w \in \mathbb{R}_{++}^{k_v}$ and define $u(v):=\mu/(Kv)$.
	Since $\log u(v)=\log\mu-\log(Kv)$, we have
	\begin{align*}
		d_H(u(v),u(w)) = d_H(Kv,Kw).
	\end{align*}
	By the Birkhoff contraction lemma,
	\begin{align*}
		d_H(Kv,Kw) \le \tau(K) d_H(v,w),
		\qquad
		d_H(K^\top u(v),K^\top u(w)) \le \tau(K^\top) d_H(u(v),u(w)).
	\end{align*}
	By Lemma~\ref{lem:ratio-nonexpansive},
	\begin{align*}
		d_H\!\left(\frac{\Pi(K^\top u(v))}{K^\top u(v)},\frac{\Pi(K^\top u(w))}{K^\top u(w)}\right)
		\le d_H(K^\top u(v),K^\top u(w)).
	\end{align*}
	Combining the three inequalities yields the claim.
\end{proof}

\begin{lemma}[Hilbert-Lipschitz continuity of $\Pi$]\label{lem:pi-lipschitz}
	Under the assumptions of Lemma~\ref{lem:ratio-nonexpansive}, for all $x,y \in \mathbb{R}_{++}^{k_v}$,
	\begin{align*}
		d_H(\Pi(x),\Pi(y)) \le 2d_H(x,y).
	\end{align*}
\end{lemma}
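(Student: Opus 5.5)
We prove $d_H(\Pi(x),\Pi(y)) \le 2 d_H(x,y)$ by reusing the log-coordinate representation from the proof of Lemma~\ref{lem:ratio-nonexpansive}. There we wrote $\log \Pi(x)_i = \phi_i(a(x)_i)$, where $a(x)_i = \log x_i + \theta(x)$ and $\phi_i$ is the clip to $[\log \ell_i, \log u_i]$, which is nondecreasing and $1$-Lipschitz. Set $\delta_i := a(x)_i - a(y)_i$, with $m=\min_i \delta_i$ and $M=\max_i\delta_i$. Scale invariance of the Hilbert metric gives $d_H(x,y)=M-m$. Also $d_H(\Pi(x),\Pi(y)) = \max_i e_i - \min_i e_i$ with $e_i := \phi_i(a(x)_i)-\phi_i(a(y)_i)$. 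So it suffices to show that every $e_i$ lies in an interval of length at most $2(M-m)$.

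The key step is a pointwise sandwich for each $e_i$. Because $\phi_i$ is nondecreasing and $1$-Lipschitz, $e_i$ has the same sign as $\delta_i$ (or is zero) and satisfies $|e_i|\le|\delta_i|$. Hence $e_i \in [\min(0,\delta_i),\max(0,\delta_i)] \subseteq [\min(0,m),\max(0,M)]$. We then split into the same three cases as in Lemma~\ref{lem:ratio-nonexpansive}.

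\emph{Case $m\le 0\le M$.} Here every $e_i \in [m,M]$, so $\max_i e_i - \min_i e_i \le M-m$, which is even better than the factor $2$.

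\emph{Case $0<m$.} Every $e_i \ge 0$. The normalization $\sum_i \Pi(x)_i=\sum_i\Pi(y)_i=1$, together with monotonicity, forces $\Pi(x)_i\ge\Pi(y)_i$ for all $i$, and therefore all $e_i=0$. The distance is $0$.

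\emph{Case $M<0$.} This is symmetric to the previous case, and again every $e_i = 0$.

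So in fact the factor is $1$, and the stated bound with factor $2$ follows. The only point to check is that the representation $\log\Pi(x)_i = \clip(\log x_i+\theta(x),\cdot)$ is valid, which follows from Lemma~\ref{lem:kl-projection}. Non-uniqueness of $\theta$ causes no trouble, because $\Pi(x)$ itself is unique by strict convexity of the KL objective, so any admissible $\theta$ yields the same $\Pi(x)$.

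The main obstacle I anticipate is justifying the second case carefully. One point is that $\phi_i(a(x)_i)\ge\phi_i(a(y)_i)$ for all $i$ together with equal sums forces equality. The other is that this equality pins down $e_i$ exactly rather than merely bounding it. If that argument turns out to be delicate, a fallback gives the factor-$2$ bound directly. Write $\log\Pi(x)_i-\log\Pi(y)_i = (\log x_i-\log y_i) + (\psi_i(a(x)_i)-\psi_i(a(y)_i)) + (\theta(x)-\theta(y))$. The constant $\theta$ term cancels in the Hilbert metric. Lemma~\ref{lem:ratio-nonexpansive} bounds the oscillation of the $\psi$ term by $d_H(x,y)$, and the oscillation of the $\log x - \log y$ term is exactly $d_H(x,y)$. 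The triangle inequality for oscillations then gives $2d_H(x,y)$.
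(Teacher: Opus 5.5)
Your proposal is correct, but your main argument differs from the paper's. The paper's proof is your fallback, written multiplicatively rather than in log-coordinates. It writes $\Pi(x)=x\odot F(x)$ with $F(x)=\Pi(x)/x$ and uses $\max_i \frac{a_ib_i}{c_id_i}\le \bigl(\max_i\frac{a_i}{c_i}\bigr)\bigl(\max_i\frac{b_i}{d_i}\bigr)$, together with the analogous bound for the minimum, to get $d_H(a\odot b,c\odot d)\le d_H(a,c)+d_H(b,d)$. It then applies Lemma~\ref{lem:ratio-nonexpansive} to the $F$-term.

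Your primary route instead redoes the case analysis of Lemma~\ref{lem:ratio-nonexpansive} for the clip map $\phi_i$ itself, rather than for $\psi_i=\phi_i-\mathrm{id}$. Since each $e_i$ lies between $0$ and $\delta_i$, the straddling case gives $e_i\in[m,M]$. In the one-signed cases, all differences $\Pi(x)_i-\Pi(y)_i$ share a sign and sum to zero, so they vanish. That step is not delicate; it is the same argument as Case 2 of Lemma~\ref{lem:ratio-nonexpansive}.

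This yields the sharper statement $d_H(\Pi(x),\Pi(y))\le d_H(x,y)$, i.e., $\Pi$ is itself nonexpansive. Used in the proof of Theorem~\ref{thm:convergence}, it would improve the final bound to $\mathrm{KL}(q^{(t)}\|q^*)\le c^{t+1}d_H(v^{(0)},v^*)$.

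The paper's route buys modularity. It needs no second look at the structure of $\Pi$, only the already-proved bound on $F$ and a generic product inequality. The price is the factor $2$, which appears because the oscillations of $\log x-\log y$ and of the $\psi$-term are added. Since $\psi_i$ is nonincreasing, these two terms actually partially cancel coordinatewise, and your direct argument captures that cancellation.
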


\begin{proof}
	Let $F(x) := \Pi(x)/x$ as in Lemma~\ref{lem:ratio-nonexpansive}, so $\Pi(x)=x\odot F(x)$ componentwise.
	For $a,b,c,d\in\mathbb{R}_{++}^{k_v}$,
	\begin{align*}
		\max_i \frac{a_i b_i}{c_i d_i}
		 & \le \left(\max_i \frac{a_i}{c_i}\right)\left(\max_i \frac{b_i}{d_i}\right),
		\\
		\min_i \frac{a_i b_i}{c_i d_i}
		 & \ge \left(\min_i \frac{a_i}{c_i}\right)\left(\min_i \frac{b_i}{d_i}\right).
	\end{align*}
	Hence
	\begin{align*}
		d_H(a\odot b,c\odot d) \le d_H(a,c)+d_H(b,d).
	\end{align*}
	Applying this with $(a,b,c,d)=(x,F(x),y,F(y))$ and using Lemma~\ref{lem:ratio-nonexpansive},
	\begin{align*}
		d_H(\Pi(x),\Pi(y))
		 & = d_H(x\odot F(x),y\odot F(y)) \\
		 & \le d_H(x,y)+d_H(F(x),F(y))    \\
		 & \le 2d_H(x,y).
	\end{align*}
\end{proof}

\begin{proof}[Proof of Theorem~\ref{thm:convergence}]
	By Lemma~\ref{lem:map-contraction}, $T$ is a contraction in Hilbert projective metric with constant
	\begin{align*}
		c := \tau(K^\top)\tau(K) < 1.
	\end{align*}
	Hence $T$ has a unique fixed point in projective space (equivalently, unique up to positive scaling), and
	for any initial $v^{(0)}\in\mathbb{R}^{k_v}_{++}$,
	\begin{align*}
		d_H(v^{(t)},v^*) \le c^t d_H(v^{(0)},v^*).
	\end{align*}
	This proves linear convergence of $\{v^{(t)}\}$.

	It remains to show the KL bound for the output distributions.
	Let
	\begin{align*}
		s(v) := K^\top\!\left(\frac{\mu}{Kv}\right).
	\end{align*}
	By definition of $T$, for any fixed point $v^*$ we have
	\begin{align*}
		v^* = T(v^*) = \frac{\Pi(s^*)}{s^*},
		\qquad s^* := s(v^*),
	\end{align*}
	and with $q^* := \Pi(s^*)$ this gives $v^*=q^*/s^*$ componentwise.

	Using $d_H(\mu/(Kv),\mu/(Kw))=d_H(Kv,Kw)$ and Birkhoff contraction twice,
	\begin{align*}
		d_H(s^{(t)},s^*)
		 & = d_H\!\left(K^\top\!\left(\frac{\mu}{Kv^{(t)}}\right),K^\top\!\left(\frac{\mu}{Kv^*}\right)\right) \\
		 & \le c\, d_H(v^{(t)},v^*)
		\le c^{t+1} d_H(v^{(0)},v^*).
	\end{align*}
	Since $q^{(t)}=\Pi(s^{(t)})$, Lemma~\ref{lem:pi-lipschitz} yields
	\begin{align*}
		d_H(q^{(t)},q^*)
		 & = d_H(\Pi(s^{(t)}),\Pi(s^*)) \\
		 & \le 2d_H(s^{(t)},s^*)
		\le 2c^{t+1} d_H(v^{(0)},v^*).
	\end{align*}

	Finally, because $m_j>0$ and $q^{(t)},q^* \in Q_{m,\varepsilon}$, all components of $q^{(t)},q^*$ are strictly positive.
	Let $r_i := q_i^{(t)}/q_i^*$. Since both vectors are probability distributions,
	\begin{align*}
		\min_i r_i \le 1 \le \max_i r_i.
	\end{align*}
	Therefore,
	\begin{align*}
		\mathrm{KL}(q^{(t)}\|q^*)
		 & = \sum_i q_i^{(t)}\log r_i
		\le \log\!\left(\max_i r_i\right)
		\le \log\!\left(\frac{\max_i r_i}{\min_i r_i}\right)
		= d_H(q^{(t)},q^*),
	\end{align*}
	and the KL bound follows by combining with the previous inequality.
\end{proof}

\section{Additional Experiments} \label{app:additional-experiments}
\subsection{Experiments on Geometric Data}
Here, we present additional experimental results on Gowalla location dataset~\citep{cho2011friendship}.
In the Gowalla dataset, each user has a set of check-in locations, which we regard as a private distribution.
We consider a scenario where each user privately releases a check-in location and a central server aggregates the released locations to estimate the overall distribution of check-ins.
We use check-in data of 500 users in Austin, Texas and
partitioned the area into a $20 \times 20$ grid and adopted the Euclidean distance as a metric.
Then, we evaluate the performance by the Wasserstein distance ($p=1$) between the true distribution of check-ins and the estimated distribution from the released locations.
For WPM, we used, as the base measure, the one that minimizes the worst-case cost among uniform measures.
We report the mean and standard deviation over 5 runs with different choice of check-ins.

Fig.~\ref{fig:gowalla} illustrates the comparison of the proposed WPM with KPM.
We see that WPM consistently outperforms KPM across different privacy budgets.
In addition, we also provide examples of the estimated distribution of check-ins for $\varepsilon=4$ in Fig.~\ref{fig:gowalla-distribution}.
We observe that WPM captures the true distribution more accurately than KPM, which demonstrates the advantage of WPM in preserving the geometric structure of the data.

\begin{figure}[t]
	\centering
	\begin{minipage}{0.3\textwidth}
		\centering
		\includegraphics[width=0.99\textwidth]{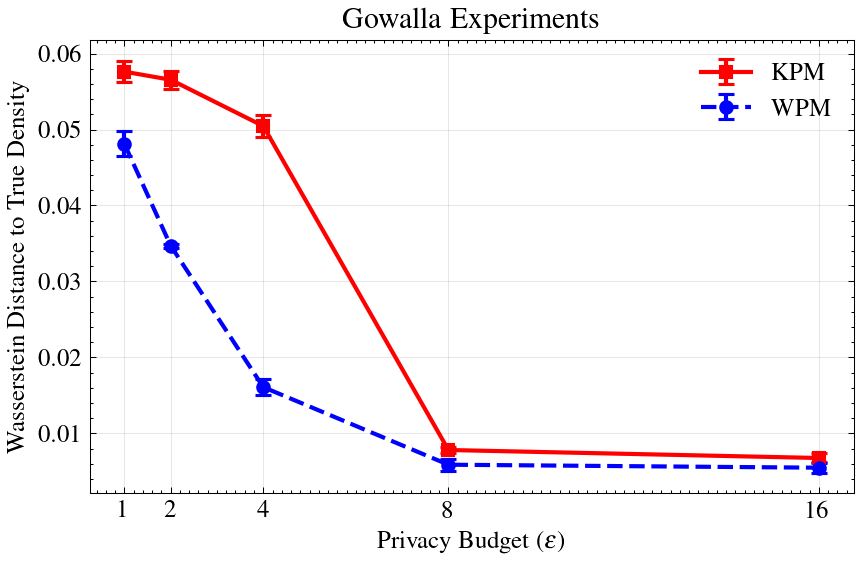}
		\caption{Comparison of WPM and KPM on Gowalla dataset.}
		\label{fig:gowalla}
	\end{minipage}
	\hfill
	\begin{minipage}{0.68\textwidth}
		\centering
		\includegraphics[width=0.99\textwidth]{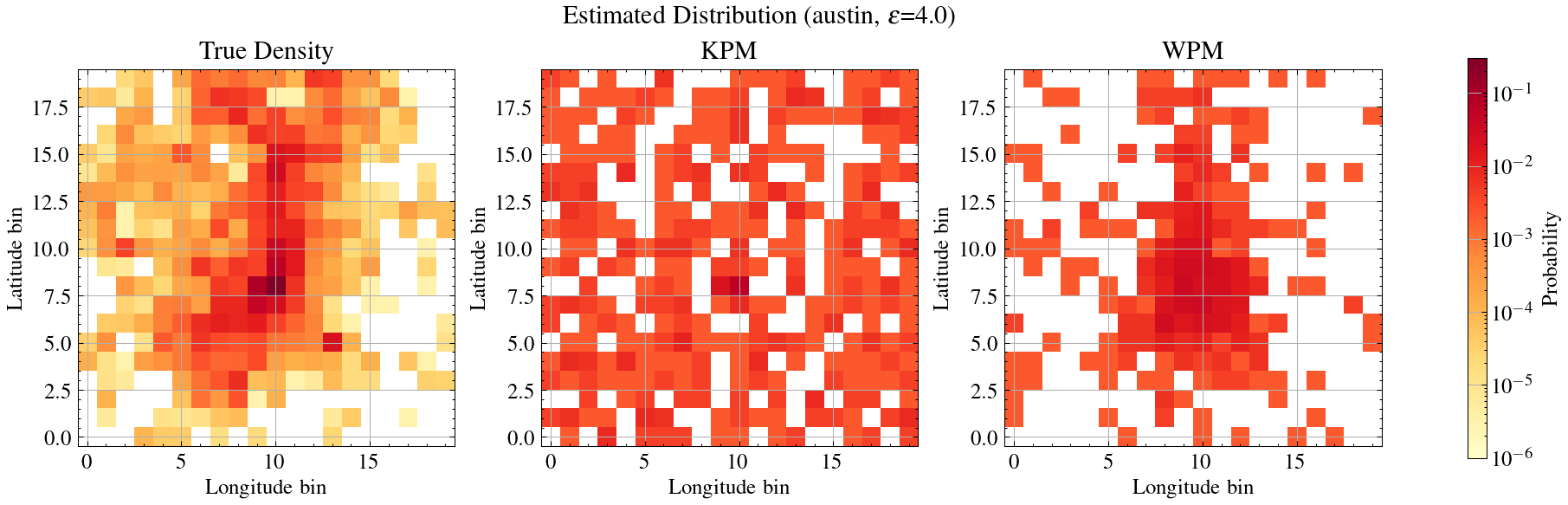}
		\caption{Estimated distribution of check-ins for $\varepsilon=4$. The true distribution is shown in the leftmost panel, and the distributions estimated by WPM and KPM are shown in the middle and right panels, respectively.}
		\label{fig:gowalla-distribution}
	\end{minipage}
\end{figure}

\section{Experimental Details} \label{app:experimental-details}
\paragraph{Datasets}
Movielens-100K~\citep{harper2015movielens} contains 100,000 ratings from 1000 users on 1700 movies,
which is distributed under a usage license that permits research use.
We randomly divided the movies into 80\% training and 20\% test sets:
the training movies are used to construct each user's input distribution
(proportional to ratings), while the test movies form the candidate set
for recommendation.
Users are split into 80\% validation and 20\% test sets;
the validation users are used for hyperparameter tuning,
and the test users for final evaluation.
Gowalla dataset~\citep{cho2011friendship} contains 6,442,892 check-ins from 196,591 users at 1,280,969 locations,
which is distributed for academic research use. We used the check-in data in Austin, Texas.

\paragraph{Computational environment}
Our experiments were conducted on Intel(R) Xeon(R) Silver 4316 CPU @ 2.30GHz and 8 NVIDIA A100-SXM4-80GB GPUs.

\paragraph{Detailed setup for MovieLens experiments}
We computed the embeddings of movies using the top 64 principal components of the user-movie rating matrix on validation users.
We use top 250 movies (based on the validation set of users) as the candidate set for recommendation.
The privacy budget $\varepsilon$ is swept over $\{0.2, 0.5, 1.0, 2.0, 3.0, 4.0\}$.
The scale of the base measure for WPM and KPM is swept over the geometric grid in the feasible range $[e^{-\varepsilon/2} / k, e^{\varepsilon/2} / k]$.
We set $\lambda = 0.01$ and the number of iterations to $40$ for WPM, which is sufficient for convergence in our experiments.

\paragraph{Detailed setup for Gowalla experiments}
We use check-in data in Austin, Texas and drop users with fewer than 10 check-ins.
For WPM, we use the optimal base measure among uniform measures and set $\lambda = 0.01$ and the number of iterations to $40$,
which is sufficient for convergence in our experiments.
\section{Broader Impacts}\label{app:broader-impacts}
This work aims to develop sampling mechanisms with formal privacy guarantees, and thus has the potential to enhance privacy in various applications.
However, as with any privacy-preserving technology, there is a risk of misuse or unintended consequences.
For example, if the privacy parameters are not chosen carefully, it could lead to either insufficient privacy protection or excessive noise that degrades utility.


\end{document}